\def\eqref#1{equation~\ref{#1}}
\def\Eqref#1{Equation~\ref{#1}}
\def\1{\bm{1}}
\DeclareMathAlphabet{\mathsfit}{\encodingdefault}{\sfdefault}{m}{sl}
\SetMathAlphabet{\mathsfit}{bold}{\encodingdefault}{\sfdefault}{bx}{n}
\title{Estimating Gradients for Discrete Random Variables by Sampling without Replacement}
\author{Wouter Kool \\
  University of Amsterdam \\
  ORTEC \\
  \texttt{w.w.m.kool@uva.nl} \\
  \And
  Herke van Hoof \\
  University of Amsterdam\\
  \texttt{h.c.vanhoof@uva.nl} \\
  \And
  Max Welling \\
  University of Amsterdam \\
  CIFAR \\
  \texttt{m.welling@uva.nl} \\
}
\begin{document}

\newtheorem{theorem}{Theorem}
\newtheorem{lemma}{Lemma}
\newtheorem{corollary}{Corollary}

\theoremstyle{definition}
\newtheorem{definition}{Definition}

\maketitle

\begin{abstract}
We derive an unbiased estimator for expectations over discrete random variables based on sampling \emph{without replacement}, which reduces variance as it avoids duplicate samples. We show that our estimator can be derived as the Rao-Blackwellization of three different estimators. Combining our estimator with REINFORCE, we obtain a policy gradient estimator and we reduce its variance using a built-in control variate which is obtained without additional model evaluations. The resulting estimator is closely related to other gradient estimators. Experiments with a toy problem, a categorical Variational Auto-Encoder and a structured prediction problem show that our estimator is the only estimator that is consistently among the best estimators in both high and low entropy settings.
\end{abstract}

\section{Introduction}
Put replacement in your basement! We derive the \emph{unordered set estimator}\footnote{Code available at \scriptsize \url{https://github.com/wouterkool/estimating-gradients-without-replacement}.}: an unbiased (gradient) estimator for expectations over discrete random variables based on (unordered sets of) samples \emph{without replacement}. In particular, we consider the problem of estimating (the gradient of) the expectation of $f(\bm{x})$ where $\bm{x}$ has a discrete distribution $p$ over the domain $D$, i.e.
\begin{equation}
\label{eq:expectation}
    \mathbb{E}_{\bm{x} \sim p(\bm{x})}[f(\bm{x})] = \sum\nolimits_{\bm{x} \in D} p(\bm{x}) f(\bm{x}).
\end{equation}
This expectation comes up in reinforcement learning, discrete latent variable modelling (e.g.\ for compression), structured prediction (e.g.\ for translation), hard attention and many other tasks that use models with discrete operations in their computational graphs (see e.g.\ \citet{jang2016categorical}). In general, $\bm{x}$ has structure (such as a sequence), but we can treat it as a `flat' distribution, omitting the bold notation, so $x$ has a categorical distribution over $D$ given by $p(x), x \in D$. Typically, the distribution has parameters $\bm{\theta}$, which are learnt through gradient descent. This requires estimating the gradient $\nabla_{\bm{\theta}} \mathbb{E}_{x \sim p_{\bm{\theta}}(x)}[f(x)]$, using a set of samples $S$. A gradient estimate $e(S)$ is unbiased if 
\begin{equation}
    \mathbb{E}_S[e(S)] = \nabla_{\bm{\theta}} \mathbb{E}_{x \sim p_{\bm{\theta}}(x)}[f(x)].
\end{equation}
The samples $S$ can be sampled independently or using alternatives such as stratified sampling which reduce variance to increase the speed of learning. In this paper, we derive an unbiased gradient estimator that reduces variance by avoiding duplicate samples, i.e.\ by sampling $S$ without replacement. This is challenging as samples without replacement are dependent and have marginal distributions that are different from $p(x)$. We further reduce the variance by deriving a built-in control variate, which maintains the unbiasedness and does not require additional samples.

\paragraph{Related work.}
Many algorithms for estimating gradients for discrete distributions have been proposed. A general and widely used estimator is REINFORCE 
\citep{williams1992simple}.
Biased gradients based on a continuous relaxations of the discrete distribution (known as Gumbel-Softmax or Concrete) were jointly introduced by \citet{jang2016categorical} and \citet{maddison2016concrete}. These can be combined with the straight through estimator \citep{bengio2013estimating} if the model requires discrete samples or be used to construct control variates for REINFORCE, as in REBAR \citep{tucker2017rebar} or RELAX \citep{grathwohl2017backpropagation}.
Many other methods use control variates and other techniques to reduce the variance of REINFORCE
\citep{paisley2012variational, ranganath2014black,gregor2014deep,mnih2014neural,gu2015muprop,mnih2016variational}.

Some works rely on explicit summation of the expectation, either for the marginal distribution \citep{titsias2015local} or globally summing some categories while sampling from the remainder \citep{liang2018memory,liu2019rao}. Other approaches use a finite difference approximation to the gradient \citep{lorberbom2018direct,lorberbom2019direct}. \citet{yin2019arsm} introduced ARSM, which uses multiple model evaluations where the number adapts automatically to the uncertainty.

In the structured prediction setting, there are many  algorithms for optimizing a quantity under a sequence of discrete decisions, using (weak) supervision, multiple samples (or deterministic model evaluations), or a combination both \citep{ranzato2016sequence, shen2016minimum,he2016dual,norouzi2016reward,bahdanau2017actor,edunov2018classical,leblond2018searnn,negrinho2018learning}. Most of these algorithms are biased and rely on pretraining using maximum likelihood or gradually transitioning from supervised to reinforcement learning. Using Gumbel-Softmax based approaches in a sequential setting is difficult as the bias accumulates because of mixing errors \citep{gu2017neural}. 

\section{Preliminaries}
Throughout this paper, we will denote with $B^k$ an \emph{ordered} sample without replacement of size $k$ and with $S^k$ an \emph{unordered} sample (of size $k$) from the categorical distribution $p$.

\paragraph{Restricted distribution.}
When sampling without replacement, we remove the set $C \subset D$ already sampled from the domain and we denote with $p^{D \setminus C}$ the distribution \emph{restricted to the domain $D \setminus C$}:
\begin{equation}
    p^{D \setminus C}(x) = \frac{p(x)}{1 - \sum_{c \in C} p(c)}, \quad x \in D \setminus C.
\end{equation}

\paragraph{Ordered sample without replacement $B^k$.}
Let $B^k = (b_1, ..., b_k), b_i \in D$ be an \emph{ordered sample without replacement}, which is generated from the distribution $p$ as follows: first, sample $b_1 \sim p$, then sample $b_2 \sim p^{D\setminus \{b_1\}}$, $b_3 \sim p^{D\setminus \{b_1, b_2\}}$, etc.\ i.e.\ elements are sampled one by one without replacement. Using this procedure, $B^k$ can be seen as a (partial) ranking according to the Plackett-Luce model \citep{plackett1975analysis, luce1959individual} and the probability of obtaining the vector $B^k$ is
\begin{equation}
\label{eq:p_B_k}
    p(B^k) = \prod_{i=1}^k p^{D \setminus B^{i-1}}(b_i) = \prod_{i=1}^k \frac{p(b_i)}{1 - \sum\limits_{j<i} p(b_j)}.
\end{equation}
We can also restrict $B^k$ to the domain $D \setminus C$, which means that $b_i \not \in C$ for $i = 1, ..., k$:
\begin{equation}
\label{eq:p_B_k_restr}
    p^{D \setminus C}(B^k) = \prod_{i=1}^k \frac{p^{D \setminus C}(b_i)}{1 - \sum\limits_{j<i} p^{D \setminus C}(b_j)} 
    = \prod_{i=1}^k \frac{p(b_i)}{1 - \sum\limits_{c \in C} p(c) - \sum\limits_{j<i} p(b_j)}.
\end{equation}

\paragraph{Unordered sample without replacement.}
Let $S^k \subseteq D$ be an \emph{unordered} sample without replacement from the distribution $p$, which can be generated simply by generating an ordered sample and discarding the order. We denote elements in the sample with $s \in S^k$ (so without index) and we write $\mathcal{B}(S^k)$ as the set of all $k!$ permutations (orderings) $B^k$ that correspond to (could have generated) $S^k$. It follows that the probability for sampling $S^k$ is given by:
\begin{equation}
\label{eq:P_S_def}
     p(S^k) = \hspace*{-0.3cm} \sum_{B^k \in \mathcal{B}(S^k)} \hspace*{-0.3cm} p(B^k) = \hspace*{-0.3cm} \sum_{B^k \in \mathcal{B}(S^k)} \prod_{i=1}^k \frac{p(b_i)}{1 - \sum\limits_{j<i} p(b_j)} = \left(\prod_{s \in S^k} p(s)\right) \cdot  \hspace*{-0.2cm} \sum_{B^k \in \mathcal{B}(S^k)} \prod_{i=1}^k \frac{1}{1 - \sum\limits_{j<i} p(b_j)}.
\end{equation}
The last step follows since $B^k \in \mathcal{B}(S^k)$ is an ordering of $S^k$, such that $\prod_{i=1}^k p(b_i) = \prod_{s \in S} p(s)$. Naive computation of $p(S^k)$ is $O(k!)$, but in Appendix \ref{sec:computation_P_S} we show how to compute it efficiently.

When sampling from the distribution restricted to $D \setminus C$, we sample $S^k \subseteq D \setminus C$ with probability:
\begin{equation}
    \label{eq:P_S_restr_def}
    p^{D \setminus C}(S^k) = \left(\prod_{s \in S^k} p(s)\right) \cdot \sum_{B^k \in \mathcal{B}(S^k)} \prod_{i=1}^k \frac{1}{1 - \sum\limits_{c \in C} p(c) - \sum\limits_{j<i} p(b_j)}.
\end{equation}

\paragraph{The Gumbel-Top-$k$ trick.}
As an alternative to sequential sampling, we can also sample $B^k$ and $S^k$ by taking the top $k$ of Gumbel variables \citep{yellott1977relationship,vieira2014gumbel,kim2016exact}. Following notation from \citet{kool2019stochastic}, we define the \emph{perturbed log-probability} $g_{\phi_i} = \phi_i + g_i$, where $\phi_i = \log p(i)$ and $g_i \sim \text{Gumbel}(0)$. Then let $b_1 = \arg\max_{i \in D} g_{\phi_i}$, $b_2 = \arg\max_{i \in D \setminus \{b_1\}} g_{\phi_i}$, etc., so $B^k$ is the top $k$ of the perturbed log-probabilities \emph{in decreasing order}. The probability of obtaining $B_k$ using this procedure is given by \eqref{eq:p_B_k}, so this provides an alternative sampling method which is effectively a (non-differentiable) reparameterization of sampling without replacement. For a differentiable reparameterization, see \citet{grover2018stochastic}.

It follows that taking the top $k$ perturbed log-probabilities \emph{without order}, we obtain the unordered sample set $S^k$. 
This way of sampling underlies the efficient computation of $p(S^k)$ in Appendix \ref{sec:computation_P_S}.

\section{Methodology}
In this section, we derive the 
\emph{unordered set policy gradient estimator}: a low-variance, unbiased estimator of $\nabla_{\bm{\theta}} \mathbb{E}_{p_{\bm{\theta}}(x)}[f(x)]$ based on an unordered sample without replacement $S^k$. First, we derive the generic (non-gradient) estimator for $\mathbb{E}[f(x)]$ as the Rao-Blackwellized version of a single sample Monte Carlo estimator (and two other estimators!). Then we combine this estimator with REINFORCE \citep{williams1992simple} and we show how to reduce its variance using a built-in baseline.

\subsection{Rao-Blackwellization of the single sample estimator}
\label{sec:ss_estimator}
A very crude but simple estimator for $\mathbb{E}[f(x)]$ based on the \emph{ordered} sample $B^k$ is to \emph{only} use the first element $b_1$, which by definition is a sample from the distribution $p$.
We define this estimator as the \emph{single sample estimator}, which is unbiased, since
\begin{equation}
\label{eq:ss_unbiased}
    \mathbb{E}_{B^k \sim p(B^k)}[f(b_1)] = \mathbb{E}_{b_1 \sim p(b_1)}[f(b_1)] = \mathbb{E}_{x \sim p(x)}[f(x)].
\end{equation}

Discarding all but one sample, the single sample estimator is inefficient, but we can use Rao-Blackwellization \citep{casella1996rao} to signficantly improve it.
To this end, we consider the distribution $B^k|S^k$, which is, knowing the unordered sample $S^k$, the conditional distribution over ordered samples $B^k \in \mathcal{B}(S^k)$ that could have generated $S^k$.\footnote{Note that $B^k|S^k$ is \emph{not} a Plackett-Luce distribution restricted to $S^k$!}
Using $B^k|S^k$, we rewrite $\mathbb{E}[f(b_1)]$ as
\begin{equation*}
    \mathbb{E}_{B^k \sim p(B^k)}[f(b_1)]
    = \mathbb{E}_{S^k \sim p(S^k)} \left[ \mathbb{E}_{B^k \sim p(B^k|S^k)} \left[ f(b_1) \right] \right] = \mathbb{E}_{S^k \sim p(S^k)} \left[ \mathbb{E}_{b_1 \sim p(b_1|S^k)} \left[ f(b_1) \right] \right].
\end{equation*}

The Rao-Blackwellized version of the single sample estimator computes the inner conditional expectation exactly.
Since $B^k$ is an ordering of $S^k$, we have $b_1 \in S^k$ and we can compute this as
\begin{equation}
\label{eq:E_b_1_cond_S_k}
    \mathbb{E}_{b_1 \sim p(b_1|S^k)} \left[ f(b_1) \right] = \sum_{s \in S^k} P(b_1 = s|S^k) f(s)
\end{equation}
where, in a slight abuse of notation, $P(b_1 = s|S^k)$ is the probability that the first sampled element $b_1$ takes the value $s$, given that the complete set of $k$ samples is $S^k$. Using Bayes' Theorem we find
\begin{equation}
\label{eq:P_b_1_cond_S_k}
    P(b_1 = s|S^k) = \frac{p(S^k|b_1 = s)P(b_1 = s)}{p(S^k)} = \frac{p^{D \setminus \{s\}}(S^k \setminus \{s\}) p(s)}{p(S^k)}.
\end{equation}
The step $p(S^k|b_1 = s) = p^{D \setminus \{s\}}(S^k \setminus \{s\})$ comes from analyzing sequential sampling without replacement: given that the first element sampled is $s$, the remaining elements have a distribution restricted to $D \setminus \{s\}$, so sampling $S^k$ (including $s$) given the first element $s$ is equivalent to sampling the remainder $S^k \setminus \{s\}$ from the restricted distribution, which has probability $p^{D \setminus \{s\}}(S^k \setminus \{s\})$ (see \eqref{eq:P_S_restr_def}).

\paragraph{The unordered set estimator.}
For notational convenience, we introduce the \emph{leave-one-out ratio}.
\begin{definition}
The \emph{leave-one-out ratio} of $s$ w.r.t.\ the set $S$ is given by $R(S^k, s) = \frac{p^{D \setminus \{s\}}(S^k \setminus \{s\})}{p(S^k)}$.
\end{definition}
Rewriting \eqref{eq:P_b_1_cond_S_k} as $P(b_1 = s|S^k) = p(s) R(S^k, s)$ shows that the probability of sampling $s$ first, given $S^k$, is simply the unconditional probability multiplied by the leave-one-out ratio. We now define the unordered set estimator as the Rao-Blackwellized version of the single-sample estimator.
\begin{theorem}
\label{thm:unordered_set_estimator_rao}
The \emph{unordered set estimator}, given by
\begin{equation}
\label{eq:unord_set_estimator}
    e^{\text{US}}(S^k)
    = \sum_{s \in S^k} p(s) R(S^k, s) f(s)
\end{equation}
is the Rao-Blackwellized version of the (unbiased!) single sample estimator.
\end{theorem}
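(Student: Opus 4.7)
The plan is to recognize that the theorem is essentially a formalization of the chain of identities derived in the preceding discussion, and to organize them into a proof of the two claims implicit in the statement: first, that $e^{\text{US}}(S^k)$ coincides with the conditional expectation $\mathbb{E}[f(b_1) \mid S^k]$, so it is the Rao-Blackwellization of $f(b_1)$ given the sufficient statistic $S^k$; second, that the single sample estimator is itself unbiased, so the Rao-Blackwellized version inherits unbiasedness via the tower property.

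For the first claim, I would start from the definition of conditional expectation. Since $b_1 \in S^k$ with probability one, the support of $b_1$ given $S^k$ is contained in $S^k$, so
\begin{equation*}
    \mathbb{E}[f(b_1) \mid S^k] = \sum_{s \in S^k} P(b_1 = s \mid S^k)\, f(s).
\end{equation*}
I would then substitute the expression $P(b_1 = s \mid S^k) = p(s)\, R(S^k, s)$ already derived via Bayes' theorem in \eqref{eq:P_b_1_cond_S_k}, using the key structural identity $p(S^k \mid b_1 = s) = p^{D \setminus \{s\}}(S^k \setminus \{s\})$, which reflects that conditional on drawing $b_1 = s$ first, sampling the remainder of $S^k$ is equivalent to drawing an unordered sample of size $k-1$ without replacement from the restricted domain $D \setminus \{s\}$. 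Collecting the factors yields exactly the formula \eqref{eq:unord_set_estimator}, so $e^{\text{US}}(S^k) = \mathbb{E}[f(b_1) \mid S^k]$.

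For the second claim, unbiasedness is immediate from the tower property applied to the already-established identity \eqref{eq:ss_unbiased}:
\begin{equation*}
    \mathbb{E}_{S^k}\bigl[e^{\text{US}}(S^k)\bigr] = \mathbb{E}_{S^k}\bigl[\mathbb{E}[f(b_1) \mid S^k]\bigr] = \mathbb{E}_{B^k}[f(b_1)] = \mathbb{E}_{x \sim p}[f(x)].
\end{equation*}

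Since $S^k$ is a deterministic function of $B^k$ (discard the order), Rao-Blackwellization with respect to $S^k$ is legitimate in the classical sense and can only reduce variance. The core computational step — the Bayes' theorem manipulation together with the sequential-sampling identity for $p(S^k \mid b_1 = s)$ — has already been carried out in the text preceding the theorem, so I do not anticipate a substantive obstacle. The proof is essentially an organizational step that packages those identities into a clean statement; the only subtlety worth flagging explicitly is that $S^k$ is a function of $B^k$, so that the conditional expectation $\mathbb{E}[f(b_1) \mid S^k]$ really is the Rao-Blackwellized version of $f(b_1)$ rather than a projection onto an arbitrary $\sigma$-algebra.
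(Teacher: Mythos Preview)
Your proposal is correct and follows essentially the same approach as the paper: substitute the identity $P(b_1 = s \mid S^k) = p(s) R(S^k, s)$ from \eqref{eq:P_b_1_cond_S_k} into the conditional-expectation formula \eqref{eq:E_b_1_cond_S_k} to obtain \eqref{eq:unord_set_estimator}. Your additional remarks on unbiasedness via the tower property and on $S^k$ being a deterministic function of $B^k$ are sound elaborations that the paper leaves implicit.
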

\begin{proof} Using $P(b_1 = s|S^k) = p(s) R(S^k, s)$ in \eqref{eq:E_b_1_cond_S_k} we have 
\begin{equation}
    \mathbb{E}_{b_1 \sim p(b_1|S^k)} \left[ f(b_1) \right] = \sum_{s \in S^k} P(b_1 = s|S^k) f(s) = \sum_{s \in S^k} p(s) R(S^k, s) f(s).
\end{equation}%
\end{proof}
The implication of this theorem is that the unordered set estimator, in explicit form given by \eqref{eq:unord_set_estimator}, is an unbiased estimator of $\mathbb{E}[f(x)]$ since it is the Rao-Blackwellized version of the unbiased single sample estimator. Also, as expected by taking multiple samples, it has variance equal or lower than the single sample estimator by the Rao-Blackwell Theorem \citep{lehmann1950completeness}.

\subsection{Rao-Blackwellization of other estimators}
\label{sec:rao_bw_other}
The unordered set estimator is also the result of Rao-Blackwellizing two other unbiased estimators: the \emph{stochastic sum-and-sample} estimator and the \emph{importance-weighted estimator}.

\paragraph{The sum-and-sample estimator.}
We define as \emph{sum-and-sample estimator} any estimator that relies on the identity that for any $C \subset D$
\begin{equation}
    \label{eq:sas_unbiased}
    \mathbb{E}_{x \sim p(x)}[f(x)] = \mathbb{E}_{x \sim p^{D \setminus C}(x)}\left[ \sum_{c \in C} p(c) f(c) + \left(1 - \sum_{c \in C} p(c) \right) f(x)\right].
\end{equation}
For the derivation, see Appendix \ref{app:proof_sas_unbiased} or \citet{liang2018memory,liu2019rao}. In general, a sum-and-sample estimator with a budget of $k > 1$ evaluations sums expectation terms for a set of categories $C$ (s.t. $|C| < k$) explicitly (e.g.\ selected by their value $f$ \citep{liang2018memory} or probability $p$ \citep{liu2019rao}), and uses $k - |C|$ (down-weighted) samples from $D \setminus C$ to estimate the remaining terms. As is noted by \citet{liu2019rao}, selecting $C$ such that $\frac{1 - \sum_{c \in C} p(c)}{k - |C|}$ is minimized guarantees to reduce variance compared to a standard minibatch of $k$ samples (which is equivalent to setting $C = \emptyset$).
See also \citet{fearnhead2003line} for a discussion on selecting $C$ optimally.
The ability to optimize $C$ depends on whether $p(c)$ can be computed efficiently a-priori (before sampling). This is difficult in high-dimensional settings, e.g.\ sequence models which compute the probability incrementally while ancestral sampling. An alternative is to select $C$ stochastically (as \eqref{eq:sas_unbiased} holds for any $C$), and we choose $C = B^{k-1}$ to define the \emph{stochastic sum-and-sample} estimator:
\begin{equation}
\label{eq:ssas_estimator}
    e^{\text{SSAS}}(B^k) = \sum_{j = 1}^{k-1} p(b_j) f(b_j) + \left(1 - \sum_{j=1}^{k-1} p(b_j)\right) f(b_k).
\end{equation}
For simplicity, we consider the version that sums $k - 1$ terms here, but the following results also hold for a version that sums $k - m$ terms and uses $m$ samples (without replacement) (see Appendix \ref{app:proof_sas_multi}). Sampling without replacement, it holds that $b_k|B^{k-1} \sim p^{D \setminus B^{k-1}}$, so the unbiasedness follows from \eqref{eq:sas_unbiased} by separating the expectation over $B^k$ into expectations over  $B^{k-1}$ and $b_k|B^{k-1}$:
\begin{equation*}
    \mathbb{E}_{B^{k-1} \sim p(B^{k-1})} \left[ \mathbb{E}_{b_k \sim p(b_k|B^{k-1})} \left[ e^{\text{SSAS}}(B^k) \right] \right] \\
    = \mathbb{E}_{B^{k-1} \sim p(B^{k-1})} \left[ \mathbb{E}[f(x)] \right] \\
    = \mathbb{E}[f(x)]. \\
\end{equation*}
In general, a sum-and-sample estimator reduces variance if the probability mass is concentrated on the summed categories. As typically high probability categories are sampled first, the stochastic sum-and-sample estimator sums high probability categories, similar to the estimator by \citet{liu2019rao} which we refer to as the \emph{deterministic sum-and-sample estimator}. As we show in Appendix \ref{app:proof_sas_rao}, Rao-Blackwellizing the stochastic sum-and-sample estimator also results in the unordered set estimator. This even holds for a version that uses $m$ samples and $k - m$ summed terms (see Appendix \ref{app:proof_sas_multi}), which means that the unordered set estimator has equal or lower variance than the optimal (in terms of $m$) stochastic sum-and-sample estimator, but conveniently does \emph{not} need to choose $m$.

\paragraph{The importance-weighted estimator.}
The importance-weighted estimator \citep{vieira2017estimating} is
\begin{equation}
\label{eq:iw_estimator}
    e^{\text{IW}}(S^k, \kappa) = \sum_{s \in S^k} \frac{p(s)}{q(s,\kappa)} f(s).
\end{equation}
This estimator is based on the idea of priority sampling \citep{duffield2007priority}. It does \emph{not} use the order of the sample, but assumes sampling using the Gumbel-Top-$k$ trick and requires access to $\kappa$, the $(k+1)$-th largest perturbed log-probability, which can be seen as the `threshold' since $g_{\phi_s} > \kappa \; \forall s \in S^k$.
$q(s, a) = P(g_{\phi_s} > a)$ can be interpreted as the \emph{inclusion probability} of $s \in S^k$ (assuming a fixed threshold $a$ instead of a fixed sample size $k$). For details and a proof of unbiasedness, see \citet{vieira2017estimating} or \citet{kool2019stochastic}.
As the estimator has high variance, \citet{kool2019stochastic} resort to \emph{normalizing} the importance weights, resulting in biased estimates. Instead, we use Rao-Blackwellization to eliminate stochasticity by $\kappa$. Again, the result is the unordered set estimator (see Appendix \ref{app:proof_iw_rao}), which thus has equal or lower variance.

\subsection{The unordered set policy gradient estimator}
Writing $p_{\bm{\theta}}$ to indicate the dependency on the model parameters $\bm{\theta}$, we can combine the unordered set estimator with REINFORCE \citep{williams1992simple} to obtain the \emph{unordered set policy gradient} estimator.
\begin{corollary}
\label{thm:us_unbiased}
The \emph{unordered set policy gradient estimator}, given by
\begin{equation}
\label{eq:unord_set_pg_estimator}
    e^{\text{USPG}}(S^k) = \sum_{s \in S^k} p_{\bm{\theta}}(s) R(S^k, s) \nabla_{\bm{\theta}} \log p_{\bm{\theta}}(s) f(s) = \sum_{s \in S^k} \nabla_{\bm{\theta}} p_{\bm{\theta}}(s) R(S^k, s) f(s),
\end{equation}
is an unbiased estimate of the policy gradient.
\end{corollary}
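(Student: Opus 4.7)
The plan is to reduce this to a direct application of Theorem~\ref{thm:unordered_set_estimator_rao} preceded by the standard REINFORCE (log-derivative) identity, and then establish the equality of the two displayed forms by the identity $p_{\bm{\theta}}(s)\nabla_{\bm{\theta}}\log p_{\bm{\theta}}(s) = \nabla_{\bm{\theta}} p_{\bm{\theta}}(s)$.

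First, I would invoke REINFORCE to move the gradient inside the expectation: since $p_{\bm{\theta}}$ is a proper distribution, $\nabla_{\bm{\theta}} \mathbb{E}_{x\sim p_{\bm{\theta}}(x)}[f(x)] = \mathbb{E}_{x\sim p_{\bm{\theta}}(x)}\bigl[f(x)\nabla_{\bm{\theta}}\log p_{\bm{\theta}}(x)\bigr]$. This reduces the problem to constructing an unbiased estimator of $\mathbb{E}_{x\sim p_{\bm{\theta}}(x)}[h(x)]$ with the particular choice $h(x) = f(x)\nabla_{\bm{\theta}}\log p_{\bm{\theta}}(x)$. Note that the gradient and logarithm here act pointwise in $x$, so $h$ is simply a (vector-valued) function on $D$, and nothing about the argument in Theorem~\ref{thm:unordered_set_estimator_rao} depended on $f$ being scalar or independent of $\bm{\theta}$.

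Next, I would apply Theorem~\ref{thm:unordered_set_estimator_rao} coordinate-wise to $h$. This immediately yields that
\begin{equation*}
    \sum_{s \in S^k} p_{\bm{\theta}}(s)\, R(S^k, s)\, h(s) \;=\; \sum_{s \in S^k} p_{\bm{\theta}}(s)\, R(S^k, s)\, \nabla_{\bm{\theta}}\log p_{\bm{\theta}}(s)\, f(s)
\end{equation*}
is an unbiased estimator of $\mathbb{E}_{x\sim p_{\bm{\theta}}(x)}[h(x)]$, which by the previous step equals $\nabla_{\bm{\theta}} \mathbb{E}_{p_{\bm{\theta}}(x)}[f(x)]$. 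This is the first expression in~\eqref{eq:unord_set_pg_estimator}.

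Finally, the second expression in~\eqref{eq:unord_set_pg_estimator} follows by the log-derivative trick applied termwise: $p_{\bm{\theta}}(s)\nabla_{\bm{\theta}}\log p_{\bm{\theta}}(s) = \nabla_{\bm{\theta}} p_{\bm{\theta}}(s)$, since $R(S^k,s)$ and $f(s)$ are treated as fixed scalars inside the sum. There is no real obstacle here; the only subtle point worth flagging is that $S^k$ itself is sampled from $p_{\bm{\theta}}$, so one might worry about a missing score-function correction for the sampling distribution, but this is precisely what the initial REINFORCE step absorbs, after which Theorem~\ref{thm:unordered_set_estimator_rao} is applied to the resulting fixed integrand.
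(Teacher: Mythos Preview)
Your proposal is correct and follows essentially the same approach as the paper: apply the REINFORCE identity to rewrite the policy gradient as $\mathbb{E}_{p_{\bm{\theta}}(x)}[\nabla_{\bm{\theta}}\log p_{\bm{\theta}}(x)\,f(x)]$, then apply Theorem~\ref{thm:unordered_set_estimator_rao} with the integrand $h(x)=\nabla_{\bm{\theta}}\log p_{\bm{\theta}}(x)\,f(x)$. Your added remarks on applying the theorem coordinate-wise and on the equality of the two displayed forms via $p_{\bm{\theta}}(s)\nabla_{\bm{\theta}}\log p_{\bm{\theta}}(s)=\nabla_{\bm{\theta}} p_{\bm{\theta}}(s)$ are correct elaborations that the paper leaves implicit.
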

\begin{proof}
Using REINFORCE \citep{williams1992simple} combined with the unordered set estimator we find: 
\begin{equation*}
    \nabla_{\bm{\theta}} \mathbb{E}_{p_{\bm{\theta}}(x)}[f(x)] \hspace*{-0.06cm} 
    = \hspace*{-0.06cm} \mathbb{E}_{p_{\bm{\theta}}(x)}[\nabla_{\bm{\theta}} \log p_{\bm{\theta}}(x) f(x)] \hspace*{-0.06cm} = \hspace*{-0.06cm} \mathbb{E}_{S^k \sim p_{\bm{\theta}}(S^k)} \hspace*{-0.12cm} \left[ \hspace*{-0.06cm} \sum_{s \in S^k} p_{\bm{\theta}}(s) R(S^k, s) \nabla_{\bm{\theta}} \log p_{\bm{\theta}}(s) f(s)\hspace*{-0.06cm} \right]\hspace*{-0.12cm}.
\end{equation*}
\end{proof}

\paragraph{Variance reduction using a built-in control variate.}
The variance of REINFORCE can be reduced by subtracting a baseline from $f$. When taking multiple samples (with replacement), a simple and effective baseline is to take the mean of other (independent!) samples \citep{mnih2016variational}.
Sampling without replacement, we can use the same idea to construct a baseline based on the other samples, but we have to correct for the fact that the samples are \emph{not} independent.
\begin{theorem}
The \emph{unordered set policy gradient estimator with baseline}, given by
\begin{equation}
\label{eq:unord_set_pg_bl_estimator}
    e^{\text{USPGBL}}(S^k) = \sum_{s \in S^k} \nabla_{\bm{\theta}} p_{\bm{\theta}}(s) R(S^k, s) \left(f(s) - \sum_{s' \in S^k} p_{\bm{\theta}}(s') R^{D \setminus \{s\}}(S^k, s') f(s') \right),
\end{equation}
where 
\begin{equation}
\label{eq:second_order_leave_one_out_ratio}
    R^{D \setminus \{s\}}(S^k, s') = \frac{p_{\bm{\theta}}^{D \setminus \{s, s'\}}(S^k \setminus \{s, s'\})}{p_{\bm{\theta}}^{D \setminus \{s\}}(S^k \setminus \{s\})}
\end{equation}
is the \emph{second order leave-one-out ratio}, is an unbiased estimate of the policy gradient.
\end{theorem}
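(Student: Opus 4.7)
The plan is to split the estimator into the (already-known-to-be-unbiased) unordered set policy gradient estimator from Corollary~\ref{thm:us_unbiased} and a baseline correction, and then to show that the baseline correction has expectation zero. That is, writing $e^{\text{USPGBL}}(S^k) = e^{\text{USPG}}(S^k) - B(S^k)$ with
\begin{equation*}
B(S^k) = \sum_{s \in S^k} \nabla_{\bm{\theta}} p_{\bm{\theta}}(s)\, R(S^k, s) \sum_{s' \in S^k} p_{\bm{\theta}}(s')\, R^{D \setminus \{s\}}(S^k, s')\, f(s'),
\end{equation*}
it suffices to prove $\mathbb{E}_{S^k}[B(S^k)] = 0$, since $\mathbb{E}[e^{\text{USPG}}(S^k)] = \nabla_{\bm{\theta}} \mathbb{E}[f(x)]$ by Corollary~\ref{thm:us_unbiased}.

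The first key step is to telescope the product of leave-one-out ratios. By the definition of $R$ and the second order leave-one-out ratio in \eqref{eq:second_order_leave_one_out_ratio},
\begin{equation*}
p_{\bm{\theta}}(S^k)\, R(S^k, s)\, R^{D \setminus \{s\}}(S^k, s') = p_{\bm{\theta}}^{D \setminus \{s, s'\}}(S^k \setminus \{s, s'\}),
\end{equation*}
which is symmetric in $s$ and $s'$ and, crucially, does not otherwise depend on $S^k$ except through the remaining $k-2$ elements (with the convention that for $s=s'$ this reduces to $p_{\bm{\theta}}^{D\setminus\{s\}}(S^k\setminus\{s\})$). Substituting into $\mathbb{E}[B(S^k)] = \sum_{S^k} p_{\bm{\theta}}(S^k) B(S^k)$, the expectation becomes
\begin{equation*}
\sum_{S^k} \sum_{s,s' \in S^k} \nabla_{\bm{\theta}} p_{\bm{\theta}}(s)\, p_{\bm{\theta}}(s')\, f(s')\, p_{\bm{\theta}}^{D \setminus \{s,s'\}}(S^k \setminus \{s,s'\}).
\end{equation*}

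The next step is to swap the order of summation: for each ordered pair $(s, s') \in D \times D$, sum over those $S^k$ that contain $\{s, s'\}$. Splitting the sum into the diagonal case $s = s'$ and the off-diagonal case $s \neq s'$ and marginalizing over the remaining $k-1$ (resp.\ $k-2$) elements of $S^k$ gives in both cases $\sum_{S^k \supseteq \{s,s'\}} p_{\bm{\theta}}^{D \setminus \{s,s'\}}(S^k \setminus \{s,s'\}) = 1$, since $p_{\bm{\theta}}^{D \setminus \{s,s'\}}$ is a probability distribution over unordered subsets of the appropriate size (by \eqref{eq:P_S_restr_def}). Hence the two cases combine into
\begin{equation*}
\sum_{s \in D} \sum_{s' \in D} \nabla_{\bm{\theta}} p_{\bm{\theta}}(s)\, p_{\bm{\theta}}(s')\, f(s') = \left(\sum_{s \in D} \nabla_{\bm{\theta}} p_{\bm{\theta}}(s)\right) \left(\sum_{s' \in D} p_{\bm{\theta}}(s')\, f(s')\right).
\end{equation*}
The left factor equals $\nabla_{\bm{\theta}} \sum_{s \in D} p_{\bm{\theta}}(s) = \nabla_{\bm{\theta}} 1 = 0$, so $\mathbb{E}[B(S^k)] = 0$, which together with the unbiasedness of $e^{\text{USPG}}$ yields the claim.

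The main obstacle I expect is the combinatorial bookkeeping: carefully verifying the telescoping identity for $R \cdot R^{D\setminus\{s\}}$ in the degenerate $s=s'$ case (so that the diagonal is treated consistently with the off-diagonal), and justifying the marginalization identity $\sum_{S^k \supseteq \{s,s'\}} p_{\bm{\theta}}^{D \setminus \{s,s'\}}(S^k \setminus \{s,s'\}) = 1$ as a statement about the unordered-sampling distribution \eqref{eq:P_S_restr_def}. Once those are in place, the rest collapses to the standard score-function identity $\sum_s \nabla_{\bm{\theta}} p_{\bm{\theta}}(s) = 0$.
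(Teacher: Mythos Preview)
Your proof is correct, but it proceeds by a genuinely different route than the paper. You take a direct algebraic path: you telescope $p_{\bm{\theta}}(S^k) R(S^k,s) R^{D\setminus\{s\}}(S^k,s')$ into $p_{\bm{\theta}}^{D\setminus\{s,s'\}}(S^k\setminus\{s,s'\})$, swap the order of summation, and marginalize over the remaining $k{-}2$ (or $k{-}1$) elements to reduce everything to $\bigl(\sum_s \nabla_{\bm{\theta}} p_{\bm{\theta}}(s)\bigr)\bigl(\sum_{s'} p_{\bm{\theta}}(s') f(s')\bigr)=0$. The paper instead argues probabilistically: it identifies $p_{\bm{\theta}}(s')R^{D\setminus\{s\}}(S^k,s')/(1-p_{\bm{\theta}}(s))$ with $P(b_2=s'\mid S^k,b_1=s)$, rewrites the baseline term as $\mathbb{E}_{b_2|S^k,b_1}[p_{\bm{\theta}}(b_1)f(b_1)+(1-p_{\bm{\theta}}(b_1))f(b_2)]$, recognizes the whole control variate as the Rao-Blackwellization of $\nabla_{\bm{\theta}}\log p_{\bm{\theta}}(b_1)$ times the $k{=}2$ stochastic sum-and-sample estimator, and then concludes from $\mathbb{E}[\nabla_{\bm{\theta}}\log p_{\bm{\theta}}(b_1)]=0$. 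Your argument is shorter and more elementary, needing no conditional-distribution machinery; the paper's argument is longer but more conceptual, since it explains \emph{why} this particular baseline is the right one (it is exactly the Rao-Blackwellized sum-and-sample baseline). The diagonal case $s=s'$ and the marginalization identity you flag as obstacles are indeed routine once stated carefully, and the paper also handles $s'=s$ via the convention $R^{D\setminus\{s\}}(S^k,s)=1$.
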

\begin{proof}
See Appendix \ref{app:proof_us_bl_unbiased}.
\end{proof}
This theorem shows how to include a built-in baseline based on \emph{dependent} samples (without replacement), without introducing bias. By having a built-in baseline, the value $f(s)$ for sample $s$ is compared against an estimate of its expectation $\mathbb{E}[f(s)]$, based on the other samples. The difference is an estimate of the \emph{advantage} \citep{sutton2018reinforcement}, which is positive if the sample $s$ is `better' than average, causing $p_{\bm{\theta}}(s)$ to be increased (reinforced) through the sign of the gradient, and vice versa. By sampling without replacement, the unordered set estimator forces the estimator to compare different alternatives, and reinforces the best among them.

\paragraph{Including the pathwise derivative.}
\label{sec:pathwise_gradient}
So far, we have only considered the scenario where $f$ does not depend on $\bm{\theta}$. If $f$ does depend on $\bm{\theta}$, for example in a VAE \citep{kingma2013auto,rezende2014stochastic}, then we use the notation $f_{\bm{\theta}}$ and we can write the gradient \citep{schulman2015gradient} as
\begin{equation}
\label{eq:reinforce_pathwise_identity}
    \nabla_{\bm{\theta}} \mathbb{E}_{p_{\bm{\theta}}(x)}[f_{\bm{\theta}}(x)] = \mathbb{E}_{p_{\bm{\theta}}(x)}[\nabla_{\bm{\theta}} \log p_{\bm{\theta}}(x) f_{\bm{\theta}}(x) + \nabla_{\bm{\theta}} f_{\bm{\theta}}(x)].
\end{equation}
The additional second (`pathwise') term can be estimated (using the same samples) with the standard unordered set estimator. This results in the \emph{full} unordered set policy gradient estimator:
\begin{align}
    e^{\text{FUSPG}}(S^k) &= \sum_{s \in S^k} \nabla_{\bm{\theta}} p_{\bm{\theta}}(s) R(S^k, s) f_{\bm{\theta}}(s) + \sum_{s \in S^k} p_{\bm{\theta}}(s) R(S^k, s) \nabla_{\bm{\theta}} f_{\bm{\theta}}(s) \notag \\
    &= \sum_{s \in S^k} R(S^k, s) \nabla_{\bm{\theta}} \left(p_{\bm{\theta}}(s) f_{\bm{\theta}}(s)\right) \label{eq:full_unord_set_pg_estimator_autograd}
\end{align}
\Eqref{eq:full_unord_set_pg_estimator_autograd} is straightforward to implement using an automatic differentiation library. We can also include the baseline (as in \eqref{eq:unord_set_pg_bl_estimator}) but we must make sure to call \textsc{stop\_gradient} (\textsc{detach} in PyTorch) on the baseline (but not on $f_{\bm{\theta}}(s)$!).
Importantly, we should \emph{never} track gradients through the leave-one-out ratio $R(S^k, s)$ which means it can be efficiently computed in pure inference mode.

\paragraph{Scope \& limitations.}
We can use the unordered set estimator for any discrete distribution from which we can sample without replacement, by treating it as a univariate categorical distribution over its domain.
This includes sequence models, from which we can sample using Stochastic Beam Search \citep{kool2019stochastic}, as well as multivariate categorical distributions which can also be treated as sequence models (see Section \ref{sec:experiments_vae}).
In the presence of continuous variables or a stochastic function $f$, we may separate this stochasticity from the stochasticity over the discrete distribution, as in \citet{lorberbom2019direct}.
The computation of the leave-one-out ratios adds some overhead, although they can be computed efficiently, even for large $k$ (see Appendix \ref{sec:computation_P_S}). For a moderately sized model, the costs of model evaluation and backpropagation dominate the cost of computing the estimator.

\subsection{Relation to other multi-sample estimators}
\paragraph{Relation to Murthy's estimator.}
We found out that the `vanilla' unordered set estimator (\eqref{eq:unord_set_estimator}) is actually a special case of the estimator by \citet{murthy1957ordered}, known in statistics literature for estimation of a population total $\Theta = \sum_{i \in D} y_i$. Using $y_i = p(i)f(i)$, we have $\Theta = \mathbb{E}[f(i)]$, so Murthy's estimator can be used to estimate expectations (see \eqref{eq:unord_set_estimator}). \citeauthor{murthy1957ordered} derives the estimator by `unordering' a convex combination of \citet{raj1956some} estimators, which, using $y_i = p(i)f(i)$, are stochastic sum-and-sample estimators in our analogy.

\citet{murthy1957ordered} also provides an unbiased estimator of the variance, which may be interesting for future applications. Since Murthy's estimator can be used with \emph{arbitrary} sampling distribution, it is straightforward to derive importance-sampling versions of our estimators. In particular, we can sample $S$ without replacement using $q(x) > 0, x \in D$, and use equations \ref{eq:unord_set_estimator}, \ref{eq:unord_set_pg_estimator}, \ref{eq:unord_set_pg_bl_estimator} and \ref{eq:full_unord_set_pg_estimator_autograd}, as long as we compute the leave-one-out ratio $R(S^k, s)$ using $q$.

While part of our derivation coincides with \citet{murthy1957ordered}, we are not aware of previous work using this estimator to estimate expectations. Additionally, we discuss practical computation of $p(S)$ (Appendix \ref{sec:computation_P_S}), we show the relation to the importance-weighted estimator, and we provide the extension to estimating policy gradients, especially including a built-in baseline without adding bias.

\paragraph{Relation to the empirical risk estimator.}
The empirical risk loss \citep{edunov2018classical} estimates the expectation in \eqref{eq:expectation} by summing only a subset $S$ of the domain, using \emph{normalized} probabilities $\hat{p}_{\bm{\theta}}(s) = \frac{p_{\bm{\theta}}(s)}{\sum_{s' \in S} p_{\bm{\theta}}(s)}$. Using this loss, the (biased) estimate of the gradient is given by
\begin{equation}
    \label{eq:risk}
    e^{\text{RISK}}(S^k) = \sum_{s \in S^k} \nabla_{\bm{\theta}} \left(\frac{p_{\bm{\theta}}(s)}{\sum_{s' \in S^k} p_{\bm{\theta}}(s')}\right) f(s).
\end{equation}
The risk estimator is similar to the unordered set policy gradient estimator, with two important differences: 1) the individual terms are normalized by the total probability mass rather than the leave-one-out ratio and 2) the gradient w.r.t.\ the normalization factor is taken into account. As a result, samples `compete' for probability mass and only the best can be reinforced. This has the same effect as using a built-in baseline, which we prove in the following theorem.
\begin{theorem}
    \label{thm:risk_baseline}
    By taking the gradient w.r.t. the normalization factor into account, the risk estimator has a built-in baseline, which means it can be written as
    \begin{equation}
    \label{eq:risk_bl}
        e^{\text{RISK}}(S^k) = \sum_{s \in S^k} \nabla_{\bm{\theta}} p_{\bm{\theta}}(s) \frac{1}{\sum_{s'' \in S^k} p_{\bm{\theta}}(s'')} \left(f(s) - \sum_{s' \in S^k} p_{\bm{\theta}}(s') \frac{1}{\sum_{s'' \in S^k} p_{\bm{\theta}}(s'')} f(s') \right).
    \end{equation}
\end{theorem}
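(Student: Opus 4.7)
The plan is a direct calculation using the quotient rule, followed by a relabeling of summation indices to reveal the baseline structure. Let $Z(S^k) = \sum_{s'' \in S^k} p_{\bm{\theta}}(s'')$ so that the normalized probability appearing inside the gradient in (\ref{eq:risk}) is $\hat{p}_{\bm{\theta}}(s) = p_{\bm{\theta}}(s)/Z(S^k)$. Applying the quotient rule gives
\begin{equation*}
\nabla_{\bm{\theta}} \hat{p}_{\bm{\theta}}(s) = \frac{\nabla_{\bm{\theta}} p_{\bm{\theta}}(s)}{Z(S^k)} - \frac{p_{\bm{\theta}}(s)}{Z(S^k)^2} \sum_{s' \in S^k} \nabla_{\bm{\theta}} p_{\bm{\theta}}(s').
\end{equation*}

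Substituting this into (\ref{eq:risk}) splits $e^{\text{RISK}}(S^k)$ into two terms. The first term is already of the desired form $\sum_{s} \nabla_{\bm{\theta}} p_{\bm{\theta}}(s) \cdot \frac{1}{Z(S^k)} f(s)$. The second term is a double sum $-\sum_s \sum_{s'} \frac{p_{\bm{\theta}}(s) \nabla_{\bm{\theta}} p_{\bm{\theta}}(s')}{Z(S^k)^2} f(s)$. Here I would relabel the summation indices by swapping the roles of $s$ and $s'$, which pulls $\nabla_{\bm{\theta}} p_{\bm{\theta}}(s)$ outside and leaves a sum over $s'$ that no longer depends on $s$ except through the gradient factor.

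Combining the two terms then gives
\begin{equation*}
e^{\text{RISK}}(S^k) = \sum_{s \in S^k} \nabla_{\bm{\theta}} p_{\bm{\theta}}(s) \cdot \frac{1}{Z(S^k)} \left( f(s) - \sum_{s' \in S^k} \frac{p_{\bm{\theta}}(s')}{Z(S^k)} f(s') \right),
\end{equation*}
which is exactly (\ref{eq:risk_bl}) after rewriting $\frac{p_{\bm{\theta}}(s')}{Z(S^k)} = p_{\bm{\theta}}(s') \cdot \frac{1}{\sum_{s'' \in S^k} p_{\bm{\theta}}(s'')}$. There is no real obstacle here beyond careful bookkeeping of the two summation indices; the substantive content of the theorem lies not in the computation but in the interpretation, namely that the bracketed factor $f(s) - \sum_{s'} \hat{p}_{\bm{\theta}}(s') f(s')$ is a self-normalized empirical mean of $f$ over $S^k$ that plays the role of a control variate, analogous to the built-in baseline constructed in (\ref{eq:unord_set_pg_bl_estimator}) for the unordered set policy gradient estimator.
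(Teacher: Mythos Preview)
Your proposal is correct and follows essentially the same route as the paper: expand the gradient of the normalized probability, split into two terms, swap the summation indices $s \leftrightarrow s'$ in the resulting double sum, and recombine. The only cosmetic difference is that the paper first applies the log-derivative trick $\nabla_{\bm{\theta}}(p/Z) = (p/Z)\,\nabla_{\bm{\theta}}\log(p/Z)$ before expanding, whereas you apply the quotient rule directly; the two computations coincide line-for-line after that.
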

\begin{proof}
See Appendix \ref{app:proof_risk_baseline}
\end{proof}
This theorem highlights the similarity between the biased risk estimator and our unbiased estimator (\eqref{eq:unord_set_pg_bl_estimator}), and suggests that their only difference is the weighting of terms. Unfortunately, the implementation by \citet{edunov2018classical} has more sources of bias (e.g.\ length normalization), which are not compatible with our estimator. However, we believe that our analysis helps analyze the bias of the risk estimator and is a step towards developing unbiased estimators for structured prediction.

\paragraph{Relation to VIMCO.}
VIMCO \citep{mnih2016variational} is an estimator that uses $k$ samples (with replacement) to optimize an objective of the form $\log \frac{1}{k} \sum_i f(x_i)$, which is a multi-sample stochastic lower bound in the context of variational inference. VIMCO reduces the variance by using a \emph{local} baseline for each of the $k$ samples, based on the other $k - 1$ samples. While we do not have a log term, as our goal is to optimize general $\mathbb{E}[f(x)]$, we adopt the idea of forming a baseline based on the other samples, and we define \emph{REINFORCE without replacement} (with built-in baseline) as the estimator that computes the gradient estimate using samples with replacement $X^k = (x_1, ..., x_k)$ as
\begin{equation}
\label{eq:reinforce_with_replacement}
    e^{\text{RFWR}}(X^k) = \frac{1}{k} \sum_{i = 1}^{k} \nabla_{\bm{\theta}} \log p_{\bm{\theta}}(x_i) \left(f(x_i) - \frac{1}{k-1} \sum_{j \neq i} f(x_j)\right).
\end{equation}
This estimator is unbiased, as $\mathbb{E}_{x_i, x_j}[\nabla_{\bm{\theta}} \log p_{\bm{\theta}}(x_i) f(x_j)] = 0$ for $i \neq j$ (see also \citet{kool2019buy}). We think of the unordered set estimator as the without-replacement version of this estimator, which weights terms by $p_{\bm{\theta}}(s) R(S^k, s)$ instead of $\frac{1}{k}$.
This puts more weight on higher probability elements to compensate for sampling without replacement. If probabilities are small and (close to) uniform, there are (almost) no duplicate samples and the weights will be close to $\frac{1}{k}$, so the gradient estimate of the with- and without-replacement versions are similar.

\paragraph{Relation to ARSM.}
ARSM \citep{yin2019arsm} also uses multiple evaluations (`pseudo-samples') of $p_{\bm{\theta}}$ and $f$. This can be seen as similar to sampling without replacement, and the estimator also has a built-in control variate. Compared to ARSM, our estimator allows direct control over the computational cost (through the sample size $k$) and has wider applicability, for example it also applies to multivariate categorical variables with different numbers of categories per dimension.

\paragraph{Relation to stratified/systematic sampling.}
Our estimator aims to reduce variance by changing the sampling distribution for multiple samples by sampling without replacement. There are alternatives, such as using stratified or systematic sampling (see, e.g.\ \citet{douc2005comparison}). Both partition the domain $D$ into $k$ strata and take a single sample from each stratum, where systematic sampling uses common random numbers for each stratum. In applications involving high-dimensional or structured domains, it is unclear how to partition the domain and how to sample from each partition. Additionally, as samples are \emph{not} independent, it is non-trivial to include a built-in baseline, which we find is a key component that makes our estimator perform well.

\section{Experiments}
\subsection{Bernoulli toy experiment}
\begin{figure}[t]
\vskip -0.4cm
    \centering
    \begin{subfigure}[b]{0.48\textwidth}
        \includegraphics[width=\textwidth]{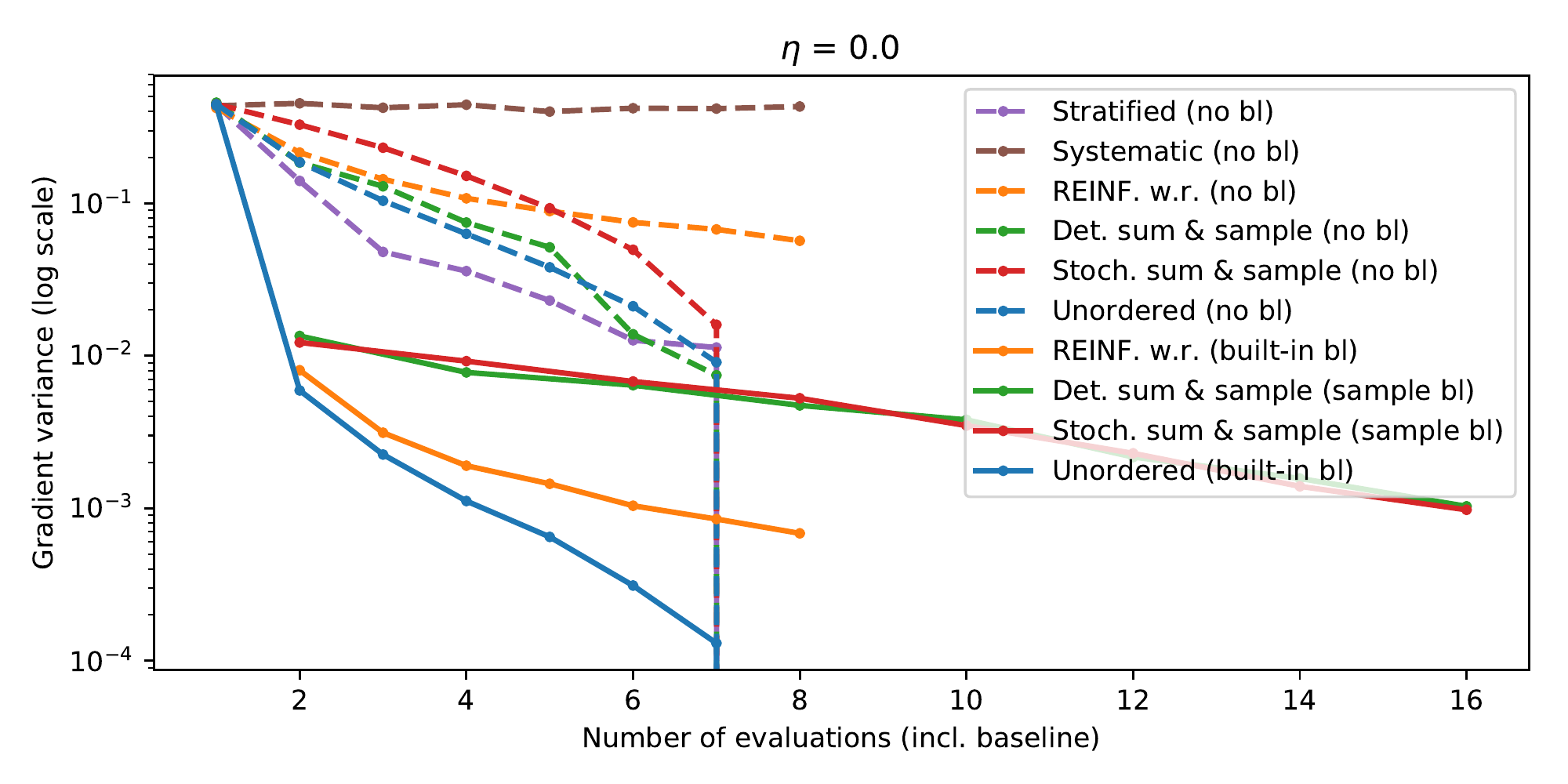}
        \vskip -0.2cm
        \caption{High entropy ($\eta = 0$)}
    \end{subfigure}
    \begin{subfigure}[b]{0.48\textwidth}
        \includegraphics[width=\textwidth]{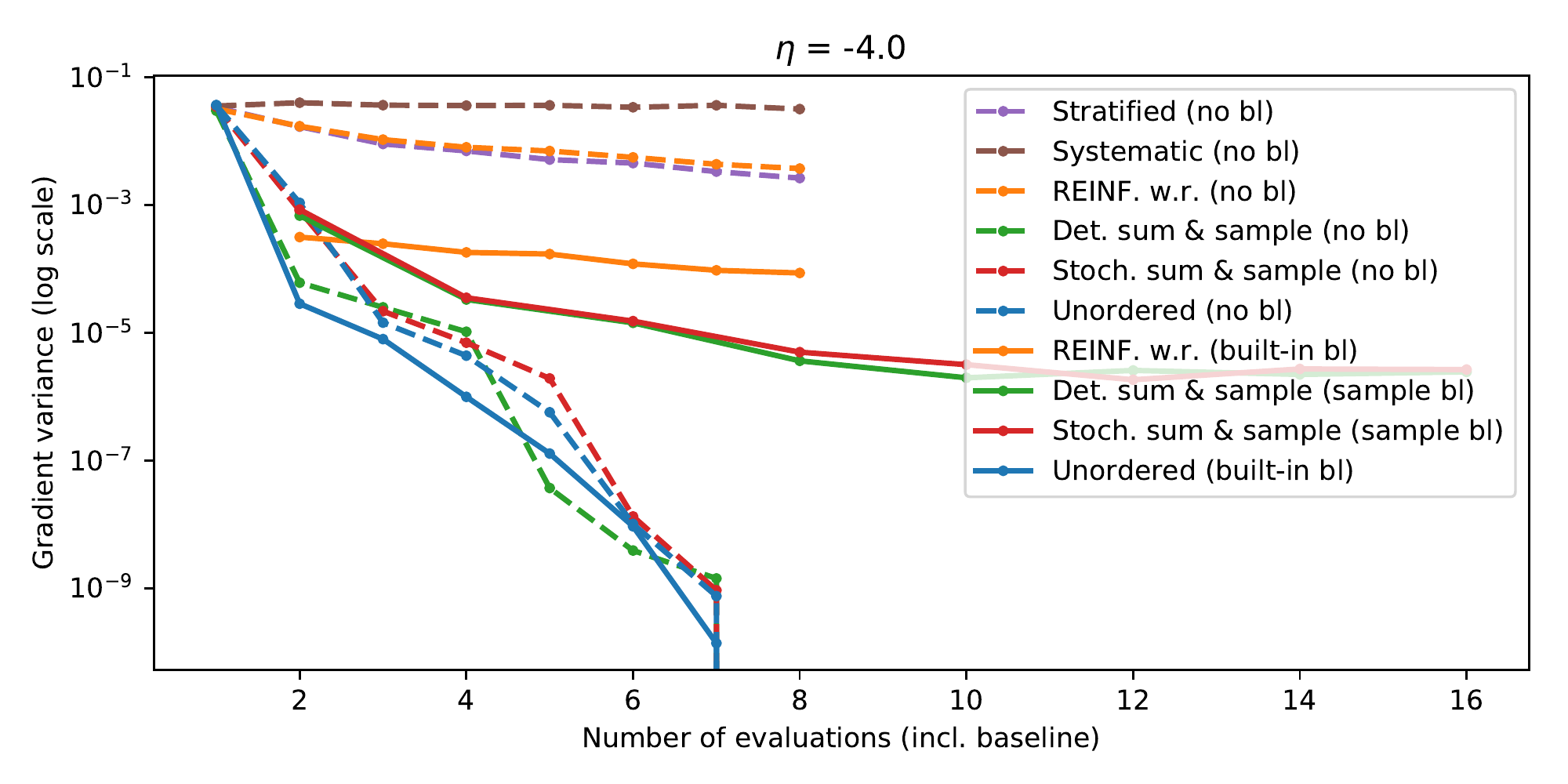}
        \vskip -0.2cm
        \caption{Low entropy ($\eta = -4$)}
    \end{subfigure}
\vskip -0.2cm
        \caption{Bernoulli gradient variance (on log scale) as a function of the number of model evaluations (including baseline evaluations, so the sum-and-sample estimators with sampled baselines use twice as many evaluations). Note that for some estimators, the variance is 0 (log variance $-\infty$) for $k = 8$.}
        \label{fig:bernoulli_toy}
\vskip -0.2cm
\end{figure}
We use the code by \citet{liu2019rao} to reproduce their Bernoulli toy experiment. Given a vector $\mathbf{p} = (0.6, 0.51, 0.48)$ the goal is to minimize the loss $\mathcal{L}(\eta) = \mathbb{E}_{x_1, x_2, x_3 \sim \text{Bern}(\sigma(\eta))} \left[ \sum_{i=1}^3 (x_i - p_i)^2 \right]$.
Here $x_1, x_2, x_3$ are i.i.d. from the $\text{Bernoulli}(\sigma(\eta))$ distribution, parameterized by a scalar $\eta \in \mathbb{R}$, where $\sigma(\eta) = (1 + \exp(-\eta))^{-1}$ is the sigmoid function.
We compare different estimators, with and without baseline (either `built-in' or using additional samples, referred to as REINFORCE+ in \citet{liu2019rao}). We report the (log-)variance of the scalar gradient $\frac{\partial \mathcal{L}}{\partial \eta}$ as a function of the number of model evaluations, which is twice as high when using a sampled baseline (for each term).

As can be seen in Figure \ref{fig:bernoulli_toy}, the unordered set estimator is the only estimator that has consistently the lowest (or comparable) variance in both the high ($\eta = 0$) and low entropy ($\eta = -4$) regimes and for different number of samples/model evaluations. This suggests that it combines the advantages of the other estimators. We also ran the actual optimization experiment, where with as few as $k=3$ samples the trajectory was indistinguishable from using the exact gradient (see \citet{liu2019rao}).

\subsection{Categorical Variational Auto-Encoder}
\label{sec:experiments_vae}
We use the code from \citet{yin2019arsm} to train a \emph{categorical} Variational Auto-Encoder (VAE) with 20 dimensional latent space, with 10 categories per dimension (details in Appendix \ref{app:categorical_vae_details}).
To use our estimator, we treat this as a single factorized distribution with $10^{20}$ categories from which we can sample without replacement using Stochastic Beam Search \citep{kool2019stochastic}, sequentially sampling each dimension as if it were a sequence model. We also perform experiments with $10^2$ latent space, which provides a lower entropy setting, to highlight the advantage of our estimator.

\paragraph{Measuring the variance.}
\begin{table}[b]
    \vskip -0.4cm
\small
    \caption{VAE gradient log-variance of different unbiased estimators with $k = 4$ samples.}
    \label{tab:vae_grads}
    \centering
    \setlength{\tabcolsep}{0.3em}
    \begin{tabular}{ll|cccccccc}
     & & ARSM & RELAX & \multicolumn{2}{c}{REINFORCE} & \multicolumn{2}{c}{Sum \& sample} & REINF. w.r. & Unordered \\
     \multicolumn{2}{c|}{Domain} & & & (no bl) & (sample bl) & (no bl) & (sample bl) & (built-in bl) & (built-in bl) \\
     \hline
     Small & $10^2$ & 13.45 & 11.67 & 11.52 & 7.49 & \bf 6.29 & \bf 6.29 & 6.65 & \bf 6.29 \\
     Large & $10^{20}$ & 15.55 & 15.86 & 13.81 & 8.48 & 13.77 & 8.44 & \bf 7.06 & \bf 7.05
    \end{tabular}
    \vskip -0.4cm
\end{table}
In Table \ref{tab:vae_grads}, we report the variance of different gradient estimators with $k = 4$ samples, evaluated on a trained model. The unordered set estimator has the lowest variance in both the small and large domain (low and high entropy) setting, being on-par with the best of the (stochastic\footnote{We cannot use the deterministic version by \citet{liu2019rao} since we cannot select the top $k$ categories.}) sum-and-sample estimator and REINFORCE with replacement\footnote{We cannot compare against VIMCO \citep{mnih2016variational} as it optimizes a different objective.}. This confirms the toy experiment, suggesting that the unordered set estimator provides the best of both estimators. In Appendix \ref{app:categorical_vae_extra_results} we repeat the same experiment at different stages of training, with similar results.

\clearpage

\begin{figure}[t]
    \centering
    \begin{subfigure}[b]{0.48\textwidth}
        \includegraphics[width=\textwidth]{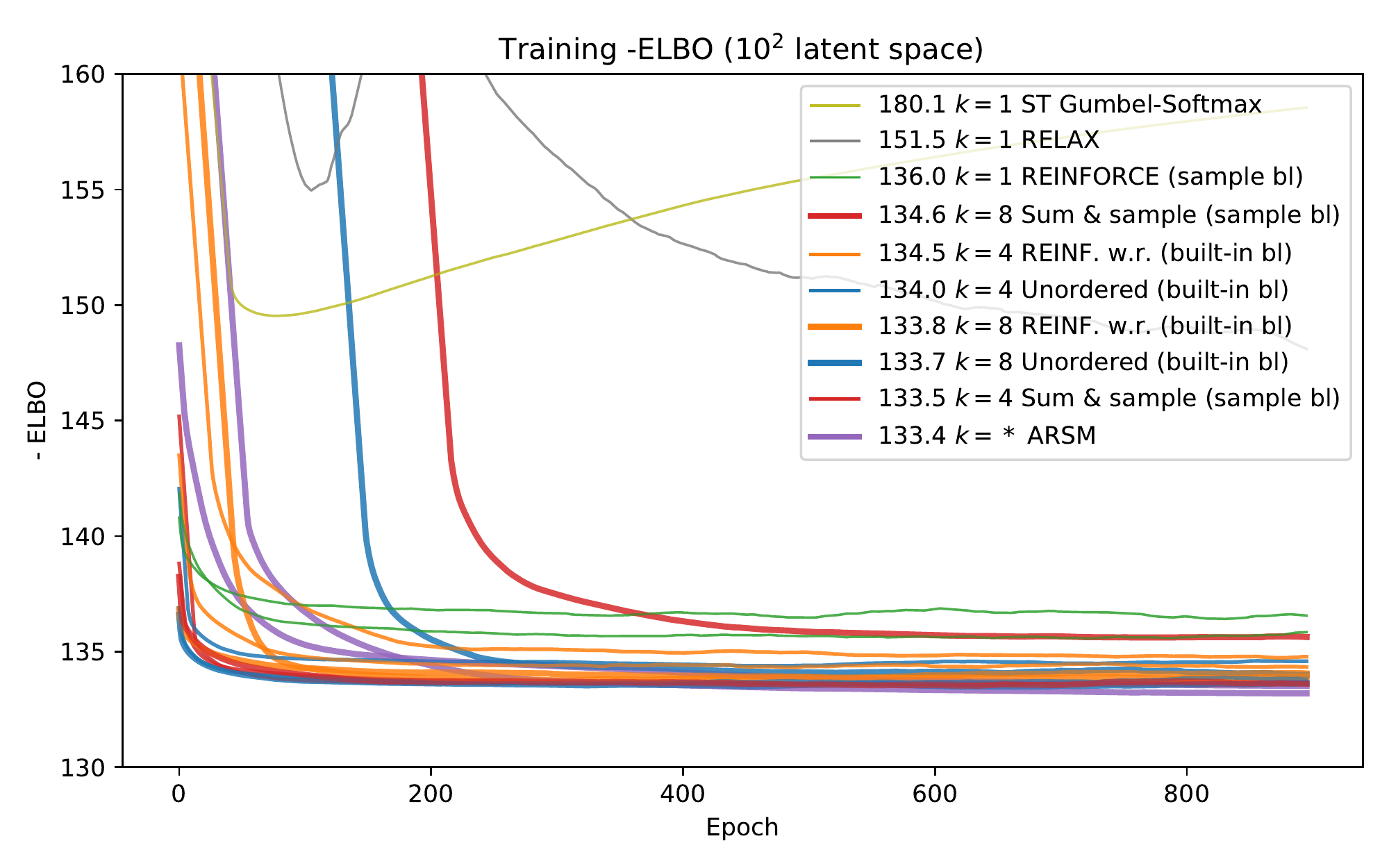}
        \caption{Small domain (latent space size $10^2$)}
    \end{subfigure}
    \begin{subfigure}[b]{0.48\textwidth}
        \includegraphics[width=\textwidth]{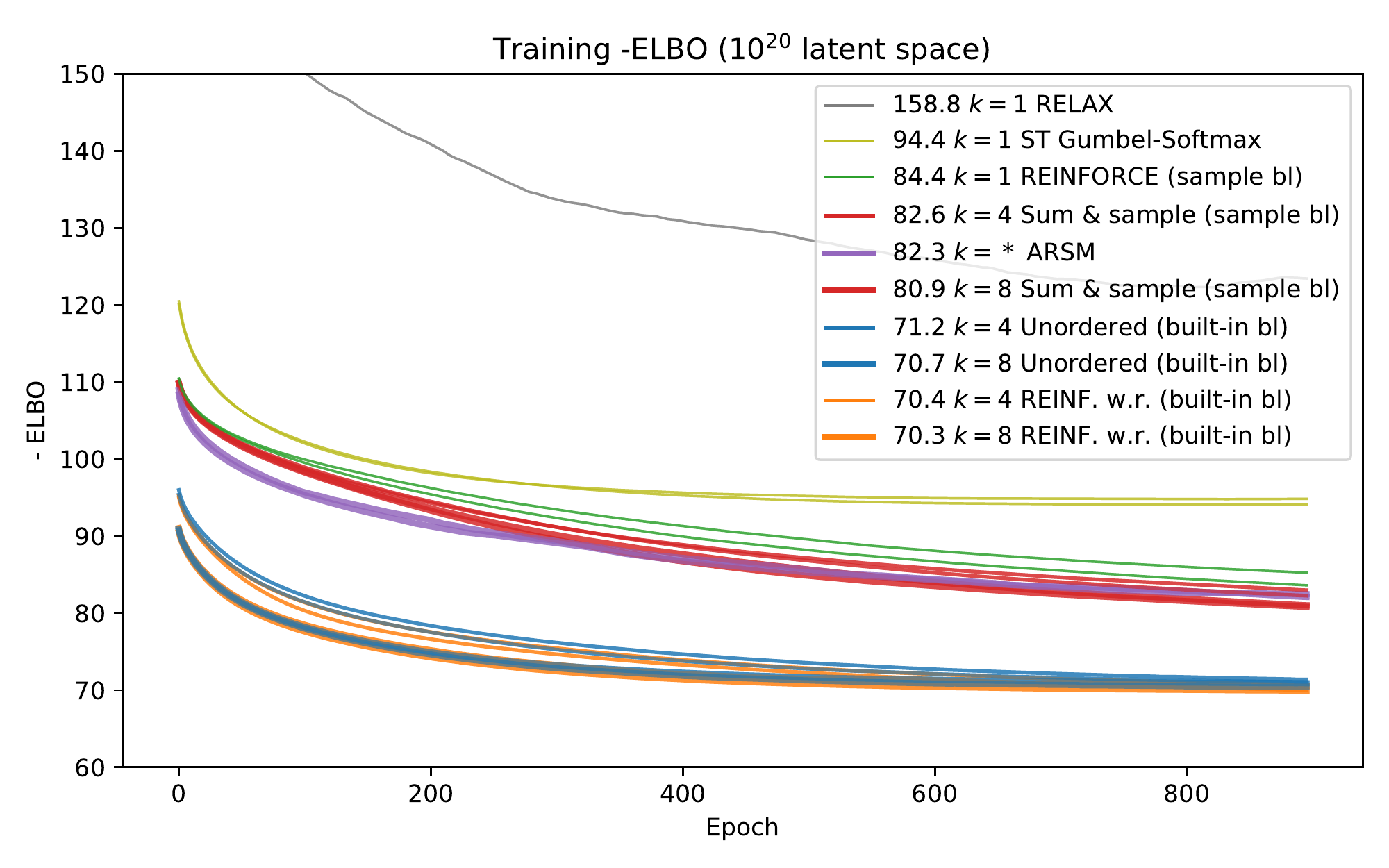}
        \caption{Large domain (latent space size $10^{20}$)}
    \end{subfigure}
        \caption{VAE smoothed training curves (-ELBO) of two independent runs when training with different estimators with $k =1$, 4 or 8 (thicker lines) samples (ARSM has a variable number). Some lines coincide, so we sort the legend by the lowest -ELBO achieved and report this value. }
        \label{fig:vae_train_elbo}
\end{figure}

\paragraph{ELBO optimization.}
We use different estimators to optimize the ELBO (details in Appendix \ref{app:categorical_vae_details}).
Additionally to the baselines by \citet{yin2019arsm} we compare against REINFORCE with replacement and the stochastic sum-and-sample estimator. In Figure \ref{fig:vae_train_elbo} we observe that our estimator performs on par with REINFORCE with replacement (and built-in baseline, \eqref{eq:reinforce_with_replacement}) and outperforms other estimators in at least one of the settings. There are a lot of other factors, e.g.\ exploration that may explain why we do not get a strictly better result despite the lower variance. We note some overfitting (see validation curves in Appendix \ref{app:categorical_vae_extra_results}), but since our goal is to show improved optimization, and to keep results directly comparable to \citet{yin2019arsm}, we consider regularization a separate issue outside the scope of this paper. These results are using MNIST binarized by a threshold of 0.5. In Appendix \ref{app:categorical_vae_extra_results} we report results using the standard binarized MNIST dataset from \citet{salakhutdinov2008quantitative}.

\subsection{Structured Prediction for the Travelling Salesman Problem}
To show the wide applicability of our estimator, we consider the structured prediction task of predicting routes (sequences) for the Travelling Salesman Problem (TSP) \citep{vinyals2015pointer,bello2016neural,kool2018attention}. We use the code by \citet{kool2018attention}\footnote{\url{https://github.com/wouterkool/attention-learn-to-route}} to reproduce their TSP experiment with 20 nodes. For details, see Appendix \ref{app:tsp_details}.

We implement REINFORCE with replacement (and built-in baseline) as well as the stochastic sum-and-sample estimator and our estimator, using Stochastic Beam Search \citep{kool2019stochastic} for sampling. Also, we include results using the biased normalized importance-weighted policy gradient estimator with built-in baseline (derived in \citet{kool2019buy}, see Appendix \ref{app:importance_weighted_pg}).
Additionally, we compare against REINFORCE with greedy rollout baseline \citep{rennie2017self} used by \citet{kool2019stochastic} and a batch-average baseline.
For reference, we also include the biased risk estimator, either `sampling' using stochastic or deterministic beam search (as in \citet{edunov2018classical}).

In Figure \ref{fig:steps_k4}, we compare training progress (measured on the validation set) as a function of the number of training steps, where we divide the batch size by $k$ to keep the total number of samples equal. Our estimator outperforms REINFORCE with replacement, the stochastic sum-and-sample estimator and the strong greedy rollout baseline (which uses additional baseline model evaluations) and performs on-par with the biased risk estimator. In Figure \ref{fig:instances_k4}, we plot the same results against the number of instances, which shows that, compared to the single sample estimators, we can train with less data and less computational cost (as we only need to run the encoder once for each instance).

\clearpage

\begin{figure*}[t]
\vskip -0.2cm
    \centering
    \begin{subfigure}[b]{0.45\textwidth}
        \includegraphics[width=\textwidth]{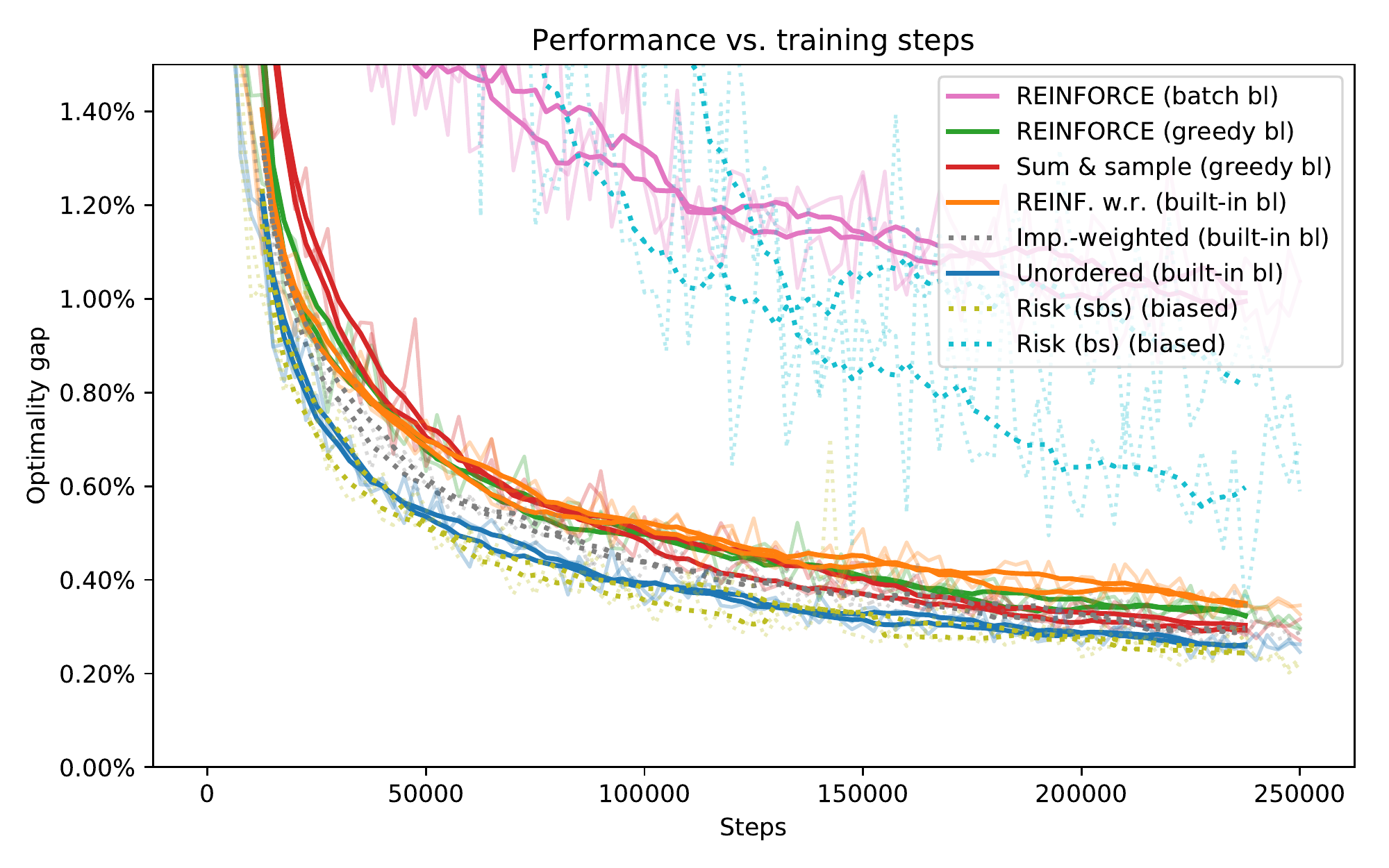}
\vskip -0.2cm
        \caption{Performance vs. training steps}
        \label{fig:steps_k4}
    \end{subfigure}
    ~ 
    \begin{subfigure}[b]{0.45\textwidth}
        \includegraphics[width=\textwidth]{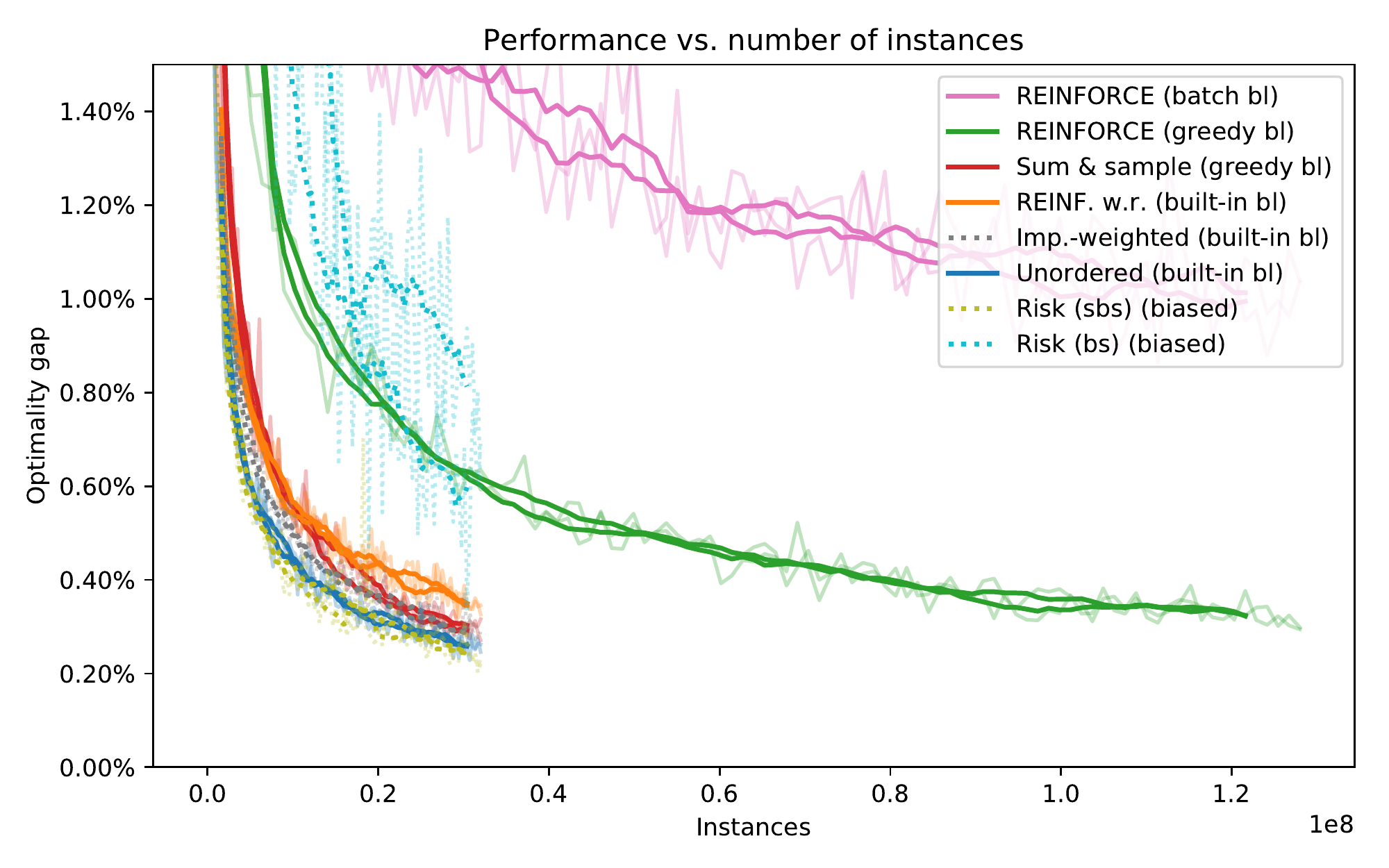}
\vskip -0.2cm
        \caption{Performance vs. number of instances}
        \label{fig:instances_k4}
    \end{subfigure}
\vskip -0.2cm
    \caption{TSP validation set optimality gap measured during training. Raw results are light, smoothed results are darker (2 random seeds). We compare our estimator against different unbiased and biased (dotted) multi-sample estimators and against single-sample REINFORCE, with batch-average or greedy rollout baseline.}
    \label{fig:results}
\vskip -0.2cm
\end{figure*}

\section{Discussion}
We introduced the unordered set estimator, a low-variance, unbiased gradient estimator based on sampling without replacement, which can be used as an alternative to the popular biased Gumbel-Softmax estimator \citep{jang2016categorical,maddison2016concrete}. Our estimator is the result of Rao-Blackwellizing three existing estimators, which guarantees equal or lower variance, and is closely related to a number of other estimators. It has wide applicability, is parameter free (except for the sample size $k$) and has competitive performance to the best of alternatives in both high and low entropy regimes.

In our experiments, we found that REINFORCE \emph{with} replacement, with multiple samples and a built-in baseline as inspired by VIMCO \citep{mnih2016variational}, is a simple yet strong estimator which has performance similar to our estimator in the high entropy setting. We are not aware of any recent work on gradient estimators for discrete distributions that has considered this estimator as baseline, while it may be often preferred given its simplicity.
In future work, we want to investigate if we can apply our estimator to estimate gradients `locally' \citep{titsias2015local}, as locally we have a smaller domain and expect more duplicate samples.

\subsubsection*{Acknowledgments}
This research was funded by ORTEC. We would like to thank anonymous reviewers for their feedback that helped improve the paper.

\bibliography{iclr2020_conference}

\begin{thebibliography}{56}
\providecommand{\natexlab}[1]{#1}
\providecommand{\url}[1]{\texttt{#1}}
\expandafter\ifx\csname urlstyle\endcsname\relax
  \providecommand{\doi}[1]{doi: #1}\else
  \providecommand{\doi}{doi: \begingroup \urlstyle{rm}\Url}\fi

\bibitem[Bahdanau et~al.(2017)Bahdanau, Brakel, Xu, Goyal, Lowe, Pineau,
  Courville, and Bengio]{bahdanau2017actor}
Dzmitry Bahdanau, Philemon Brakel, Kelvin Xu, Anirudh Goyal, Ryan Lowe, Joelle
  Pineau, Aaron Courville, and Yoshua Bengio.
\newblock An actor-critic algorithm for sequence prediction.
\newblock In \emph{International Conference on Learning Representations}, 2017.

\bibitem[Bello et~al.(2016)Bello, Pham, Le, Norouzi, and
  Bengio]{bello2016neural}
Irwan Bello, Hieu Pham, Quoc~V Le, Mohammad Norouzi, and Samy Bengio.
\newblock Neural combinatorial optimization with reinforcement learning.
\newblock \emph{arXiv preprint arXiv:1611.09940}, 2016.

\bibitem[Bengio et~al.(2013)Bengio, L{\'e}onard, and
  Courville]{bengio2013estimating}
Yoshua Bengio, Nicholas L{\'e}onard, and Aaron Courville.
\newblock Estimating or propagating gradients through stochastic neurons for
  conditional computation.
\newblock \emph{arXiv preprint arXiv:1308.3432}, 2013.

\bibitem[Casella \& Robert(1996)Casella and Robert]{casella1996rao}
George Casella and Christian~P Robert.
\newblock Rao-{B}lackwellisation of sampling schemes.
\newblock \emph{Biometrika}, 83\penalty0 (1):\penalty0 81--94, 1996.

\bibitem[Douc \& Capp{\'e}(2005)Douc and Capp{\'e}]{douc2005comparison}
Randal Douc and Olivier Capp{\'e}.
\newblock Comparison of resampling schemes for particle filtering.
\newblock In \emph{ISPA 2005. Proceedings of the 4th International Symposium on
  Image and Signal Processing and Analysis, 2005.}, pp.\  64--69. IEEE, 2005.

\bibitem[Duffield et~al.(2007)Duffield, Lund, and Thorup]{duffield2007priority}
Nick Duffield, Carsten Lund, and Mikkel Thorup.
\newblock Priority sampling for estimation of arbitrary subset sums.
\newblock \emph{Journal of the ACM (JACM)}, 54\penalty0 (6):\penalty0 32, 2007.

\bibitem[Edunov et~al.(2018)Edunov, Ott, Auli, Grangier,
  et~al.]{edunov2018classical}
Sergey Edunov, Myle Ott, Michael Auli, David Grangier, et~al.
\newblock Classical structured prediction losses for sequence to sequence
  learning.
\newblock In \emph{Proceedings of the 2018 Conference of the North American
  Chapter of the Association for Computational Linguistics: Human Language
  Technologies, Volume 1 (Long Papers)}, volume~1, pp.\  355--364, 2018.

\bibitem[Fearnhead \& Clifford(2003)Fearnhead and Clifford]{fearnhead2003line}
Paul Fearnhead and Peter Clifford.
\newblock On-line inference for hidden markov models via particle filters.
\newblock \emph{Journal of the Royal Statistical Society: Series B (Statistical
  Methodology)}, 65\penalty0 (4):\penalty0 887--899, 2003.

\bibitem[Grathwohl et~al.(2018)Grathwohl, Choi, Wu, Roeder, and
  Duvenaud]{grathwohl2017backpropagation}
Will Grathwohl, Dami Choi, Yuhuai Wu, Geoffrey Roeder, and David Duvenaud.
\newblock Backpropagation through the void: Optimizing control variates for
  black-box gradient estimation.
\newblock In \emph{International Conference on Learning Representations}, 2018.

\bibitem[Gregor et~al.(2014)Gregor, Danihelka, Mnih, Blundell, and
  Wierstra]{gregor2014deep}
Karol Gregor, Ivo Danihelka, Andriy Mnih, Charles Blundell, and Daan Wierstra.
\newblock Deep autoregressive networks.
\newblock In \emph{International Conference on Machine Learning}, pp.\
  1242--1250, 2014.

\bibitem[Grover et~al.(2019)Grover, Wang, Zweig, and
  Ermon]{grover2018stochastic}
Aditya Grover, Eric Wang, Aaron Zweig, and Stefano Ermon.
\newblock Stochastic optimization of sorting networks via continuous
  relaxations.
\newblock In \emph{International Conference on Learning Representations}, 2019.

\bibitem[Gu et~al.(2018)Gu, Im, and Li]{gu2017neural}
Jiatao Gu, Daniel~Jiwoong Im, and Victor~OK Li.
\newblock Neural machine translation with {Gumbel}-greedy decoding.
\newblock In \emph{Thirty-Second AAAI Conference on Artificial Intelligence
  (AAAI)}, 2018.

\bibitem[Gu et~al.(2016)Gu, Levine, Sutskever, and Mnih]{gu2015muprop}
Shixiang Gu, Sergey Levine, Ilya Sutskever, and Andriy Mnih.
\newblock Muprop: Unbiased backpropagation for stochastic neural networks.
\newblock In \emph{International Conference on Learning Representations}, 2016.

\bibitem[He et~al.(2016)He, Xia, Qin, Wang, Yu, Liu, and Ma]{he2016dual}
Di~He, Yingce Xia, Tao Qin, Liwei Wang, Nenghai Yu, Tie-Yan Liu, and Wei-Ying
  Ma.
\newblock Dual learning for machine translation.
\newblock In \emph{Advances in Neural Information Processing Systems}, pp.\
  820--828, 2016.

\bibitem[Jang et~al.(2016)Jang, Gu, and Poole]{jang2016categorical}
Eric Jang, Shixiang Gu, and Ben Poole.
\newblock Categorical reparameterization with gumbel-softmax.
\newblock In \emph{International Conference on Learning Representations}, 2016.

\bibitem[Kim et~al.(2016)Kim, Sabharwal, and Ermon]{kim2016exact}
Carolyn Kim, Ashish Sabharwal, and Stefano Ermon.
\newblock Exact sampling with integer linear programs and random perturbations.
\newblock In \emph{Thirtieth AAAI Conference on Artificial Intelligence}, 2016.

\bibitem[Kingma \& Ba(2015)Kingma and Ba]{kingma2015adam}
Diederik~P Kingma and Jimmy Ba.
\newblock Adam: A method for stochastic optimization.
\newblock In \emph{International Conference on Learning Representations}, 2015.

\bibitem[Kingma \& Welling(2014)Kingma and Welling]{kingma2013auto}
Diederik~P Kingma and Max Welling.
\newblock Auto-encoding variational {B}ayes.
\newblock In \emph{International Conference on Learning Representations}, 2014.

\bibitem[Kool et~al.(2019{\natexlab{a}})Kool, van Hoof, and
  Welling]{kool2018attention}
Wouter Kool, Herke van Hoof, and Max Welling.
\newblock Attention, learn to solve routing problems!
\newblock In \emph{International Conference on Learning Representations},
  2019{\natexlab{a}}.

\bibitem[Kool et~al.(2019{\natexlab{b}})Kool, van Hoof, and
  Welling]{kool2019buy}
Wouter Kool, Herke van Hoof, and Max Welling.
\newblock Buy 4 reinforce samples, get a baseline for free!
\newblock In \emph{Deep Reinforcement Learning Meets Structured Prediction
  Workshop at the International Conference on Learning Representations},
  2019{\natexlab{b}}.

\bibitem[Kool et~al.(2019{\natexlab{c}})Kool, Van~Hoof, and
  Welling]{kool2019stochastic}
Wouter Kool, Herke Van~Hoof, and Max Welling.
\newblock Stochastic beams and where to find them: The gumbel-top-k trick for
  sampling sequences without replacement.
\newblock In \emph{International Conference on Machine Learning}, pp.\
  3499--3508, 2019{\natexlab{c}}.

\bibitem[Larochelle \& Murray(2011)Larochelle and Murray]{larochelle2011neural}
Hugo Larochelle and Iain Murray.
\newblock The neural autoregressive distribution estimator.
\newblock In \emph{Proceedings of the Fourteenth International Conference on
  Artificial Intelligence and Statistics}, pp.\  29--37, 2011.

\bibitem[Leblond et~al.(2018)Leblond, Alayrac, Osokin, and
  Lacoste-Julien]{leblond2018searnn}
R{\'e}mi Leblond, Jean-Baptiste Alayrac, Anton Osokin, and Simon
  Lacoste-Julien.
\newblock Searnn: Training {RNN}s with global-local losses.
\newblock In \emph{6th International Conference on Learning Representations},
  2018.

\bibitem[Lehmann \& Scheff{\'e}(1950)Lehmann and
  Scheff{\'e}]{lehmann1950completeness}
EL~Lehmann and Henry Scheff{\'e}.
\newblock Completeness, similar regions, and unbiased estimation: Part i.
\newblock \emph{Sankhy{\=a}: The Indian Journal of Statistics}, pp.\  305--340,
  1950.

\bibitem[Liang et~al.(2018)Liang, Norouzi, Berant, Le, and
  Lao]{liang2018memory}
Chen Liang, Mohammad Norouzi, Jonathan Berant, Quoc~V Le, and Ni~Lao.
\newblock Memory augmented policy optimization for program synthesis and
  semantic parsing.
\newblock In \emph{Advances in Neural Information Processing Systems}, pp.\
  9994--10006, 2018.

\bibitem[Liu et~al.(2019)Liu, Regier, Tripuraneni, Jordan, and
  Mcauliffe]{liu2019rao}
Runjing Liu, Jeffrey Regier, Nilesh Tripuraneni, Michael Jordan, and Jon
  Mcauliffe.
\newblock Rao-{B}lackwellized stochastic gradients for discrete distributions.
\newblock In \emph{International Conference on Machine Learning}, pp.\
  4023--4031, 2019.

\bibitem[Lorberbom et~al.(2018)Lorberbom, Gane, Jaakkola, and
  Hazan]{lorberbom2018direct}
Guy Lorberbom, Andreea Gane, Tommi Jaakkola, and Tamir Hazan.
\newblock Direct optimization through argmax for discrete variational
  auto-encoder.
\newblock \emph{arXiv preprint arXiv:1806.02867}, 2018.

\bibitem[Lorberbom et~al.(2019)Lorberbom, Maddison, Heess, Hazan, and
  Tarlow]{lorberbom2019direct}
Guy Lorberbom, Chris~J Maddison, Nicolas Heess, Tamir Hazan, and Daniel Tarlow.
\newblock Direct policy gradients: Direct optimization of policies in discrete
  action spaces.
\newblock \emph{arXiv preprint arXiv:1906.06062}, 2019.

\bibitem[Luce(1959)]{luce1959individual}
R~Duncan Luce.
\newblock \emph{Individual choice behavior: A theoretical analysis}.
\newblock John Wiley, 1959.

\bibitem[Maddison et~al.(2014)Maddison, Tarlow, and
  Minka]{maddison2014sampling}
Chris~J Maddison, Daniel Tarlow, and Tom Minka.
\newblock A* sampling.
\newblock In \emph{Advances in Neural Information Processing Systems}, pp.\
  3086--3094, 2014.

\bibitem[Maddison et~al.(2016)Maddison, Mnih, and Teh]{maddison2016concrete}
Chris~J Maddison, Andriy Mnih, and Yee~Whye Teh.
\newblock The concrete distribution: A continuous relaxation of discrete random
  variables.
\newblock In \emph{International Conference on Learning Representations}, 2016.

\bibitem[Mnih \& Gregor(2014)Mnih and Gregor]{mnih2014neural}
Andriy Mnih and Karol Gregor.
\newblock Neural variational inference and learning in belief networks.
\newblock In \emph{International Conference on Machine Learning}, pp.\
  1791--1799, 2014.

\bibitem[Mnih \& Rezende(2016)Mnih and Rezende]{mnih2016variational}
Andriy Mnih and Danilo Rezende.
\newblock Variational inference for {M}onte {C}arlo objectives.
\newblock In \emph{International Conference on Machine Learning}, pp.\
  2188--2196, 2016.

\bibitem[Murthy(1957)]{murthy1957ordered}
MN~Murthy.
\newblock Ordered and unordered estimators in sampling without replacement.
\newblock \emph{Sankhy{\=a}: The Indian Journal of Statistics (1933-1960)},
  18\penalty0 (3/4):\penalty0 379--390, 1957.

\bibitem[Negrinho et~al.(2018)Negrinho, Gormley, and
  Gordon]{negrinho2018learning}
Renato Negrinho, Matthew Gormley, and Geoffrey~J Gordon.
\newblock Learning beam search policies via imitation learning.
\newblock In \emph{Advances in Neural Information Processing Systems}, pp.\
  10673--10682, 2018.

\bibitem[Norouzi et~al.(2016)Norouzi, Bengio, Jaitly, Schuster, Wu, Schuurmans,
  et~al.]{norouzi2016reward}
Mohammad Norouzi, Samy Bengio, Navdeep Jaitly, Mike Schuster, Yonghui Wu, Dale
  Schuurmans, et~al.
\newblock Reward augmented maximum likelihood for neural structured prediction.
\newblock In \emph{Advances In Neural Information Processing Systems}, pp.\
  1723--1731, 2016.

\bibitem[Paisley et~al.(2012)Paisley, Blei, and Jordan]{paisley2012variational}
John Paisley, David~M Blei, and Michael~I Jordan.
\newblock Variational {B}ayesian inference with stochastic search.
\newblock In \emph{International Conference on Machine Learning}, pp.\
  1363--1370, 2012.

\bibitem[Plackett(1975)]{plackett1975analysis}
Robin~L Plackett.
\newblock The analysis of permutations.
\newblock \emph{Journal of the Royal Statistical Society: Series C (Applied
  Statistics)}, 24\penalty0 (2):\penalty0 193--202, 1975.

\bibitem[Raj(1956)]{raj1956some}
Des Raj.
\newblock Some estimators in sampling with varying probabilities without
  replacement.
\newblock \emph{Journal of the American Statistical Association}, 51\penalty0
  (274):\penalty0 269--284, 1956.

\bibitem[Ranganath et~al.(2014)Ranganath, Gerrish, and
  Blei]{ranganath2014black}
Rajesh Ranganath, Sean Gerrish, and David Blei.
\newblock Black box variational inference.
\newblock In \emph{Artificial Intelligence and Statistics}, pp.\  814--822,
  2014.

\bibitem[Ranzato et~al.(2016)Ranzato, Chopra, Auli, and
  Zaremba]{ranzato2016sequence}
Marc'Aurelio Ranzato, Sumit Chopra, Michael Auli, and Wojciech Zaremba.
\newblock Sequence level training with recurrent neural networks.
\newblock In \emph{International Conference on Learning Representations}, 2016.

\bibitem[Rennie et~al.(2017)Rennie, Marcheret, Mroueh, Ross, and
  Goel]{rennie2017self}
Steven~J Rennie, Etienne Marcheret, Youssef Mroueh, Jerret Ross, and Vaibhava
  Goel.
\newblock Self-critical sequence training for image captioning.
\newblock In \emph{Proceedings of the IEEE Conference on Computer Vision and
  Pattern Recognition}, pp.\  7008--7024, 2017.

\bibitem[Rezende et~al.(2014)Rezende, Mohamed, and
  Wierstra]{rezende2014stochastic}
Danilo~Jimenez Rezende, Shakir Mohamed, and Daan Wierstra.
\newblock Stochastic backpropagation and approximate inference in deep
  generative models.
\newblock In \emph{International Conference on Machine Learning}, pp.\
  1278--1286, 2014.

\bibitem[Roeder et~al.(2017)Roeder, Wu, and Duvenaud]{roeder2017sticking}
Geoffrey Roeder, Yuhuai Wu, and David~K Duvenaud.
\newblock Sticking the landing: Simple, lower-variance gradient estimators for
  variational inference.
\newblock In \emph{Advances in Neural Information Processing Systems}, pp.\
  6925--6934, 2017.

\bibitem[Salakhutdinov \& Murray(2008)Salakhutdinov and
  Murray]{salakhutdinov2008quantitative}
Ruslan Salakhutdinov and Iain Murray.
\newblock On the quantitative analysis of deep belief networks.
\newblock In \emph{International Conference on Machine Learning}, pp.\
  872--879, 2008.

\bibitem[Schulman et~al.(2015)Schulman, Heess, Weber, and
  Abbeel]{schulman2015gradient}
John Schulman, Nicolas Heess, Theophane Weber, and Pieter Abbeel.
\newblock Gradient estimation using stochastic computation graphs.
\newblock In \emph{Advances in Neural Information Processing Systems}, pp.\
  3528--3536, 2015.

\bibitem[Shen et~al.(2016)Shen, Cheng, He, He, Wu, Sun, and
  Liu]{shen2016minimum}
Shiqi Shen, Yong Cheng, Zhongjun He, Wei He, Hua Wu, Maosong Sun, and Yang Liu.
\newblock Minimum risk training for neural machine translation.
\newblock In \emph{Proceedings of the 54th Annual Meeting of the Association
  for Computational Linguistics (Volume 1: Long Papers)}, volume~1, pp.\
  1683--1692, 2016.

\bibitem[Sutton \& Barto(2018)Sutton and Barto]{sutton2018reinforcement}
Richard~S Sutton and Andrew~G Barto.
\newblock \emph{Reinforcement learning: An introduction}.
\newblock MIT press, 2018.

\bibitem[Titsias \& L{\'a}zaro-Gredilla(2015)Titsias and
  L{\'a}zaro-Gredilla]{titsias2015local}
Michalis~K Titsias and Miguel L{\'a}zaro-Gredilla.
\newblock Local expectation gradients for black box variational inference.
\newblock In \emph{Advances in Neural Information Processing Systems-Volume 2},
  pp.\  2638--2646, 2015.

\bibitem[Tucker et~al.(2017)Tucker, Mnih, Maddison, Lawson, and
  Sohl-Dickstein]{tucker2017rebar}
George Tucker, Andriy Mnih, Chris~J Maddison, John Lawson, and Jascha
  Sohl-Dickstein.
\newblock Rebar: Low-variance, unbiased gradient estimates for discrete latent
  variable models.
\newblock In \emph{Advances in Neural Information Processing Systems}, pp.\
  2627--2636, 2017.

\bibitem[Vieira(2014)]{vieira2014gumbel}
Tim Vieira.
\newblock Gumbel-max trick and weighted reservoir sampling, 2014.
\newblock URL
  \url{https://timvieira.github.io/blog/post/2014/08/01/gumbel-max-trick-and-weighted-reservoir-sampling/}.

\bibitem[Vieira(2017)]{vieira2017estimating}
Tim Vieira.
\newblock Estimating means in a finite universe, 2017.
\newblock URL
  \url{https://timvieira.github.io/blog/post/2017/07/03/estimating-means-in-a-finite-universe/}.

\bibitem[Vinyals et~al.(2015)Vinyals, Fortunato, and
  Jaitly]{vinyals2015pointer}
Oriol Vinyals, Meire Fortunato, and Navdeep Jaitly.
\newblock Pointer networks.
\newblock In \emph{Advances in Neural Information Processing Systems}, pp.\
  2692--2700, 2015.

\bibitem[Williams(1992)]{williams1992simple}
Ronald~J Williams.
\newblock Simple statistical gradient-following algorithms for connectionist
  reinforcement learning.
\newblock \emph{Machine learning}, 8\penalty0 (3-4):\penalty0 229--256, 1992.

\bibitem[Yellott(1977)]{yellott1977relationship}
John~I Yellott.
\newblock The relationship between {L}uce's choice axiom, {T}hurstone's theory
  of comparative judgment, and the double exponential distribution.
\newblock \emph{Journal of Mathematical Psychology}, 15\penalty0 (2):\penalty0
  109--144, 1977.

\bibitem[Yin et~al.(2019)Yin, Yue, and Zhou]{yin2019arsm}
Mingzhang Yin, Yuguang Yue, and Mingyuan Zhou.
\newblock Arsm: Augment-reinforce-swap-merge estimator for gradient
  backpropagation through categorical variables.
\newblock In \emph{International Conference on Machine Learning}, pp.\
  7095--7104, 2019.

\end{thebibliography}
\bibliographystyle{iclr2020_conference}

\clearpage
\appendix
\section{Notation}
\label{app:notation}
Throughout this appendix we will use the following notation from \citet{maddison2014sampling}:
\begin{align*}
    e_\phi(g) &= \exp(-g + \phi) \\
    F_\phi(g) &= \exp(-\exp(-g+\phi)) \\
    f_\phi(g) &= e_\phi(g) F_\phi(g).
\end{align*}
This means that $F_\phi(g)$ is the CDF and $f_\phi(g)$ the PDF of the $\text{Gumbel}(\phi)$ distribution. Additionally we will use the identities by \citet{maddison2014sampling}:
\begin{align}
    F_\phi(g) F_\gamma(g) &= F_{\log(\exp(\phi) + \exp(\gamma))}(g) \\
    \int_{g=a}^b e_{\gamma}(g) F_\phi(g) \partial g &= (F_\phi(b) - F_\phi(a))\frac{\exp(\gamma)}{\exp(\phi)}.
\end{align}

Also, we will use the following notation, definitions and identities (see \citet{kool2019stochastic}):
\begin{align}
    \phi_i &= \log p(i) \\
    \phi_S &= \log \sum_{i \in S} p(i) = \log \sum_{i \in S} \exp \phi_i \\
    \phi_{D \setminus S} &= \log \sum_{i \in D \setminus S} p(i) = \log \left(1 - \sum_{i \in S} p(i) \right) = \log (1 - \exp(\phi_S)) \\
    G_{\phi_i} &\sim \text{Gumbel}(\phi_i) \\
    G_{\phi_S} &= \max_{i \in S} G_{\phi_i} \sim \text{Gumbel}(\phi_S) \label{eq:gumbel_max_trick}
\end{align}
For a proof of \eqref{eq:gumbel_max_trick}, see \citet{maddison2014sampling}.

\section{Computation of $p(S^k)$, $p^{D \setminus C}(S \setminus C)$ and $R(S^k, s)$}
\label{sec:computation_P_S}
We can sample the set $S^k$ from the Plackett-Luce distribution using the Gumbel-Top-$k$ trick by drawing Gumbel variables $G_{\phi_i} \sim \text{Gumbel}(\phi_i)$ for each element and returning the indices of the $k$ largest Gumbels. If we ignore the ordering, this means we will obtain the set $S^k$ if $\min_{i \in S^k} G_{\phi_i} > \max_{i \in D \setminus S^k} G_{\phi_i}$. Omitting the superscript $k$ for clarity, we can use the Gumbel-Max trick, i.e.\ that $G_{\phi_{D \setminus S}} = \max_{i \not \in S} G_{\phi_i} \sim \text{Gumbel}(\phi_{D \setminus S})$ (\eqref{eq:gumbel_max_trick}) and marginalize over $G_{\phi_{D \setminus S}}$:
\begin{align}
    p(S) &= P(\min_{i \in S} G_{\phi_i} > G_{\phi_{D \setminus S}}) \notag \\
    &= P(G_{\phi_i} > G_{\phi_{D \setminus S}}, i \in S) \notag \\
    &= \int_{g_{\phi_{D \setminus S}} = -\infty}^{\infty} f_{\phi_{D \setminus S}}(g_{\phi_{D \setminus S}}) P(G_{\phi_i} > g_{\phi_{D \setminus S}}, i \in S) \partial g_{\phi_{D \setminus S}} \notag \\
    &= \int_{g_{\phi_{D \setminus S}} = -\infty}^{\infty} f_{\phi_{D \setminus S}}(g_{\phi_{D \setminus S}})  \prod_{i \in S} \left(1 - F_{\phi_i}(g_{\phi_{D \setminus S}})\right) \partial g_{\phi_{D \setminus S}} \label{eq:P_S_indefinite_integral} \\
    &= \int_{u = 0}^{1} \prod_{i \in S} \left(1 - F_{\phi_i}\left(F^{-1}_{\phi_{D \setminus S}}(u)\right)\right) \partial u \label{eq:P_S_integral}
\end{align}
Here we have used a change of variables $u = F_{\phi_{D \setminus S}}(g_{\phi_{D \setminus S}})$. This expression can be efficiently numerically integrated (although another change of variables may be required for numerical stability depending on the values of $\bm{\phi}$).

\paragraph{Exact computation in $O(2^k)$.}
The integral in \eqref{eq:P_S_indefinite_integral} can be computed exactly using the identity
\begin{equation*}
    \prod_{i \in S} (a_i - b_i) = \sum_{C \subseteq S} (-1)^{|C|} \prod_{i \in C} b_i \prod_{i \in S \setminus C} a_i
\end{equation*}
which gives
\begin{align}
p(S) &= \int_{g_{\phi_{D \setminus S}} = -\infty}^{\infty} f_{\phi_{D \setminus S}}(g_{\phi_{D \setminus S}})  \prod_{i \in S} \left(1 - F_{\phi_i}(g_{\phi_{D \setminus S}})\right) \partial g_{\phi_{D \setminus S}}\notag \\
    &= \sum_{C \subseteq S} (-1)^{|C|} \int_{g_{\phi_{D \setminus S}} = -\infty}^{\infty} f_{\phi_{D \setminus S}}(g_{\phi_{D \setminus S}}) \prod_{i \in C} F_{\phi_i}(g_{\phi_{D \setminus S}}) \prod_{i \in S \setminus C} 1 \partial g_{\phi_{D \setminus S}}\notag \\
    &= \sum_{C \subseteq S} (-1)^{|C|} \int_{g_{\phi_{D \setminus S}} = -\infty}^{\infty} e_{\phi_{D \setminus S}}(g_{\phi_{D \setminus S}}) F_{\phi_{D \setminus S}}(g_{\phi_{D \setminus S}}) F_{\phi_C}(g_{\phi_{D \setminus S}}) \partial g_{\phi_{D \setminus S}}\notag \\
    &= \sum_{C \subseteq S} (-1)^{|C|} \int_{g_{\phi_{D \setminus S}} = -\infty}^{\infty} e_{\phi_{D \setminus S}}(g_{\phi_{D \setminus S}}) F_{\phi_{(D \setminus S) \cup C}}(g_{\phi_{D \setminus S}}) \partial g_{\phi_{D \setminus S}}\notag \\
    &= \sum_{C \subseteq S} (-1)^{|C|} (1 - 0) \frac{\exp(\phi_{D \setminus S})}{\exp(\phi_{(D \setminus S) \cup C})}\notag \\
    &= \sum_{C \subseteq S} (-1)^{|C|} \frac{1 - \sum_{i \in S} p(i)}{1 - \sum_{i \in S \setminus C} p(i)}. \label{eq:p_S_O_2_k}
\end{align}

\paragraph{Computation of $p^{D \setminus C}(S \setminus C)$.}
When using the Gumbel-Top-$k$ trick over the restricted domain $D \setminus C$, we do \emph{not} need to renormalize the log-probabilities $\phi_s, s \in D \setminus C$ since the Gumbel-Top-$k$ trick applies to unnormalized log-probabilities. Also, assuming $C \subseteq S^k$, it holds that $(D \setminus C) \setminus (S \setminus C) = D \setminus S$. This means that we can compute $p^{D \setminus C}(S \setminus C)$ similar to \eqref{eq:P_S_indefinite_integral}:
\begin{align}
    p^{D \setminus C}(S \setminus C) &= P(\min_{i \in S \setminus C} G_{\phi_i} > G_{\phi_{(D \setminus C) \setminus (S \setminus C)}}) \notag \\
    &= P(\min_{i \in S \setminus C} G_{\phi_i} > G_{\phi_{D \setminus S}}) \notag \\
    &= \int_{g_{\phi_{D \setminus S}} = -\infty}^{\infty} f_{\phi_{D \setminus S}}(g_{\phi_{D \setminus S}})  \prod_{i \in S \setminus C} \left(1 - F_{\phi_i}(g_{\phi_{D \setminus S}})\right) \partial g_{\phi_{D \setminus S}}. \label{eq:P_S_without_C_integral}
\end{align}

\paragraph{Computation of $R(S^k, s)$.}
Note that, using \eqref{eq:P_b_1_cond_S_k}, it holds that
\begin{equation*}
    \sum_{s \in S^k} \frac{p^{D \setminus \{s\}}(S^k \setminus \{s\}) p(s)}{p(S^k)} = \sum_{s \in S^k} P(b_1 = s|S^k) = 1
\end{equation*}
from which it follows that
\begin{equation*}
\label{eq:P_S_recursive}
    p(S^k) = \sum_{s \in S^k} p^{D \setminus \{s\}}(S^k \setminus \{s\}) p(s)
\end{equation*}
such that 
\begin{equation}
\label{eq:computation_leave_one_out}
    R(S^k, s) = \frac{p^{D \setminus \{s\}}(S^k \setminus \{s\})}{p(S^k)} = \frac{p^{D \setminus \{s\}}(S^k \setminus \{s\})}{\sum_{s' \in S^k} p^{D \setminus \{s'\}}(S^k \setminus \{s'\}) p(s')}.
\end{equation}
This means that, to compute the leave-one-out ratio for all $s \in S^k$, we only need to compute $p^{D \setminus \{s\}}(S^k \setminus \{s\})$ for $s \in S^k$. When using the numerical integration or summation in $O(2^k)$, we can reuse computation, whereas using the naive method, the cost is $O(k \cdot (k-1)!) = O(k!)$, making the total computational cost comparable to computing just $p(S^k)$, and the same holds when computing the `second-order' leave one out ratios for the built-in baseline (\eqref{eq:unord_set_pg_bl_estimator}).

\clearpage
\paragraph{Details of numerical integration.}
For computation of the leave-one-out ratio (\eqref{eq:computation_leave_one_out}) for large $k$ we can use the numerical integration, where we need to compute \eqref{eq:P_S_without_C_integral} with $C = \{s\}$. For this purpose, we rewrite the integral as
\begin{align*}
    p^{D \setminus C}(S \setminus C) &= \int_{g_{\phi_{D \setminus S}} = -\infty}^{\infty} f_{\phi_{D \setminus S}}(g_{\phi_{D \setminus S}})  \prod_{i \in S \setminus C} \left(1 - F_{\phi_i}(g_{\phi_{D \setminus S}})\right) \partial g_{\phi_{D \setminus S}} \\
    &= \int_{u = 0}^{1} \prod_{i \in S \setminus C} \left(1 - F_{\phi_i}\left(F^{-1}_{\phi_{D \setminus S}}(u)\right)\right) \partial u \\
    &= \int_{u = 0}^{1} \prod_{i \in S \setminus C} \left(1 - u^{\exp(\phi_i - \phi_{D \setminus S})}\right) \partial u \\
    &= \exp(b) \cdot \int_{v = 0}^{1} v^{\exp(b) - 1} \prod_{i \in S \setminus C} \left(1 - v^{\exp(\phi_i - \phi_{D \setminus S} + b)}\right) \partial v \\
    &= \exp(a + \phi_{D \setminus S}) \cdot \int_{v = 0}^{1} v^{\exp(a + \phi_{D \setminus S}) - 1} \prod_{i \in S \setminus C} \left(1 - v^{\exp(\phi_i + a)}\right) \partial v.
\end{align*}
Here we have used change of variables $v = u^{exp(-b)}$ and $a = b - \phi_{D \setminus S}$. This form allows to compute the integrands efficiently, as
\begin{equation*}
    \prod_{i \in S \setminus C} \left(1 - v^{\exp(\phi_i + a)}\right) = \frac{\prod_{i \in S} \left(1 - v^{\exp(\phi_i + a)}\right)}{\prod_{i \in C} \left(1 - v^{\exp(\phi_i + a)}\right)}
\end{equation*}
where the numerator only needs to computed once, and, since $C = \{s\}$ when computing 
\eqref{eq:computation_leave_one_out}, the denominator only consists of a single term.

The choice of $a$ may depend on the setting, but we found that $a = 5$ is a good default option which leads to an integral that is generally smooth and can be accurately approximated using the trapezoid rule. We compute the integrands in logarithmic space and sum the terms using the stable \textsc{logsumexp} trick. In our code we provide an implementation which also computes all second-order leave-one-out ratios efficiently. 

\section{The sum-and-sample estimator}
\subsection{Unbiasedness of the sum-and-sample estimator}
\label{app:proof_sas_unbiased}
We show that the sum-and-sample estimator is unbiased for any set $C \subset D$ (see also \citet{liang2018memory,liu2019rao}):
\begin{align*}
    &\hphantom{=} \mathbb{E}_{x \sim p^{D \setminus C}(x)}\left[ \sum_{c \in C} p(c) f(c) + \left(1 - \sum_{x \in C} p(c) \right) f(x)\right] \\
    &= \sum_{c \in C} p(c) f(c) + \left(1 - \sum_{c \in C} p(c) \right) \mathbb{E}_{x \sim p^{D \setminus C}(x)}[f(x)] \\
    &= \sum_{c \in C} p(c) f(c) + \left(1 - \sum_{c \in C} p(c) \right) \sum_{x \in D \setminus C} \frac{p(x)}{1 - \sum_{c \in C} p(c)} f(x) \\
    &= \sum_{c \in C} p(c) f(c) + \sum_{x \in D \setminus C} p(x) f(x) \\
    &= \sum_{x \in D} p(x) f(x) \\
    &= \mathbb{E}_{x \sim p(x)}[f(x)]
\end{align*}

\subsection{Rao-Blackwellization of the stochastic sum-and-sample estimator}
\label{app:proof_sas_rao}
In this section we give the proof that Rao-Blackwellizing the stochastic sum-and-sample estimator results in the unordered set estimator.

\begin{theorem}
\label{thm:sas_rao}
Rao-Blackwellizing the stochastic sum-and-sample estimator results in the unordered set estimator, i.e.
\begin{equation}
\label{eq:sas_rao_app}
    \mathbb{E}_{B^k \sim p(B^k|S^k)} \left[ \sum_{j = 1}^{k-1} p(b_j) f(b_j) + \left(1 - \sum_{j=1}^{k-1} p(b_j)\right) f(b_k) \right]
    = \sum_{s \in S^k} p(s) R(S^k, s) f(s).
\end{equation}
\end{theorem}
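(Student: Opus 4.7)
The plan is to simplify $e^{\text{SSAS}}$, take its expectation over $B^k \mid S^k$, match coefficients of $f(s)$ against the unordered set estimator, and reduce the theorem to a single identity that I will prove via the Gumbel-integral representation of Appendix~\ref{app:notation}.

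Write $P_{S^k} = \sum_{s \in S^k} p(s)$. Since $\{b_1,\dots,b_{k-1}\} = S^k \setminus \{b_k\}$, one has $\sum_{j<k} p(b_j) = P_{S^k} - p(b_k)$, and the SSAS estimator simplifies to
\[ e^{\text{SSAS}}(B^k) = \sum_{s \in S^k} p(s) f(s) + (1 - P_{S^k})\, f(b_k). \]
Only $f(b_k)$ depends on the ordering, so
\[ \mathbb{E}\!\left[e^{\text{SSAS}}(B^k) \mid S^k\right] = \sum_{s \in S^k} p(s) f(s) + (1 - P_{S^k}) \sum_{s \in S^k} P(b_k = s \mid S^k)\, f(s). \]
Matching the coefficient of $f(s)$ against $\sum_{s \in S^k} p(s) R(S^k, s) f(s)$ reduces the theorem to the pointwise identity $p(s) + (1 - P_{S^k})\, P(b_k = s \mid S^k) = p(s)\, R(S^k, s)$ for every $s \in S^k$.

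I compute $P(b_k = s \mid S^k)$ by mirroring the Bayes computation in \eqref{eq:P_b_1_cond_S_k}: conditional on $b_k = s$, the first $k-1$ samples form an unrestricted ordered sample whose set equals $S^k \setminus \{s\}$ (contributing total mass $p(S^k \setminus \{s\})$), after which $s$ is drawn from the restricted distribution with conditional probability $p(s)/(1 - P_{S^k} + p(s))$. Substituting, the pointwise identity collapses to the single claim
\[ p(S^k) + \frac{(1 - P_{S^k})\, p(S^k \setminus \{s\})}{1 - P_{S^k} + p(s)} = p^{D\setminus\{s\}}(S^k \setminus \{s\}). \]

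The main obstacle is proving this last identity, which I will handle using the Gumbel-integral representation from Appendix~\ref{app:notation}: all three quantities can be written as integrals of the form $\int f_{\phi_C}(g) \prod_{t \in S^k \setminus \{s\}}(1 - F_{\phi_t}(g))\, dg$ over appropriate background sets $C$. The key ingredients are $F_{\phi_{D\setminus S^k}} F_{\phi_s} = F_{\phi_{D\setminus(S^k\setminus\{s\})}}$ and $\exp(\phi_{D\setminus(S^k\setminus\{s\})})/\exp(\phi_{D\setminus S^k}) = (1 - P_{S^k} + p(s))/(1 - P_{S^k})$, which together give $f_{\phi_{D\setminus(S^k\setminus\{s\})}} = \tfrac{1 - P_{S^k} + p(s)}{1 - P_{S^k}}\, f_{\phi_{D\setminus S^k}}\, F_{\phi_s}$. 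Integrating this against $\prod_{t \in S^k \setminus \{s\}}(1 - F_{\phi_t})$ and observing that $\int f_{\phi_{D\setminus S^k}}(g) F_{\phi_s}(g) \prod_t (1 - F_{\phi_t}(g))\, dg$ equals $p^{D\setminus\{s\}}(S^k \setminus \{s\}) - p(S^k)$ yields the identity after rearrangement. A purely combinatorial proof by splitting permutations of $S^k$ according to the position of $s$ should also work but appears more tedious.
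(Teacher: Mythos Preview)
Your decomposition is identical to the paper's: you rewrite the SSAS estimator so only $f(b_k)$ depends on the ordering, reduce to the coefficient identity $p(s) + (1 - P_{S^k})\,P(b_k = s\mid S^k) = p(s)\,R(S^k,s)$ (the paper's Lemma~\ref{lem:R_S_s_from_last_sample}), compute $P(b_k=s\mid S^k)$ via Bayes (Lemma~\ref{lem:P_b_k_cond_S}), and reduce to the key set-probability identity (Lemma~\ref{lem:P_S_relation_without_s}).

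The one genuine difference is how you prove that last identity. The paper uses the closed-form inclusion--exclusion expression \eqref{eq:p_S_O_2_k} for $p(S)$ and splits the sum over $C \subseteq S$ according to whether $s \in C$; this is purely algebraic but involves some bookkeeping. Your Gumbel-integral route is correct and arguably cleaner: writing $F_{\phi_s} = 1 - (1 - F_{\phi_s})$ inside the integral for $p(S^k\setminus\{s\})$ directly yields $p^{D\setminus\{s\}}(S^k\setminus\{s\}) - p(S^k)$ times the ratio $(1-P_{S^k})/(1-P_{S^k}+p(s))$, which is the identity. The paper's approach avoids any analysis, while yours avoids the $2^k$ combinatorics and stays within the same integral framework used elsewhere in Appendix~\ref{sec:computation_P_S}; both are short once the right observation is made.
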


\begin{proof}
To give the proof, we first prove three Lemmas.

\begin{lemma}
\label{lem:P_b_k_cond_S}
\begin{equation}
    P(b_k = s|S^k) = \frac{p(S^k \setminus \{s\})}{p(S^k)}\frac{p(s)}{1 - \sum_{s' \in S^k \setminus \{s\}} p(s')}
\end{equation}
\end{lemma}
\begin{proof}
Similar to the derivation of $P(b_1 = s|S^k)$ (\eqref{eq:P_b_1_cond_S_k} in the main paper), we can write:
\begin{align*}
    P(b_k = s|S^k) &= \frac{P(S^k \cap b_k = s)}{p(S^k)} \\
    &= \frac{p(S^k \setminus \{s\}) p^{D \setminus (S^k \setminus \{s\})}(s)}{p(S^k)} \\
    &= \frac{p(S^k \setminus \{s\})}{p(S^k)}\frac{p(s)}{1 - \sum_{s' \in S^k \setminus \{s\}} p(s')}.
\end{align*}
The step from the first to the second row comes from analyzing the event $S^k \cap b_k = s$ using sequential sampling: to sample $S^k$ (including $s$) with $s$ being the $k$-th element means that we should first sample $S^k \setminus \{s\}$ (in any order), and then sample $s$ from the distribution restricted to $D \setminus (S^k \setminus \{s\})$.
\end{proof}

\begin{lemma}
\label{lem:P_S_relation_without_s}
\begin{equation}
\label{eq:P_S_relation_without_s}
    p(S) + p(S \setminus \{s\})\frac{1 - \sum_{s' \in S} p(s')}{1 - \sum_{s' \in S \setminus \{s\}} p(s')} = p^{D \setminus \{s\}}(S \setminus \{s\})
\end{equation}
\end{lemma}
Dividing \eqref{eq:p_S_O_2_k} by $1 - \sum_{s' \in S} p(s')$ on both sides, we obtain
\begin{proof}
\begin{align*}
    &\frac{p(S)}{1 - \sum_{s' \in S} p(s')}  \\
    =& \sum_{C \subseteq S} (-1)^{|C|} \frac{1}{1 - \sum_{s' \in S \setminus C} p(s')} \\
    =& \sum_{C \subseteq S \setminus \{s\}} \left( (-1)^{|C|} \frac{1}{1 - \sum_{s' \in S \setminus C} p(s')} + (-1)^{|C \cup \{s\}|} \frac{1}{1 - \sum_{s' \in S \setminus (C \cup \{s\})} p(s')} \right) \\
    =& \sum_{C \subseteq S \setminus \{s\}} (-1)^{|C|} \frac{1}{1 - \sum_{s' \in S \setminus C} p(s')} + \sum_{C \subseteq S \setminus \{s\}} (-1)^{|C \cup \{s\}|} \frac{1}{1 - \sum_{s' \in S \setminus (C \cup \{s\})} p(s')} \\
    =& \sum_{C \subseteq S \setminus \{s\}} (-1)^{|C|} \frac{1}{1 - p(s) - \sum_{s' \in (S \setminus \{s\}) \setminus C} p(s')} - \sum_{C \subseteq S \setminus \{s\}} (-1)^{|C|} \frac{1}{1 - \sum_{s' \in (S \setminus \{s\}) \setminus C} p(s')} \\
    =& \frac{1}{1 - p(s)} \sum_{C \subseteq S \setminus \{s\}} (-1)^{|C|} \frac{1}{1 - \sum_{s' \in (S \setminus \{s\}) \setminus C} \frac{p(s')}{1 - p(s)}} - \frac{p(S \setminus \{s\})}{1 - \sum_{s' \in S \setminus \{s\}} p(s')} \\
    =& \frac{1}{1 - p(s)} \frac{p^{D \setminus \{s\}}(S \setminus \{s\})}{1 - \sum_{s' \in S \setminus \{s\}} \frac{p(s')}{1 - p(s)}} - \frac{p(S \setminus \{s\})}{1 - \sum_{s' \in S \setminus \{s\}} p(s')} \\
    =& \frac{p^{D \setminus \{s\}}(S \setminus \{s\})}{1 - p(s) - \sum_{s' \in S \setminus \{s\}} p(s')} - \frac{p(S \setminus \{s\})}{1 - \sum_{s' \in S \setminus \{s\}} p(s')} \\
    =& \frac{p^{D \setminus \{s\}}(S \setminus \{s\})}{1 - \sum_{s' \in S} p(s')} - \frac{p(S \setminus \{s\})}{1 - \sum_{s' \in S \setminus \{s\}} p(s')}. \\
\end{align*}
Multiplying by $1 - \sum_{s' \in S} p(s')$ and rearranging terms proves Lemma \ref{lem:P_S_relation_without_s}.
\end{proof}

\begin{lemma}
\label{lem:R_S_s_from_last_sample}
\begin{equation}
    p(s) + \left(1 - \sum_{s' \in S^k} p(s')\right) P(b_k = s|S^k) = p(s) R(S^k, s)
\end{equation}
\end{lemma}
\begin{proof}
First using Lemma \ref{lem:P_b_k_cond_S} and then Lemma \ref{lem:P_S_relation_without_s} we find
\begin{align*}
    &p(s) + \left(1 - \sum_{s' \in S^k} p(s')\right) P(b_k = s|S^k) \\
    =&p(s) + \left(1 - \sum_{s' \in S^k} p(s')\right) \frac{p(S^k \setminus \{s\})}{p(S^k)}\frac{p(s)}{1 - \sum_{s' \in S^k \setminus \{s\}} p(s')} \\
    =& \frac{p(s)}{p(S^k)}\left(p(S^k) + \frac{1 - \sum_{s' \in S^k} p(s')}{1 - \sum_{s' \in S^k \setminus \{s\}} p(s')} p(S^k \setminus \{s\}) \right) \\
    =& \frac{p(s)}{{p(S^k)}} p^{D \setminus \{s\}}(S^k \setminus \{s\}) \\
    =& p(s) R(S^k, s).
\end{align*}
\end{proof}

Now we can complete the proof of Theorem \ref{thm:sas_rao} by adding $p(b_k)f(b_k) - p(b_k)f(b_k) = 0$ to the estimator, moving the terms independent of $B^k$ outside the expectation and using Lemma \ref{lem:R_S_s_from_last_sample}:
\begin{align*}
    &\mathbb{E}_{B^k \sim p(B^k|S^k)} \left[ \sum_{j = 1}^{k-1} p(b_j) f(b_j) + \left(1 - \sum_{j=1}^{k-1} p(b_j)\right) f(b_k) \right] \\
    =&\mathbb{E}_{B^k \sim p(B^k|S^k)} \left[ \sum_{j = 1}^k p(b_j) f(b_j) + \left(1 - \sum_{j=1}^k p(b_j)\right) f(b_k) \right] \\
    =& \sum_{s \in S^k} p(s) f(s) + \mathbb{E}_{B^k \sim p(B^k|S^k)}\left[\left(1 - \sum_{s' \in S^k} p(s')\right) f(b_k)\right] \\
    =&  \sum_{s \in S^k} p(s) f(s) + \sum_{s \in S^k} \left(1 - \sum_{s' \in S^k} p(s')\right) P(b_k = s|S^k) f(s) \\
    &= \sum_{s \in S^k} \left(p(s) + \left(1 - \sum_{s' \in S^k} p(s')\right) P(b_k = s|S^k)\right) f(s) \\
    &= \sum_{s \in S^k} p(s) R(S^k, s) f(s).
\end{align*}
\end{proof}

\subsection{The stochastic sum-and-sample estimator with multiple samples}
\label{app:proof_sas_multi}
As was discussed in \citet{liu2019rao}, one can trade off the number of summed terms and number of sampled terms to maximize the achieved variance reduction. As a generalization of Theorem \ref{thm:sas_rao} (the stochastic sum-and-sample estimator with $k - 1$ summed terms), we introduce here the stochastic sum-and-sample estimator that sums $k - m$ terms and samples $m > 1$ terms \emph{without replacement}. To estimate the sampled term, we use the unordered set estimator on the $m$ samples without replacement, on the domain restricted to $D \setminus B^{k-m}$. In general, we denote the \emph{unordered set estimator} restricted to the domain $D \setminus C$ by
\begin{equation}
\label{eq:unord_set_estimator_restricted}
    e^{\text{US}, D \setminus C}(S^k)
    = \sum_{s \in S^k \setminus C} p(s) R^{D \setminus C}(S^k, s) f(s)
\end{equation}
where $R^{D \setminus C}(S^k, s)$ is the \emph{leave-one-out ratio} restricted to the domain $D \setminus C$, similar to the second order leave-one-out ratio in \eqref{eq:second_order_leave_one_out_ratio}:
\begin{equation}
    R^{D \setminus C}(S^k, s) = \frac{p_{\bm{\theta}}^{(D \setminus C) \setminus \{s\}}((S^k \setminus C) \setminus \{s\})}{p_{\bm{\theta}}^{D \setminus C}(S^k \setminus C)}.
\end{equation}
While we can also constrain $S^k \subseteq (D \setminus C)$, this definition is consistent with \eqref{eq:second_order_leave_one_out_ratio} and allows simplified notation.

\begin{theorem}
\label{thm:sas_multi_rao}
Rao-Blackwellizing the stochastic sum-and-sample estimator with $m > 1$ samples results in the unordered set estimator, i.e.
\begin{equation}
\label{eq:def_sas_multi}
    \mathbb{E}_{B^k \sim p(B^k|S^k)} \left[ \sum_{j = 1}^{k-m} p(b_j) f(b_j) + \left(1 - \sum_{j=1}^{k-m} p(b_j)\right) e^{\text{US}, D \setminus B^{k-m}}(S^k) \right]
    = \sum_{s \in S^k} p(s) R(S^k, s) f(s).
\end{equation}
\end{theorem}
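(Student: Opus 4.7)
The plan is to reduce the case $m>1$ to the base case $m=1$ (Theorem~\ref{thm:sas_rao}) by a tower-property argument. The key observation is that, conditional on both $S^k$ and the first $k-m$ draws $B^{k-m}$, the remaining entries $(b_{k-m+1},\dots,b_k)$ form a Plackett--Luce sample of size $m$ on the restricted domain $D\setminus B^{k-m}$ with unordered value $S^k\setminus B^{k-m}$; in particular, their inner conditional distribution is again Plackett--Luce, now parameterized by $\tilde p = p^{D\setminus B^{k-m}}$.

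First, I would start from the base-case identity
$$e^{\text{US}}(S^k) = \mathbb{E}_{B^k\sim p(B^k\mid S^k)}\!\left[\sum_{j=1}^{k-1} p(b_j)f(b_j) + \Bigl(1-\sum_{j<k} p(b_j)\Bigr)f(b_k)\right]$$
established in Theorem~\ref{thm:sas_rao}, and condition on $B^{k-m}$. The first $k-m$ summands are $B^{k-m}$-measurable and factor out of the inner conditional expectation, leaving the conditional expectation of the last $m-1$ summed terms together with the $(1-\sum_{j<k} p(b_j))f(b_k)$ term to be computed. Setting $\pi = 1-\sum_{c\in B^{k-m}} p(c)$ and using $p(s) = \pi\,\tilde p(s)$ for $s\in D\setminus B^{k-m}$, this remainder rearranges to $\pi$ times the single-sample stochastic sum-and-sample estimator for $\tilde p$, applied to the restricted Plackett--Luce sample of size $m$ with unordered value $S^m = S^k\setminus B^{k-m}$.

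Applying Theorem~\ref{thm:sas_rao} to $\tilde p$ on $S^m$ collapses the inner expectation to $\pi\sum_{s\in S^m}\tilde p(s)\tilde R(S^m,s)f(s)$. Using $\pi\,\tilde p(s)=p(s)$ and identifying $\tilde R(S^m,s)$ with $R^{D\setminus B^{k-m}}(S^k,s)$ matches this to the restricted unordered set estimator $e^{\text{US},D\setminus B^{k-m}}(S^k)$, combined with the $(1-\sum_{j\le k-m}p(b_j))$ prefactor appearing in \eqref{eq:def_sas_multi}. Taking the outer expectation over $B^{k-m}\mid S^k$ then reproduces the left-hand side of \eqref{eq:def_sas_multi}, while the starting identity equates this chain to $e^{\text{US}}(S^k)$. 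I expect the main obstacle to be the bookkeeping of the $\pi$ factors under the change of measure from $p$ to $\tilde p$: one must verify that each occurrence of $\pi$ either absorbs into the explicit prefactor of \eqref{eq:def_sas_multi} or converts $\tilde p(s)$ back to $p(s)$ inside the restricted leave-one-out ratio, so that no residual factor remains.
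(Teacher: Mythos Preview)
Your approach is correct but takes a different route from the paper. You condition on $B^{k-m}$ inside the base-case identity $e^{\text{US}}(S^k)=\mathbb{E}_{B^k\mid S^k}[e^{\text{SSAS}}(B^k)]$ and then apply Theorem~\ref{thm:sas_rao} to the restricted $m$-sample Plackett--Luce problem on $D\setminus B^{k-m}$, identifying the inner conditional expectation of the last $m$ positions with $\pi\cdot e^{\text{US}}_{\tilde p}(S^k\setminus B^{k-m})$. The paper goes in the opposite direction: it first uses the conditional-expectation characterization $e^{\text{US},D\setminus B^{k-m}}(S^k)=\mathbb{E}[f(b_{k-m+1})\mid S^k,B^{k-m}]$ to collapse the restricted term to a single sample, reducing the $m$-sample expression to the single-sample SSAS on the first $k-m+1$ positions, and only then invokes Theorem~\ref{thm:sas_rao} (with an extra conditioning layer on $S^{k-m+1}\mid S^k$). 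Both are tower-property arguments, just partitioned differently: you Rao-Blackwellize the last $m$ draws, the paper Rao-Blackwellizes the first $k-m+1$. Your route avoids the extra $S^{k-m+1}$ conditioning; the paper's route avoids the explicit change of measure to $\tilde p$. On your stated concern about the $\pi$-bookkeeping: it resolves cleanly provided $e^{\text{US},D\setminus C}(S^k)$ is read as the genuine unordered-set estimator for the restricted distribution $p^{D\setminus C}$ (so that it equals $\sum_{s}\tilde p(s)\tilde R(S^m,s)f(s)$, as the paper itself uses in~\eqref{eq:cond_exp_us_restr}); then the single factor of $\pi$ you produce is exactly the prefactor $1-\sum_{j\le k-m}p(b_j)$ in~\eqref{eq:def_sas_multi}, and nothing is left over.
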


\begin{proof}
Recall that for the unordered set estimator, it holds that
\begin{equation}
    e^{\text{US}}(S^k) = \mathbb{E}_{b_1 \sim p(b_1|S^k)} \left[ f(b_1) \right] = \mathbb{E}_{x \sim p(x)} \left[ f(x) \middle| x \in S^k \right]
\end{equation}
\end{proof}
which for the restricted equivalent (with restricted distribution $p^{D \setminus C}$) translates into
\begin{equation}
    e^{\text{US}, D \setminus C}(S^k) = \mathbb{E}_{x \sim p^{D \setminus C}(x)} \left[ f(x) \middle| x \in S^k \right] = \mathbb{E}_{x \sim p(x)} \left[ f(x) \middle| x \in S^k, x \not\in C \right].
\end{equation}
Now we consider the distribution $b_{k-m+1}|S^k,B^{k-m}$: the distribution of the first element sampled (without replacement) after sampling $B^{k-m}$, given (conditionally on the event) that the set of $k$ samples is $S^k$, so we have $b_{k-m+1} \in S^k$ and $b_{k-m+1} \not\in B^{k-m}$. This means that its conditional expectation of $f(b_{k-m+1})$ is the restricted unordered set estimator for $C = B^{k-m}$ since
\begin{align}
    e^{\text{US}, D \setminus B^{k-m}}(S^k) &= \mathbb{E}_{x \sim p(x)} \left[ f(x) \middle| x \in S^k, x\not\in B^{k-m} \right] \notag \\ 
    &= \mathbb{E}_{b_{k-m+1} \sim p(b_{k-m+1}|S^k,B^{k-m})} \left[ f(b_{k-m+1}) \right] \label{eq:cond_exp_us_restr}.
\end{align}
Observing that the definition (\eqref{eq:def_sas_multi}) of the stochastic sum-and-sample estimator does not depend on the actual order of the $m$ samples, and using \eqref{eq:cond_exp_us_restr}, we can reduce the multi-sample estimator to the stochastic sum-and-sample estimator with $k' = k - m + 1$, such that the result follows from \eqref{eq:sas_rao_app}.
\begin{align}
    &\mathbb{E}_{B^{k} \sim p(B^{k}|S^k)} \left[ \sum_{j = 1}^{k-m} p(b_j) f(b_j) + \left(1 - \sum_{j=1}^{k-m} p(b_j)\right) e^{\text{US}, D \setminus B^{k-m}}(S^k) \right] \notag \\
    =& \mathbb{E}_{B^{k-m} \sim p(B^{k-m}|S^k)} \left[ \sum_{j = 1}^{k-m} p(b_j) f(b_j) + \left(1 - \sum_{j=1}^{k-m} p(b_j)\right) e^{\text{US}, D \setminus B^{k-m}}(S^k) \right] \notag \\
    =& \mathbb{E}_{B^{k-m} \sim p(B^{k-m}|S^k)} \left[ \sum_{j = 1}^{k-m} p(b_j) f(b_j) + \left(1 - \sum_{j=1}^{k-m} p(b_j)\right) \mathbb{E}_{b_{k-m+1} \sim p(b_{k-m+1}|S^k,B^{k-m})} \left[ f(b_{k-m+1}) \right] \right] \notag \\
    =& \mathbb{E}_{B^{k-m+1} \sim p(B^{k-m+1}|S^k)} \left[ \sum_{j = 1}^{k-m} p(b_j) f(b_j) + \left(1 - \sum_{j=1}^{k-m} p(b_j)\right) f(b_{k-m+1}) \right] \notag \\
    =& \mathbb{E}_{S^{k-m+1}|S^k} \left[ \mathbb{E}_{B^{k-m+1} \sim p(B^{k-m+1}|S^{k-m+1})} \left[ \sum_{j = 1}^{k-m} p(b_j) f(b_j) + \left(1 - \sum_{j=1}^{k-m} p(b_j)\right) f(b_{k-m+1}) \right] \right] \notag \\
    =& \mathbb{E}_{S^{k-m+1}|S^k} \left[ \sum_{s \in S^k} p(s) R(S^k, s) f(s) \right] \notag \\
    =& \sum_{s \in S^k} p(s) R(S^k, s) f(s).
\end{align}

\section{The importance-weighted estimator}

\subsection{Rao-Blackwellization of the importance-weighted estimator}
\label{app:proof_iw_rao}
In this section we give the proof that Rao-Blackwellizing the importance-weighted estimator results in the unordered set estimator.

\begin{theorem}
\label{thm:iw_rao}
Rao-Blackwellizing the importance-weighted estimator results in the unordered set estimator, i.e.:
\begin{equation}
    \mathbb{E}_{\kappa \sim p(\kappa|S^k)} \left[ \sum_{s \in S^k} \frac{p(s)}{1 - F_{\phi_s}(\kappa)} f(s) \right] = \sum_{s \in S^k} p(s) R(S^k, s) f(s).
\end{equation}
\end{theorem}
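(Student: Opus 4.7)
By linearity of expectation, it suffices to show that for each $s \in S^k$,
\begin{equation*}
    \mathbb{E}_{\kappa \sim p(\kappa \mid S^k)} \left[ \frac{1}{1 - F_{\phi_s}(\kappa)} \right] = R(S^k, s),
\end{equation*}
after which multiplying by $p(s) f(s)$ and summing over $s \in S^k$ gives the claim. The plan is to derive the conditional density of the threshold $\kappa$ given the unordered top-$k$ set $S^k$, plug it into the expectation, and recognize the resulting integral as $p^{D \setminus \{s\}}(S^k \setminus \{s\})$ using the representation derived in Appendix~\ref{sec:computation_P_S}.

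First I would set up the Gumbel-Top-$k$ picture. Under this sampling mechanism, $\kappa$ is the $(k{+}1)$-th largest perturbed log-probability, i.e.\ $\kappa = \max_{i \in D \setminus S^k} G_{\phi_i}$, which marginally is distributed as $\mathrm{Gumbel}(\phi_{D \setminus S^k})$ with density $f_{\phi_{D \setminus S^k}}$. The event that $S^k$ is exactly the top-$k$ set conditional on $\kappa$ is $\{ G_{\phi_i} > \kappa \text{ for all } i \in S^k \}$, which by independence of the Gumbels has probability $\prod_{i \in S^k}(1 - F_{\phi_i}(\kappa))$. Combining these with the integral representation
\begin{equation*}
    p(S^k) = \int_{-\infty}^{\infty} f_{\phi_{D \setminus S^k}}(g) \prod_{i \in S^k}(1 - F_{\phi_i}(g))\, \partial g
\end{equation*}
from \eqref{eq:P_S_indefinite_integral} yields, by Bayes' rule, the conditional density
\begin{equation*}
    p(\kappa \mid S^k) = \frac{f_{\phi_{D \setminus S^k}}(\kappa) \prod_{i \in S^k}(1 - F_{\phi_i}(\kappa))}{p(S^k)}.
\end{equation*}

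Next I would substitute this into the expectation. The factor $1 - F_{\phi_s}(\kappa)$ in the numerator cancels the one in the denominator of the integrand, leaving
\begin{equation*}
    \mathbb{E}_{\kappa \sim p(\kappa \mid S^k)} \left[ \frac{1}{1 - F_{\phi_s}(\kappa)} \right] = \frac{1}{p(S^k)} \int_{-\infty}^{\infty} f_{\phi_{D \setminus S^k}}(\kappa) \prod_{i \in S^k \setminus \{s\}}(1 - F_{\phi_i}(\kappa))\, \partial \kappa.
\end{equation*}
Finally, I would recognize the remaining integral as exactly $p^{D \setminus \{s\}}(S^k \setminus \{s\})$ via \eqref{eq:P_S_without_C_integral} with $C = \{s\}$, noting that $(D \setminus \{s\}) \setminus (S^k \setminus \{s\}) = D \setminus S^k$ so that the Gumbel density in the integrand is unchanged. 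Dividing by $p(S^k)$ produces the leave-one-out ratio $R(S^k, s)$, completing the argument.

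The only real subtlety is the identification of $p(\kappa \mid S^k)$: one must be careful that $S^k$ is an \emph{unordered} set, so conditioning on $S^k$ keeps $\kappa$ random (it is not simply pinned to the $(k{+}1)$-st order statistic of a fixed sequence), and one must correctly couple $\kappa$ with the event that all $k$ sampled Gumbels exceed it. Once that density is in hand, the rest is a one-line cancellation and an application of the integral identities already established in Appendix~\ref{sec:computation_P_S}.
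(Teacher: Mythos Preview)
Your proposal is correct and follows essentially the same route as the paper: reduce by linearity to the per-element identity $\mathbb{E}_{\kappa \mid S^k}\bigl[(1-F_{\phi_s}(\kappa))^{-1}\bigr]=R(S^k,s)$, derive $p(\kappa\mid S^k)$ via Bayes' rule using the Gumbel-Top-$k$ representation, cancel the $1-F_{\phi_s}(\kappa)$ factor, and identify the remaining integral with $p^{D\setminus\{s\}}(S^k\setminus\{s\})$ through \eqref{eq:P_S_without_C_integral}. The paper packages the per-element identity as a separate lemma but the argument is otherwise identical.
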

Here we have slightly rewritten the definition of the importance-weighted estimator, using that $q(s, a) = P(g_{\phi_s} > a) = 1 - F_{\phi_s}(a)$, where $F_{\phi_s}$ is the CDF of the Gumbel distribution (see Appendix \ref{app:notation}).

\begin{proof}
We first prove the following Lemma:
\begin{lemma}
\label{lem:E_iw_cond_S}
\begin{equation}
    \mathbb{E}_{\kappa \sim p(\kappa|S^k)} \left[ \frac{1}{1 - F_{\phi_s}(\kappa)} \right] = R(S^k, s)
\end{equation}
\end{lemma}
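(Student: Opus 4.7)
}
The plan is to compute the conditional density $p(\kappa \mid S^k)$ explicitly, substitute it into the expectation, and observe that the factor $1-F_{\phi_s}(\kappa)$ cancels one term from the product that arises in the density, leaving exactly a restricted-domain expression of the form \eqref{eq:P_S_without_C_integral} that equals $p^{D\setminus\{s\}}(S^k\setminus\{s\})$.

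Concretely, I will first derive the joint ``density'' of $(S^k,\kappa)$ under the Gumbel-Top-$k$ construction. Conditional on $\kappa = g$ (the $(k+1)$-st largest perturbed log-probability), $S^k$ is the top-$k$ set iff $G_{\phi_i} > g$ for every $i \in S^k$, which occurs with probability $\prod_{i \in S^k}(1 - F_{\phi_i}(g))$. Since $\kappa = \max_{i \in D \setminus S^k} G_{\phi_i}$ has marginal density $f_{\phi_{D\setminus S^k}}(g)$ (by \eqref{eq:gumbel_max_trick} and independence of the Gumbels in $S^k$ and $D\setminus S^k$), the joint density is
\begin{equation*}
p(S^k,\kappa=g) = f_{\phi_{D\setminus S^k}}(g) \prod_{i\in S^k}\bigl(1-F_{\phi_i}(g)\bigr),
\end{equation*}
and integrating over $g$ recovers exactly the expression for $p(S^k)$ in \eqref{eq:P_S_indefinite_integral}. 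Dividing yields
\begin{equation*}
p(\kappa=g \mid S^k) = \frac{1}{p(S^k)}\, f_{\phi_{D\setminus S^k}}(g) \prod_{i\in S^k}\bigl(1-F_{\phi_i}(g)\bigr).
\end{equation*}

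Next I will plug this into the expectation. Writing the expectation as an integral and multiplying numerator and denominator by $1 - F_{\phi_s}(g)$ cancels the $s$-th factor from the product:
\begin{equation*}
\mathbb{E}_{\kappa \sim p(\kappa \mid S^k)}\!\left[\frac{1}{1-F_{\phi_s}(\kappa)}\right] = \frac{1}{p(S^k)} \int_{-\infty}^{\infty} f_{\phi_{D\setminus S^k}}(g) \prod_{i\in S^k \setminus \{s\}}\bigl(1-F_{\phi_i}(g)\bigr)\, dg.
\end{equation*}
The final step is to recognize the remaining integral: since $(D\setminus\{s\})\setminus (S^k\setminus\{s\}) = D\setminus S^k$, equation \eqref{eq:P_S_without_C_integral} applied with $C=\{s\}$ gives exactly
\begin{equation*}
\int_{-\infty}^{\infty} f_{\phi_{D\setminus S^k}}(g) \prod_{i\in S^k \setminus \{s\}}\bigl(1-F_{\phi_i}(g)\bigr)\, dg = p^{D\setminus\{s\}}(S^k \setminus \{s\}),
\end{equation*}
so the right-hand side becomes $p^{D\setminus\{s\}}(S^k \setminus \{s\})/p(S^k) = R(S^k,s)$, as desired.

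I do not expect a genuinely hard step: the only care needed is to set up the joint density of the top-$k$ set and the $(k+1)$-st order statistic correctly (in particular, noting that the Gumbels inside and outside $S^k$ are independent, so the conditioning event factorises), and to match the reduced integrand against the form in \eqref{eq:P_S_without_C_integral} on the restricted domain $D\setminus\{s\}$. Given the lemma, Theorem \ref{thm:iw_rao} follows immediately by linearity: pulling the $p(s)f(s)$ factors out of the expectation and applying the lemma termwise yields $\sum_{s\in S^k} p(s) R(S^k,s) f(s)$.
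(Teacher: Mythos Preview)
Your proposal is correct and follows essentially the same route as the paper: derive the conditional density $p(\kappa\mid S^k)=\frac{1}{p(S^k)}f_{\phi_{D\setminus S^k}}(\kappa)\prod_{i\in S^k}(1-F_{\phi_i}(\kappa))$ (the paper phrases this via Bayes' Theorem rather than via the joint density, but it is the same computation), cancel the $1-F_{\phi_s}(\kappa)$ factor inside the integral, and identify the remaining integral with $p^{D\setminus\{s\}}(S^k\setminus\{s\})$ through \eqref{eq:P_S_without_C_integral} with $C=\{s\}$. Your closing remark on deducing Theorem~\ref{thm:iw_rao} by linearity also matches the paper exactly.
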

\begin{proof}
Conditioning on $S^k$, we know that the elements in $S^k$ have the $k$ largest perturbed log-probabilities, so $\kappa$, the $(k+1)$-th largest perturbed log-probability is the largest perturbed log-probability in $D \setminus S^k$, and satisfies $\kappa = \max_{s \in D \setminus S^k} g_{\phi_s} = g_{\phi_{D\setminus S^k}} \sim \text{Gumbel}(\phi_{D\setminus S^k})$.
Computing $p(\kappa|S^k)$ using Bayes' Theorem, we have
\begin{equation}
    p(\kappa|S^k) = \frac{p(S^k|\kappa) p(\kappa)}{p(S^k)} = \frac{\prod_{s \in S^k} (1 - F_{\phi_s}(\kappa)) f_{\phi_{D\setminus S^k}}(\kappa)}{p(S^k)}
\end{equation}

which allows us to compute (using \eqref{eq:P_S_without_C_integral} with $C = \{s\}$ and $g_{\phi_{D \setminus S}} = \kappa$)
\begin{align*}
    &\mathbb{E}_{\kappa \sim p(\kappa|S^k)} \left[ \frac{1}{1 - F_{\phi_s}(\kappa)} \right] \\
    =& \int_{\kappa = -\infty}^{\infty} p(\kappa|S^k) \frac{1}{1 - F_{\phi_s}(\kappa)} \partial \kappa \\
    =& \int_{\kappa = -\infty}^{\infty} \frac{\prod_{s \in S^k} (1 - F_{\phi_s}(\kappa)) f_{\phi_{D\setminus S^k}}(\kappa)}{p(S^k)} \frac{1}{1 - F_{\phi_s}(\kappa)} \partial \kappa \\
    =& \frac{1}{p(S^k)} \int_{\kappa = -\infty}^{\infty} \prod_{s \in S^k \setminus \{s\}} (1 - F_{\phi_s}(\kappa)) f_{\phi_{D\setminus S^k}}(\kappa) \partial \kappa \\
    =& \frac{1}{p(S^k)} p^{D \setminus \{s\}}(S \setminus \{s\}) \\
    =& R(S^k, s).
\end{align*}
\end{proof}

Using Lemma \ref{lem:E_iw_cond_S} we find 
\begin{align*}
    &\mathbb{E}_{\kappa \sim p(\kappa|S^k)} \left[ \sum_{s \in S^k} \frac{p(s)}{1 - F_{\phi_s}(\kappa)} f(s) \right] \\
    =& \sum_{s \in S^k} p(s) \mathbb{E}_{\kappa \sim p(\kappa|S^k)} \left[ \frac{1}{1 - F_{\phi_s}(\kappa)} \right] f(s) \\
    =& \sum_{s \in S^k} p(s) R(S^k, s) f(s).
\end{align*}
\end{proof}

\subsection{The importance-weighted policy gradient estimator with built-in baseline}
\label{app:importance_weighted_pg}
For self-containment we include this section, which is adapted from our unpublished workshop paper \citep{kool2019buy}.
The importance-weighted policy gradient estimator combines REINFORCE \citep{williams1992simple} with the importance-weighted estimator \citep{duffield2007priority, vieira2017estimating} in \eqref{eq:iw_estimator} which results in an unbiased estimator of the policy gradient $\nabla_{\bm{\theta}} \mathbb{E}_{p_{\bm{\theta}}(x)}[f_{\bm{\theta}}(x)]$:
\begin{equation}
\label{eq:iwpg_estimator}
    e^{\text{IWPG}}(S^k, \kappa) = \sum_{s \in S^k} \frac{p_{\bm{\theta}}(s)}{q_{\bm{\theta},\kappa}(s)} \nabla_{\bm{\theta}} \log p_{\bm{\theta}}(s) f(s) = \sum_{s \in S^k} \frac{\nabla_{\bm{\theta}} p_{\bm{\theta}}(s)}{q_{\bm{\theta},\kappa}(s)} f(s)
\end{equation}
Recall that $\kappa$ is the $(k+1)$-th largest perturbed log-probability (see Section \ref{sec:rao_bw_other}). 
We compute a lower variance but biased variant by normalizing the importance weights using the normalization $W(S^k) = \sum_{s \in S^k} \frac{p_{\bm{\theta}}(s)}{q_{\bm{\theta},\kappa}(s)}$.

As we show in \citet{kool2019buy}, we can include a `baseline'  $B(S^k) = \sum_{s \in S^k} \frac{p_{\bm{\theta}}(s)}{q_{\bm{\theta},\kappa}(s)} f(s)$ and correct for the bias (since it depends on the complete sample $S^k$) by weighting individual terms of the estimator  by $1 - p_{\bm{\theta}}(s) + \frac{p_{\bm{\theta}}(s)}{q_{\bm{\theta},\kappa}(s)}$:
\begin{equation}
\label{eq:iwpgbl_estimator}
    e^{\text{IWPGBL}}(S^k, \kappa) =  \sum_{s \in S^k} \frac{\nabla_{\bm{\theta}} p_{\bm{\theta}}(s)}{q_{\bm{\theta},\kappa}(s)} \left( f(s) \left(1 - p_{\bm{\theta}}(s) + \frac{ p_{\bm{\theta}}(s)}{q_{\bm{\theta},\kappa}(s)} \right) - B(S^k) \right)
\end{equation}

For the normalized version, we use the normalization $W(S^k) = \sum_{s \in S^k} \frac{p_{\bm{\theta}}(s)}{q_{\bm{\theta},\kappa}(s)}$ for the baseline, and $W_i(S^k) = W(S^k) - \frac{p_{\bm{\theta}}(s)}{q_{\bm{\theta},\kappa}(s)} + p_{\bm{\theta}}(s)$ to normalize the individual terms:
\begin{equation}
\label{eq:niwpgbl_estimator}
    \nabla_{\bm{\theta}} \mathbb{E}_{y\sim p_{\bm{\theta}}(y)} \left[ f(y) \right] \approx \sum_{s \in S^k} \frac{1}{W_i(S^k)} \cdot \frac{\nabla_{\bm{\theta}} p_{\bm{\theta}}(s)}{q_{\bm{\theta},\kappa}(s)} \left( f(s) - \frac{B(S^k)}{W(S^k)} \right)
\end{equation}
It seems odd to normalize the terms in the outer sum by $\frac{1}{W_i(S^k)}$ instead of $\frac{1}{W(S^k)}$, but \eqref{eq:niwpgbl_estimator} can be rewritten into a form similar to  \eqref{eq:unord_set_pg_bl_estimator}, i.e.\ with a different baseline for each sample, but this form is more convenient for implementation \citep{kool2019buy}.

\section{The unordered set policy gradient estimator}

\subsection{Proof of unbiasedness of the unordered set policy gradient estimator with baseline}
\label{app:proof_us_bl_unbiased}
To prove the unbiasedness of result we need to prove that the control variate has expectation $0$:
\begin{lemma}
\begin{equation}
    \mathbb{E}_{S^k \sim p_{\bm{\theta}}(S^k)} \left[ \sum_{s \in S^k} \nabla_{\bm{\theta}} p_{\bm{\theta}}(s) R(S^k, s) \sum_{s' \in S^k} p_{\bm{\theta}}(s') R^{D \setminus \{s\}}(S^k, s') f(s') \right] = 0.
\end{equation}
\end{lemma}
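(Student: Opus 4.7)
The plan is to exploit a telescoping identity for the product of leave-one-out ratios and then swap the order of summation so that the inner sum becomes a total-probability marginalization. Specifically, from the Definition of $R$ and from \eqref{eq:second_order_leave_one_out_ratio}, for $s \ne s'$ we have
\begin{equation*}
R(S^k, s) \cdot R^{D \setminus \{s\}}(S^k, s') \;=\; \frac{p_{\bm{\theta}}^{D \setminus \{s, s'\}}(S^k \setminus \{s, s'\})}{p_{\bm{\theta}}(S^k)},
\end{equation*}
so when this product is multiplied by $p_{\bm{\theta}}(S^k)$ inside the outer expectation, the denominator cancels. For the diagonal $s = s'$ the analogous cancellation occurs because $R^{D \setminus \{s\}}(S^k, s) = 1$ by direct substitution into \eqref{eq:second_order_leave_one_out_ratio}.

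Next, I would interchange the finite sums and group by pairs $(s, s') \in D \times D$:
\begin{equation*}
\mathbb{E}_{S^k}\!\left[\sum_{s,s' \in S^k} \nabla_{\bm{\theta}} p_{\bm{\theta}}(s) \, p_{\bm{\theta}}(s') \, R(S^k, s) R^{D\setminus\{s\}}(S^k, s') f(s')\right]
\;=\; \sum_{s, s' \in D} \nabla_{\bm{\theta}} p_{\bm{\theta}}(s) \, p_{\bm{\theta}}(s') f(s') \cdot \Sigma(s, s'),
\end{equation*}
where $\Sigma(s, s') = \sum_{S^k \supseteq \{s, s'\}} p_{\bm{\theta}}(S^k) R(S^k, s) R^{D\setminus\{s\}}(S^k, s')$. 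After applying the telescoping identity and changing variables to $T = S^k \setminus \{s, s'\}$, the sum $\Sigma(s, s')$ becomes $\sum_{T \subseteq D \setminus \{s,s'\},\, |T| = k-2} p_{\bm{\theta}}^{D \setminus \{s,s'\}}(T) = 1$, since it is just the total probability mass of the without-replacement distribution on size-$(k-2)$ subsets of $D \setminus \{s, s'\}$. For the diagonal case the same step gives $\Sigma(s, s) = \sum_{T \subseteq D \setminus \{s\},\, |T| = k-1} p_{\bm{\theta}}^{D \setminus \{s\}}(T) = 1$.

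Substituting $\Sigma \equiv 1$ back, the expression factorizes and the score-function identity finishes the job:
\begin{equation*}
\sum_{s, s' \in D} \nabla_{\bm{\theta}} p_{\bm{\theta}}(s) \, p_{\bm{\theta}}(s') f(s') \;=\; \Bigl(\sum_{s \in D} \nabla_{\bm{\theta}} p_{\bm{\theta}}(s) \Bigr) \Bigl(\sum_{s' \in D} p_{\bm{\theta}}(s') f(s') \Bigr) \;=\; 0 \cdot \mathbb{E}[f(x)] \;=\; 0,
\end{equation*}
since $\sum_{s \in D} \nabla_{\bm{\theta}} p_{\bm{\theta}}(s) = \nabla_{\bm{\theta}} \sum_{s \in D} p_{\bm{\theta}}(s) = \nabla_{\bm{\theta}} 1 = 0$.

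The only subtle step is the diagonal $s = s'$ contribution: one must interpret \eqref{eq:second_order_leave_one_out_ratio} correctly at $s' = s$ (where $\{s,s\} = \{s\}$ gives $R^{D\setminus\{s\}}(S^k, s) = 1$) and keep this term in the sum — dropping it would leave a non-vanishing off-diagonal residual $\sum_{s \ne s'} \nabla p(s) p(s') f(s')$ that is not a pure product. Beyond that bookkeeping, the argument is a direct application of Fubini on finite sums, the telescoping cancellation, and the normalization of the restricted Plackett–Luce distribution, all of which have already been used earlier in the paper.
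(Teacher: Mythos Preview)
Your proof is correct and takes a genuinely different route from the paper's. The paper argues probabilistically: it identifies $p_{\bm\theta}(s)R(S^k,s)=P(b_1=s\mid S^k)$ and $\frac{p_{\bm\theta}(s')}{1-p_{\bm\theta}(s)}R^{D\setminus\{s\}}(S^k,s')=P(b_2=s'\mid S^k,b_1=s)$, thereby rewriting the inner sum as $\mathbb{E}_{b_2\mid S^k,b_1}[p_{\bm\theta}(b_1)f(b_1)+(1-p_{\bm\theta}(b_1))f(b_2)]$. Undoing the conditioning on $S^k$ yields $\mathbb{E}_{B^k}[\nabla_{\bm\theta}\log p_{\bm\theta}(b_1)\,(p_{\bm\theta}(b_1)f(b_1)+(1-p_{\bm\theta}(b_1))f(b_2))]$, i.e.\ the score of $b_1$ times the stochastic sum-and-sample estimator in $(b_1,b_2)$, whose conditional expectation given $b_1$ is $\mathbb{E}[f(x)]$; the score-function identity then gives zero. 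Your argument bypasses this probabilistic interpretation entirely: you telescope $R(S^k,s)R^{D\setminus\{s\}}(S^k,s')$ to $p^{D\setminus\{s,s'\}}(S^k\setminus\{s,s'\})/p(S^k)$, swap the sums over $S^k$ and $(s,s')$, and reduce the inner sum to a total-mass normalization of the restricted without-replacement distribution. This is more elementary and self-contained --- it needs none of the $b_1,b_2$ machinery or the sum-and-sample lemma --- whereas the paper's route has the conceptual payoff of exhibiting the control variate explicitly as a Rao-Blackwellized baseline. Your treatment of the diagonal $s'=s$ is also the same as the paper's (which states $R^{D\setminus\{s\}}(S^k,s)=1$ ``by definition'').
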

\begin{proof}

Similar to \eqref{eq:P_b_1_cond_S_k}, we apply Bayes' Theorem conditionally on $b_1 = s$ to derive for $s' \neq s$
\begin{align}
    P(b_2 = s'|S^k, b_1 = s) &= \frac{P(S^k|b_2 = s', b_1 = s)P(b_2 = s' | b_1 = s')}{P(S^k | b_1 = s)} \notag \\
    &= \frac{p_{\bm{\theta}}^{D \setminus \{s, s'\}}(S^k \setminus \{s, s'\}) p_{\bm{\theta}}^{D \setminus \{s\}}(s')}{p_{\bm{\theta}}^{D \setminus \{s\}}(S^k \setminus \{s\})} \notag \\
    &= \frac{p_{\bm{\theta}}(s')}{1 - p_{\bm{\theta}}(s)} R^{D \setminus \{s\}}(S^k, s'). \label{eq:P_b_2_cond_S_k_b_1}
\end{align}
For $s' = s$ we have $R^{D \setminus \{s\}}(S^k, s') = 1$ by definition, so using \eqref{eq:P_b_2_cond_S_k_b_1} we can show that
\begin{align*}
    &\hphantom{=} \sum_{s' \in S^k} p_{\bm{\theta}}(s') R^{D \setminus \{s\}}(S^k, s') f(s') \\
    &= p_{\bm{\theta}}(s)f(s) + \sum_{s' \in S^k \setminus \{s\}} p_{\bm{\theta}}(s') R^{D \setminus \{s\}}(S^k, s') f(s') \\
    &= p_{\bm{\theta}}(s)f(s) + (1 - p_{\bm{\theta}}(s)) \sum_{s' \in S^k \setminus \{s\}} \frac{p_{\bm{\theta}}(s')}{1 - p_{\bm{\theta}}(s)} R^{D \setminus \{s\}}(S^k, s') f(s') \\
    &= p_{\bm{\theta}}(s)f(s) + (1 - p_{\bm{\theta}}(s)) \sum_{s' \in S^k \setminus \{s\}} P(b_2 = s' | S^k, b_1 = s) f(s') \\
    &= p_{\bm{\theta}}(s)f(s) + (1 - p_{\bm{\theta}}(s)) \mathbb{E}_{b_2 \sim p_{\bm{\theta}}(b_2|S^k,b_1 = s)} \left[ f(b_2) \right] \\
    &= \mathbb{E}_{b_2 \sim p_{\bm{\theta}}(b_2|S^k,b_1 = s)} \left[ p_{\bm{\theta}}(b_1)f(b_1) + (1 - p_{\bm{\theta}}(b_1))  f(b_2) \right].
\end{align*}

Now we can show that the control variate is actually the result of Rao-Blackwellization:
\begin{align*}
    &\hphantom{=} \mathbb{E}_{S^k \sim p_{\bm{\theta}}(S^k)} \left[ \sum_{s \in S^k} \nabla_{\bm{\theta}} p_{\bm{\theta}}(s) R(S^k, s) \sum_{s' \in S^k} p_{\bm{\theta}}(s') R^{D \setminus \{s\}}(S^k, s') f(s') \right] \\
    &= \mathbb{E}_{S^k \sim p_{\bm{\theta}}(S^k)} \left[ \sum_{s \in S^k} p_{\bm{\theta}}(s) R(S^k, s) \nabla_{\bm{\theta}} \log p_{\bm{\theta}}(s) \sum_{s' \in S^k} p_{\bm{\theta}}(s') R^{D \setminus \{s\}}(S^k, s') f(s') \right] \\
    &= \mathbb{E}_{S^k \sim p_{\bm{\theta}}(S^k)} \left[ \sum_{s \in S^k} P(b_1 = s | S^k) \nabla_{\bm{\theta}} \log p_{\bm{\theta}}(s) \sum_{s' \in S^k} p_{\bm{\theta}}(s') R^{D \setminus \{s\}}(S^k, s') f(s') \right] \\
    &= \mathbb{E}_{S^k \sim p_{\bm{\theta}}(S^k)} \left[ \mathbb{E}_{b_1 \sim p_{\bm{\theta}}(b_1|S^k)} \left[ \nabla_{\bm{\theta}} \log p_{\bm{\theta}}(b_1) \sum_{s' \in S^k} p_{\bm{\theta}}(s') R^{D \setminus \{b_1\}}(S^k, s') f(s') \right] \right] \\
    &= \mathbb{E}_{S^k \sim p_{\bm{\theta}}(S^k)} \left[ \mathbb{E}_{b_1 \sim p_{\bm{\theta}}(b_1|S^k)} \left[ \nabla_{\bm{\theta}} \log p_{\bm{\theta}}(b_1) \mathbb{E}_{b_2 \sim p_{\bm{\theta}}(b_2|S^k,b_1)} \left[ p_{\bm{\theta}}(b_1)f(b_1) + (1 - p_{\bm{\theta}}(b_1)) f(b_2) \right] \right] \right] \\
    &= \mathbb{E}_{S^k \sim p_{\bm{\theta}}(S^k)} \left[ \mathbb{E}_{B^k \sim p_{\bm{\theta}}(B^k|S^k)} \left[ \nabla_{\bm{\theta}} \log p_{\bm{\theta}}(b_1) \left( p_{\bm{\theta}}(b_1)f(b_1) + (1 - p_{\bm{\theta}}(b_1)) f(b_2) \right) \right] \right] \\
    &= \mathbb{E}_{B^k \sim p_{\bm{\theta}}(B^k)} \left[ \nabla_{\bm{\theta}} \log p_{\bm{\theta}}(b_1) \left( p_{\bm{\theta}}(b_1)f(b_1) + (1 - p_{\bm{\theta}}(b_1)) f(b_2) \right) \right]
\end{align*}

This expression depends only on $b_1$ and $b_2$ and we recognize the stochastic sum-and-sample estimator for $k = 2$ used as `baseline'. As a special case of \eqref{eq:sas_unbiased} for $C = \{b_1\}$, we have
\begin{equation} 
\mathbb{E}_{b_2 \sim p_{\bm{\theta}}(b_2|b_1)} \left[ \left( p_{\bm{\theta}}(b_1)f(b_1) + (1 - p_{\bm{\theta}}(b_1)) f(b_2) \right) \right] = \mathbb{E}_{i \sim p_{\bm{\theta}}(i)} \left[ f(i) \right].
\end{equation}
Using this, and the fact that $\mathbb{E}_{b_1 \sim p_{\bm{\theta}}(b_1)} \left[ \nabla_{\bm{\theta}} \log p_{\bm{\theta}}(b_1) \right] = \nabla_{\bm{\theta}} \mathbb{E}_{b_1 \sim p_{\bm{\theta}}(b_1)} \left[ 1 \right] = \nabla_{\bm{\theta}} 1 = 0$ we find

\begin{align*}
    &\hphantom{=} \mathbb{E}_{S^k \sim p_{\bm{\theta}}(S^k)} \left[ \sum_{s \in S^k} \nabla_{\bm{\theta}} p_{\bm{\theta}}(s) R(S^k, s) \sum_{s' \in S^k} p_{\bm{\theta}}(s') R^{D \setminus \{s\}}(S^k, s') f(s') \right] \\
    &= \mathbb{E}_{B^k \sim p_{\bm{\theta}}(B^k)} \left[ \nabla_{\bm{\theta}} \log p_{\bm{\theta}}(b_1) \left( p_{\bm{\theta}}(b_1)f(b_1) + (1 - p_{\bm{\theta}}(b_1)) f(b_2) \right) \right] \\
    &= \mathbb{E}_{b_1 \sim p_{\bm{\theta}}(b_1)} \left[ \nabla_{\bm{\theta}} \log p_{\bm{\theta}}(b_1) \mathbb{E}_{b_2 \sim p_{\bm{\theta}}(b_2|b_1)} \left[ \left( p_{\bm{\theta}}(b_1)f(b_1) + (1 - p_{\bm{\theta}}(b_1)) f(b_2) \right) \right] \right] \\
    &= \mathbb{E}_{b_1 \sim p_{\bm{\theta}}(b_1)} \left[ \nabla_{\bm{\theta}} \log p_{\bm{\theta}}(b_1) \mathbb{E}_{x \sim p_{\bm{\theta}}(x)} \left[ f(x) \right] \right] \\
    &= \mathbb{E}_{b_1 \sim p_{\bm{\theta}}(b_1)} \left[ \nabla_{\bm{\theta}} \log p_{\bm{\theta}}(b_1) \right] \mathbb{E}_{x \sim p_{\bm{\theta}}(x)} \left[ f(x) \right] \\
    &= 0 \cdot \mathbb{E}_{x \sim p_{\bm{\theta}}(x)} \left[ f(x) \right] \\
    &= 0
\end{align*}

\end{proof}

\clearpage
\section{The RISK estimator}

\subsection{Proof of built-in baseline}
\label{app:proof_risk_baseline}
We show that the RISK estimator, taking gradients through the normalization factor actually has a built-in baseline. We first use the log-derivative trick to rewrite the gradient of the ratio as the ratio times the logarithm of the gradient, and then swap the summation variables in the double sum that arises:
\begin{align*}
    e^{\text{RISK}}(S) &= \sum_{s \in S} \nabla_{\bm{\theta}} \left(\frac{p_{\bm{\theta}}(s)}{\sum_{s' \in S} p_{\bm{\theta}}(s')}\right) f(s)  \\
    &= \sum_{s \in S}  \frac{p_{\bm{\theta}}(s)}{\sum_{s' \in S} p_{\bm{\theta}}(s')} \nabla_{\bm{\theta}} \log \left(\frac{p_{\bm{\theta}}(s)}{\sum_{s' \in S} p_{\bm{\theta}}(s')}\right) f(s) \\
    &= \sum_{s \in S}  \frac{p_{\bm{\theta}}(s)}{\sum_{s' \in S} p_{\bm{\theta}}(s')} \left( \nabla_{\bm{\theta}} \log p_{\bm{\theta}}(s) - \nabla_{\bm{\theta}} \log \sum_{s' \in S} p_{\bm{\theta}}(s')\right) f(s)  \\
    &= \sum_{s \in S}  \frac{p_{\bm{\theta}}(s)}{\sum_{s' \in S} p_{\bm{\theta}}(s')} \left( \frac{\nabla_{\bm{\theta}} p_{\bm{\theta}}(s)}{p_{\bm{\theta}}(s)} - \frac{\sum_{s' \in S}\nabla_{\bm{\theta}} p_{\bm{\theta}}(s')}{\sum_{s' \in S} p_{\bm{\theta}}(s')}\right) f(s) \\
    &= \sum_{s \in S}  \frac{\nabla_{\bm{\theta}} p_{\bm{\theta}}(s) f(s)}{\sum_{s' \in S} p_{\bm{\theta}}(s')}  - \frac{\sum_{s, s' \in S} p_{\bm{\theta}}(s) \nabla_{\bm{\theta}}  p_{\bm{\theta}}(s') f(s)}{\left(\sum_{s' \in S} p_{\bm{\theta}}(s')\right)^2} \\
    &= \sum_{s \in S}  \frac{\nabla_{\bm{\theta}} p_{\bm{\theta}}(s) f(s)}{\sum_{s' \in S} p_{\bm{\theta}}(s')}  - \frac{\sum_{s, s' \in S} p_{\bm{\theta}}(s') \nabla_{\bm{\theta}}  p_{\bm{\theta}}(s) f(s')}{\left(\sum_{s' \in S} p_{\bm{\theta}}(s')\right)^2} \\
    &= \sum_{s \in S}  \frac{\nabla_{\bm{\theta}} p_{\bm{\theta}}(s)}{\sum_{s' \in S} p_{\bm{\theta}}(s')} \left(f(s)  - \frac{\sum_{s' \in S} p_{\bm{\theta}}(s')  f(s')}{\sum_{s' \in S} p_{\bm{\theta}}(s')} \right) \\
    &= \sum_{s \in S}  \frac{\nabla_{\bm{\theta}} p_{\bm{\theta}}(s)}{\sum_{s'' \in S} p_{\bm{\theta}}(s'')} \left(f(s) - \sum_{s' \in S} \frac{ p_{\bm{\theta}}(s')}{\sum_{s'' \in S} p_{\bm{\theta}}(s'')} f(s') \right).
\end{align*}

\section{Categorical Variational Auto-Encoder}
\subsection{Experimental details}
\label{app:categorical_vae_details}
We use the code\footnote{\url{https://github.com/ARM-gradient/ARSM}} by \citet{yin2019arsm} to reproduce their categorical VAE experiment, of which we include details here for self-containment. The dataset is MNIST, statically binarized by thresholding at $0.5$ (although we include results using the standard binarized dataset by \citet{salakhutdinov2008quantitative, larochelle2011neural} in Section \ref{app:categorical_vae_extra_results}). The latent representation $\bm{z}$ is $K = 20$ dimensional with $C = 10$ categories per dimension with a uniform prior $p(z_k = c) = 1 / C, k = 1, ..., K$. The encoder is parameterized by $\bm{\phi}$ as $q_{\bm{\phi}}(\bm{z}|\bm{x}) = \prod_{k} q_{\bm{\phi}}(z_k|\bm{x})$ and has two fully connected hidden layers with 512 and 256 hidden nodes respectively, with LeakyReLU ($\alpha = 0.1$) activations. The decoder, parameterized by $\bm{\theta}$, is given by $p_{\bm{\theta}}(\bm{x}|\bm{z}) = \prod_i p_{\bm{\theta}}(x_i|\bm{z})$, where $x_i \in \{0, 1\}$ are the pixel values, and has fully connected hidden layers with 256 and 512 nodes and LeakyReLU activation.

\paragraph{ELBO optimization.}
The evidence lower bound (ELBO) that we optimize is given by
\begin{align}
    \mathcal{L}(\bm{\phi}, \bm{\theta}) &= \mathbb{E}_{\bm{z} \sim q_{\bm{\phi}}(\bm{z}|\bm{x})} \left[ \ln p_{\bm{\theta}}(\bm{x}|\bm{z}) + \ln p(\bm{z}) - \ln q_{\bm{\phi}}(\bm{z}|\bm{x}) \right] \label{eq:elbo_sample_kl} \\
    &= \mathbb{E}_{\bm{z} \sim q_{\bm{\phi}}(\bm{z}|\bm{x})} \left[ \ln p_{\bm{\theta}}(\bm{x}|\bm{z}) \right] - KL(q_{\bm{\phi}}\left(\bm{z}|\bm{x})||p(\bm{z})\right)  \label{eq:elbo_analytic_kl}.
\end{align}
For the decoder parameters $\bm{\theta}$, since $q_{\bm{\phi}}(\bm{z}|\bm{x})$ does not depend on $\bm{\theta}$, it follows that
\begin{equation}
    \nabla_{\bm{\theta}} \mathcal{L}(\bm{\phi}, \bm{\theta}) = \mathbb{E}_{\bm{z} \sim q_{\bm{\phi}}(\bm{z}|\bm{x})} \left[ \nabla_{\bm{\theta}} \ln p_{\bm{\theta}}(\bm{x}|\bm{z}) \right].
\end{equation}
For the encoder parameters $\bm{\phi}$, we can write $\nabla_{\bm{\phi}} \mathcal{L}(\bm{\phi}, \bm{\theta})$ using \eqref{eq:elbo_analytic_kl} and \eqref{eq:reinforce_pathwise_identity} as
\begin{equation}
\label{eq:elbo_grad_analytic_kl}
    \nabla_{\bm{\phi}} \mathcal{L}(\bm{\phi}, \bm{\theta}) = \mathbb{E}_{\bm{z} \sim q_{\bm{\phi}}(\bm{z}|\bm{x})} \left[ \nabla_{\bm{\phi}} \ln q_{\bm{\phi}}(\bm{z}|\bm{x}) \ln p_{\bm{\theta}}(\bm{x}|\bm{z}) \right] - \nabla_{\bm{\phi}} KL(q_{\bm{\phi}}\left(\bm{z}|\bm{x})||p(\bm{z})\right).
\end{equation}
This assumes we can compute the KL divergence analytically. Alternatively, we can use a sample estimate for the KL divergence, and use \eqref{eq:elbo_sample_kl} with \eqref{eq:reinforce_pathwise_identity} to obtain
\begin{align}
    \nabla_{\bm{\phi}} \mathcal{L}(\bm{\phi}, \bm{\theta}) &= \mathbb{E}_{\bm{z} \sim q_{\bm{\phi}}(\bm{z}|\bm{x})} \left[ \nabla_{\bm{\phi}} \ln q_{\bm{\phi}}(\bm{z}|\bm{x}) ( \ln p_{\bm{\theta}}(\bm{x}|\bm{z}) + \ln p(\bm{z}) - \ln q_{\bm{\phi}}(\bm{z}|\bm{x}) ) + \nabla_{\bm{\phi}} \ln q_{\bm{\phi}}(\bm{z}|\bm{x}) \right] \label{eq:elbo_grad_sample_kl} \\
    &= \mathbb{E}_{\bm{z} \sim q_{\bm{\phi}}(\bm{z}|\bm{x})} \left[ \nabla_{\bm{\phi}} \ln q_{\bm{\phi}}(\bm{z}|\bm{x}) ( \ln p_{\bm{\theta}}(\bm{x}|\bm{z}) - \ln q_{\bm{\phi}}(\bm{z}|\bm{x}) ) \right] \label{eq:elbo_grad_sample_kl_minimal}.
\end{align}
Here we have left out the term $\mathbb{E}_{\bm{z} \sim q_{\bm{\phi}}(\bm{z}|\bm{x})} \left[ \nabla_{\bm{\phi}} \ln q_{\bm{\phi}}(\bm{z}|\bm{x}) \right] = 0$, similar to \citet{roeder2017sticking}, and, assuming a uniform (i.e.\ constant) prior $\ln p(\bm{z})$, the term $\mathbb{E}_{\bm{z} \sim q_{\bm{\phi}}(\bm{z}|\bm{x})} \left[ \nabla_{\bm{\phi}} \ln q_{\bm{\phi}}(\bm{z}|\bm{x}) \ln p(\bm{z}) \right] = 0$. With a built-in baseline, this second term cancels out automatically, even if it is implemented.
Despite the similarity of the \eqref{eq:elbo_sample_kl} and \eqref{eq:elbo_analytic_kl}, their gradient estimates (\eqref{eq:elbo_grad_sample_kl} and \eqref{eq:elbo_grad_analytic_kl}) are structurally dissimilar and care should be taken to implement the REINFORCE estimator (or related estimators such as ARSM and the unordered set estimator) correctly using automatic differentiation software. Using Gumbel-Softmax and RELAX, we take gradients `directly' through the objective in \eqref{eq:elbo_analytic_kl}.

We optimize the ELBO using the analytic KL for 1000 epochs using the Adam \citep{kingma2015adam} optimizer. We use a learning rate of $10^{-3}$ for all estimators except Gumbel-Softmax and RELAX, which use a learning rate of $10^{-4}$ as we found they diverged with a higher learning rate. For ARSM, as an exception we use the sample KL, and a learning rate of $3 \cdot 10^{-4}$, as suggested by the authors. All reported ELBO values are computed using the analytic KL. Our code is publicly available\footnote{\url{https://github.com/wouterkool/estimating-gradients-without-replacement}}.

\subsection{Additional results}
\label{app:categorical_vae_extra_results}
\paragraph{Gradient variance during training.}
We also evaluate gradient variance of different estimators during different stages of training. We measure the variance of different estimators with $k = 4$ samples during training with REINFORCE with replacement, such that all estimators are computed for the same model parameters. The results during training, given in Figure \ref{fig:vae_grads}, are similar to the results for the trained model in Table \ref{tab:vae_grads}, except for at the beginning of training, although the rankings of different estimator are mostly the same.

\begin{figure}[b]
    \centering
    \begin{subfigure}[b]{0.48\textwidth}
        \includegraphics[width=\textwidth]{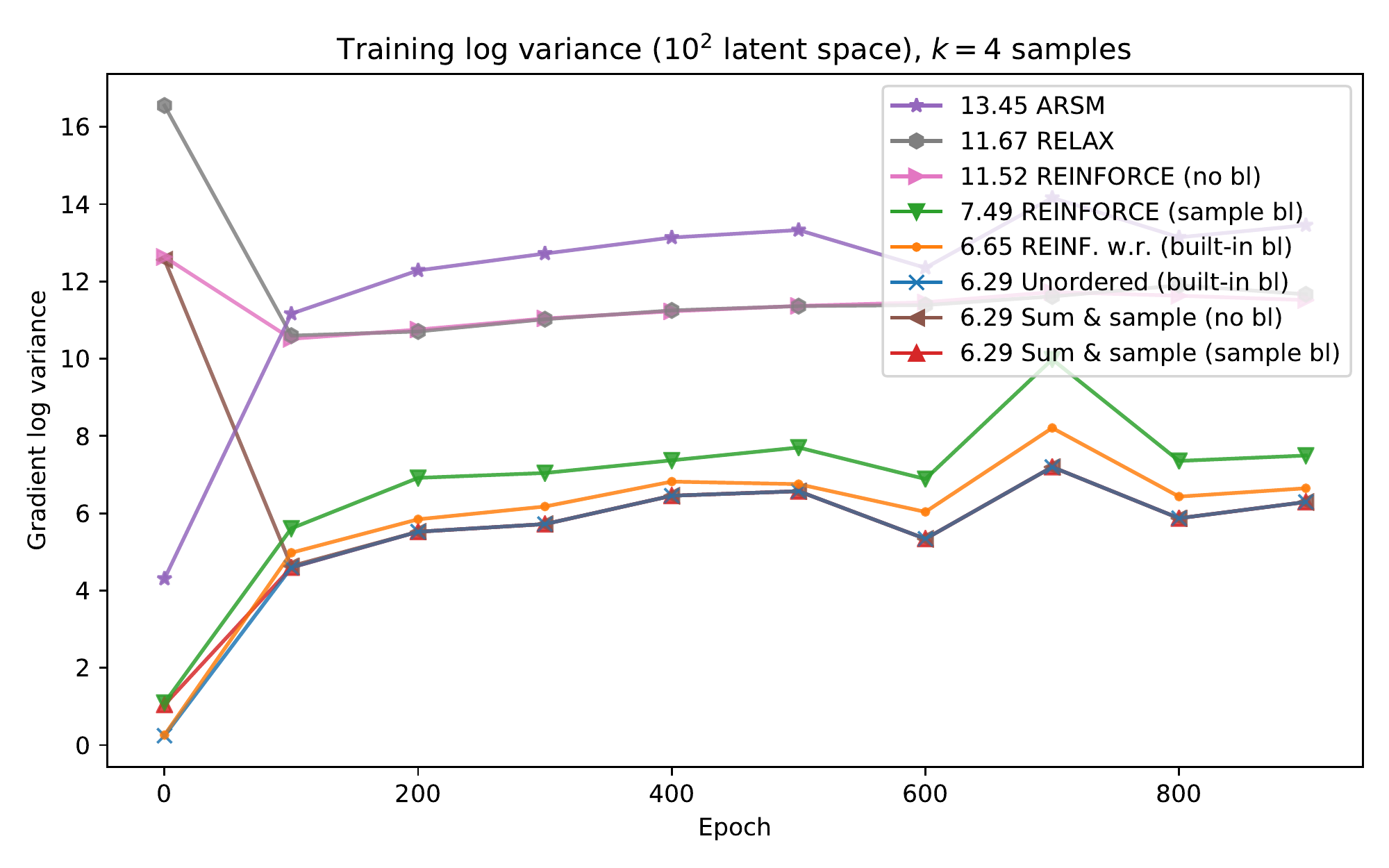}
        \vskip -0.2cm
        \caption{Small domain (latent space size $10^2$)}
    \end{subfigure}
    \begin{subfigure}[b]{0.48\textwidth}
        \includegraphics[width=\textwidth]{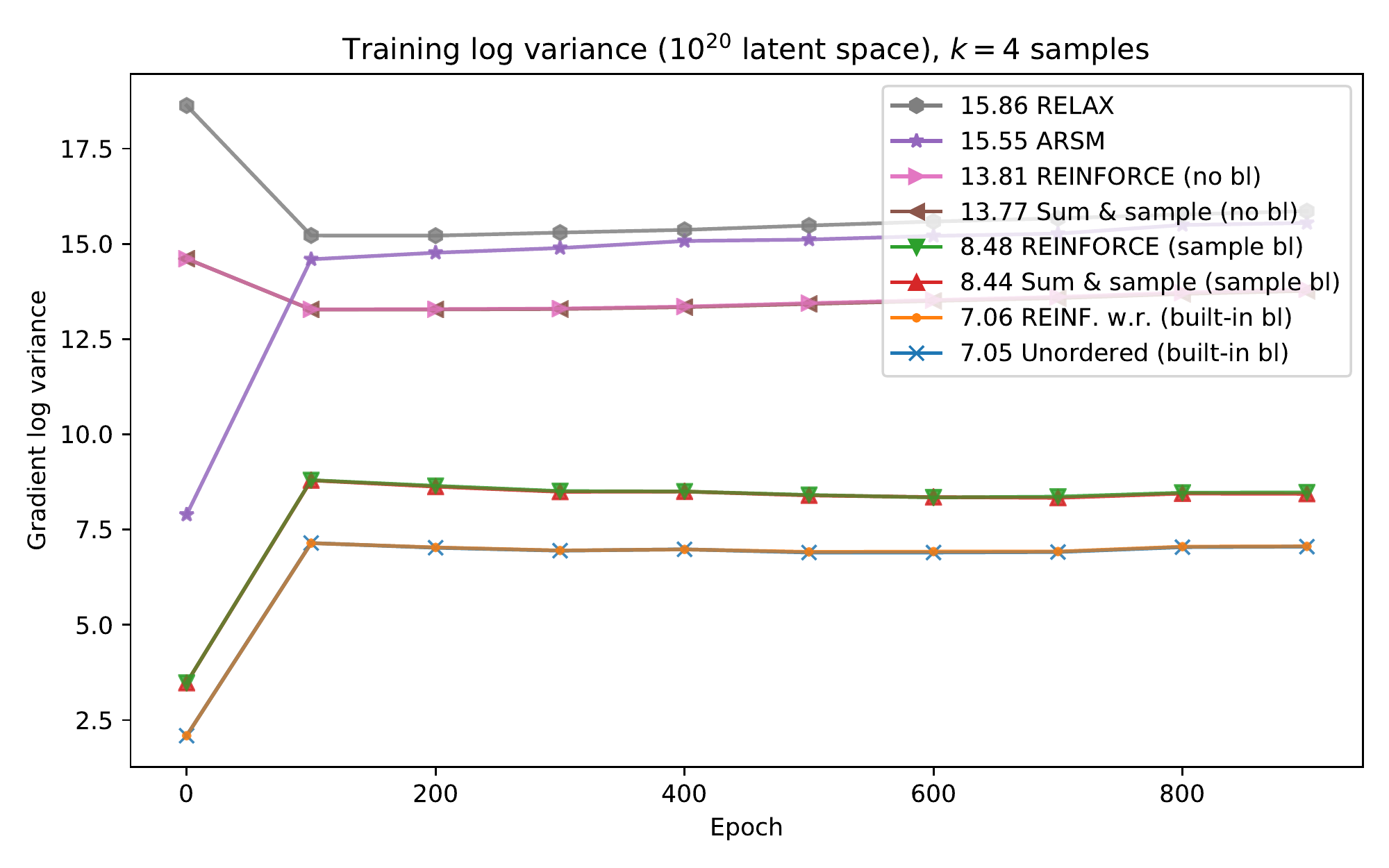}
        \vskip -0.2cm
        \caption{Large domain (latent space size $10^{20}$)}
    \end{subfigure}
        \caption{Gradient log variance of different unbiased estimators with $k = 4$ samples, estimated every 100 (out of 1000) epochs while training using REINFORCE with replacement. Each estimator is computed 1000 times with different latent samples for a fixed minibatch (the first 100 records of training data). We report (the logarithm of) the sum of the variances per parameter (trace of the covariance matrix). Some lines coincide, so we sort the legend by the last measurement and report its value. }
        \label{fig:vae_grads}
\end{figure}

\begin{figure}
    \centering
    \begin{subfigure}[b]{0.48\textwidth}
        \includegraphics[width=\textwidth]{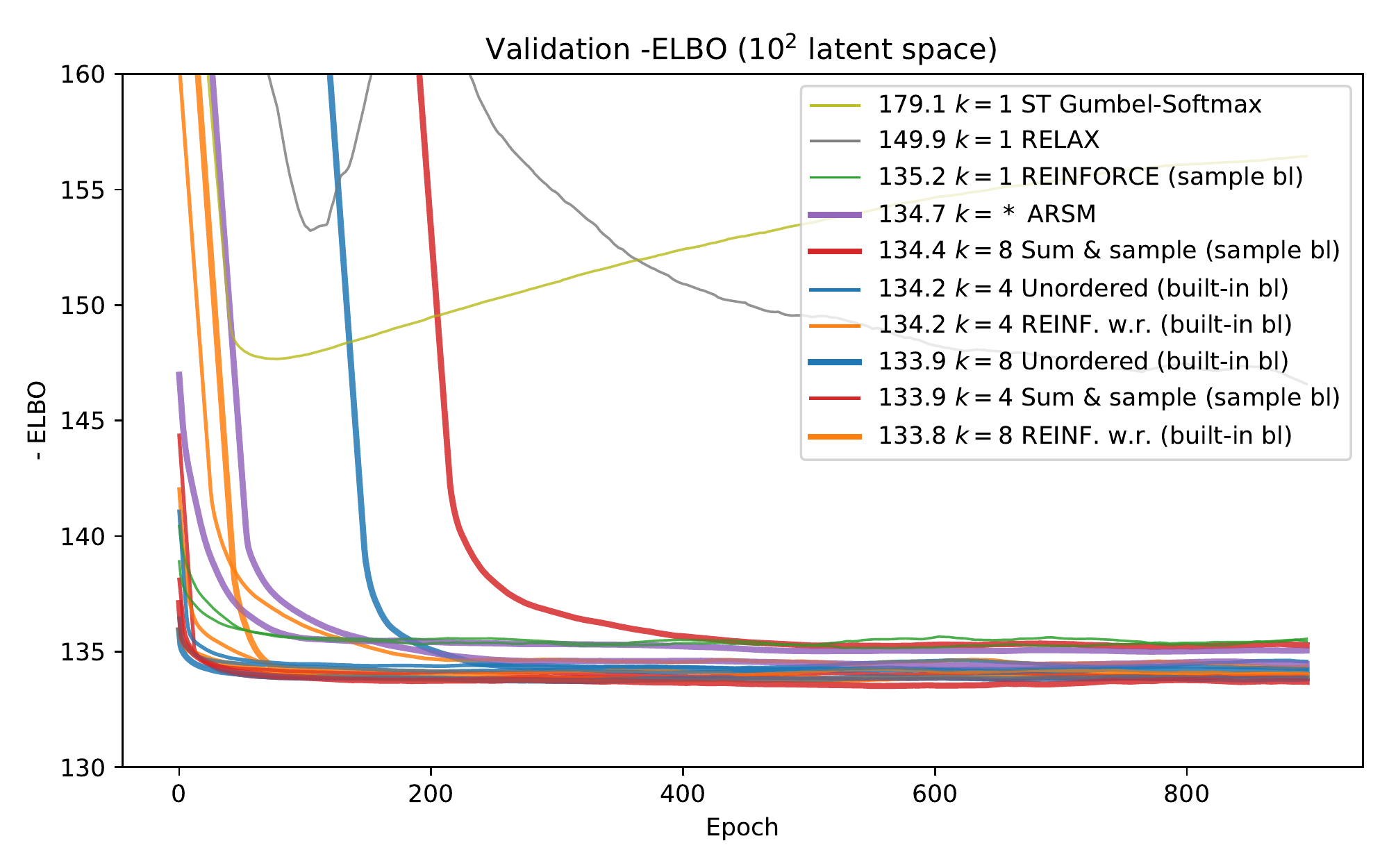}
        \caption{Small domain (latent space size $10^2$)}
    \end{subfigure}
    \begin{subfigure}[b]{0.48\textwidth}
        \includegraphics[width=\textwidth]{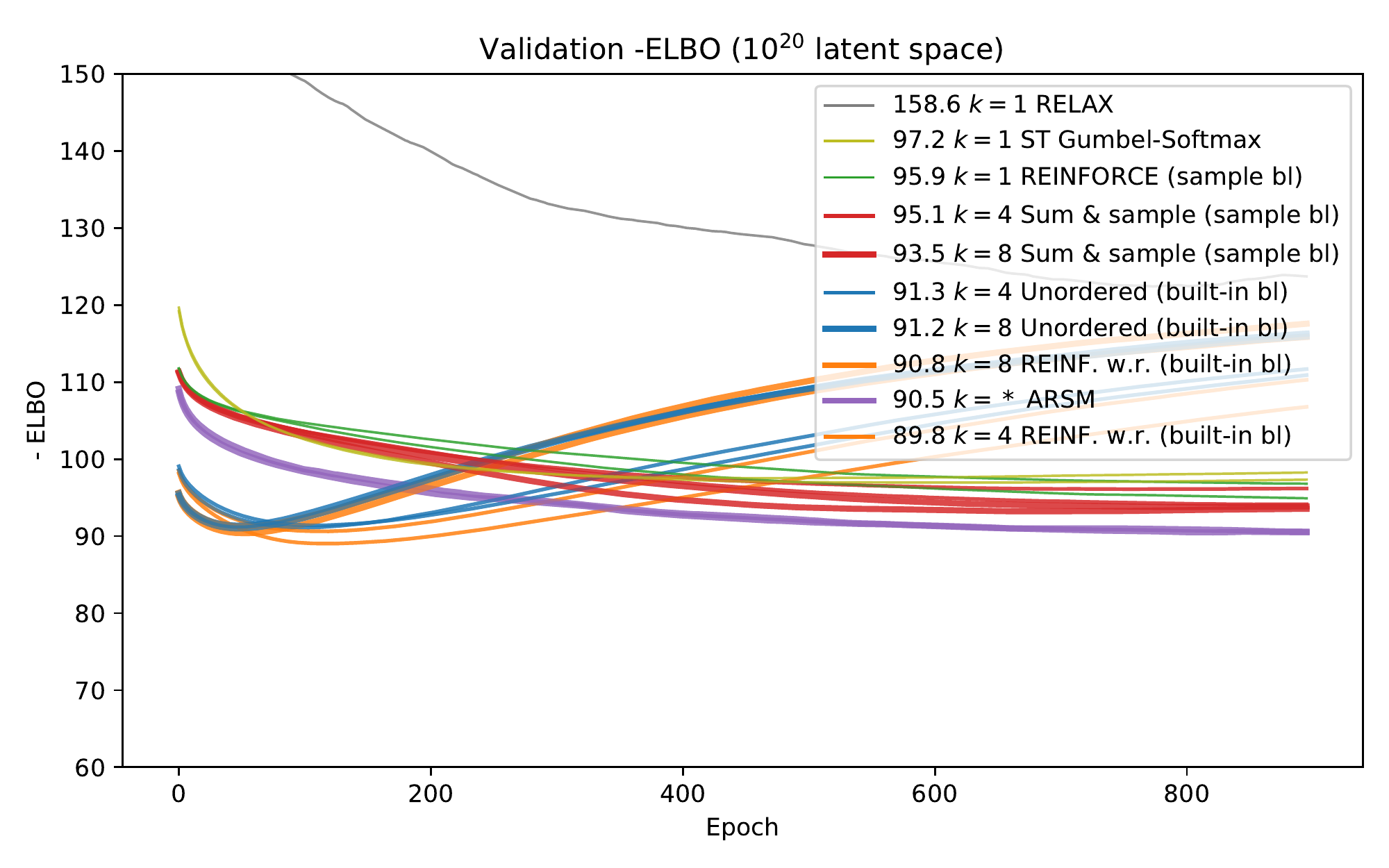}
        \caption{Large domain (latent space size $10^{20}$)}
    \end{subfigure}
        \caption{Smoothed validation -ELBO curves during training of two independent runs when with different estimators with $k =1$, 4 or 8 (thicker lines) samples (ARSM has a variable number). Some lines coincide, so we sort the legend by the lowest -ELBO achieved and report this value.}
        \label{fig:vae_valid_elbo}
\end{figure}
\paragraph{Negative ELBO on validation set.}
Figure \ref{fig:vae_valid_elbo} shows the -ELBO evaluated during training on the validation set. For the large latent space, we see validation error quickly increase (after reaching a minimum) which is likely because of overfitting (due to improved optimization), a phenomenon observed before \citep{tucker2017rebar, grathwohl2017backpropagation}. Note that before the overfitting starts, both REINFORCE without replacement and the unordered set estimator achieve a validation error similar to the other estimators, such that in a practical setting, one can use early stopping.

\paragraph{Results using standard binarized MNIST dataset.}
Instead of using the MNIST dataset binarized by thresholding values at 0.5 (as in the code and paper by \citet{yin2019arsm}) we also experiment with the standard (fixed) binarized dataset by \citet{salakhutdinov2008quantitative,larochelle2011neural}, for which we plot train and validation curves for two runs on the small and large domain in Figure \ref{fig:vae_standard_mnist}. This gives more realistic (higher) -ELBO scores, although we still observe the effect of overfitting. As this is a bit more unstable setting, one of the runs using REINFORCE with replacement diverged, but in general the relative performance of estimators is similar to using the dataset with 0.5 threshold.
\begin{figure}
    \centering
    \begin{subfigure}[b]{0.48\textwidth}
        \includegraphics[width=\textwidth]{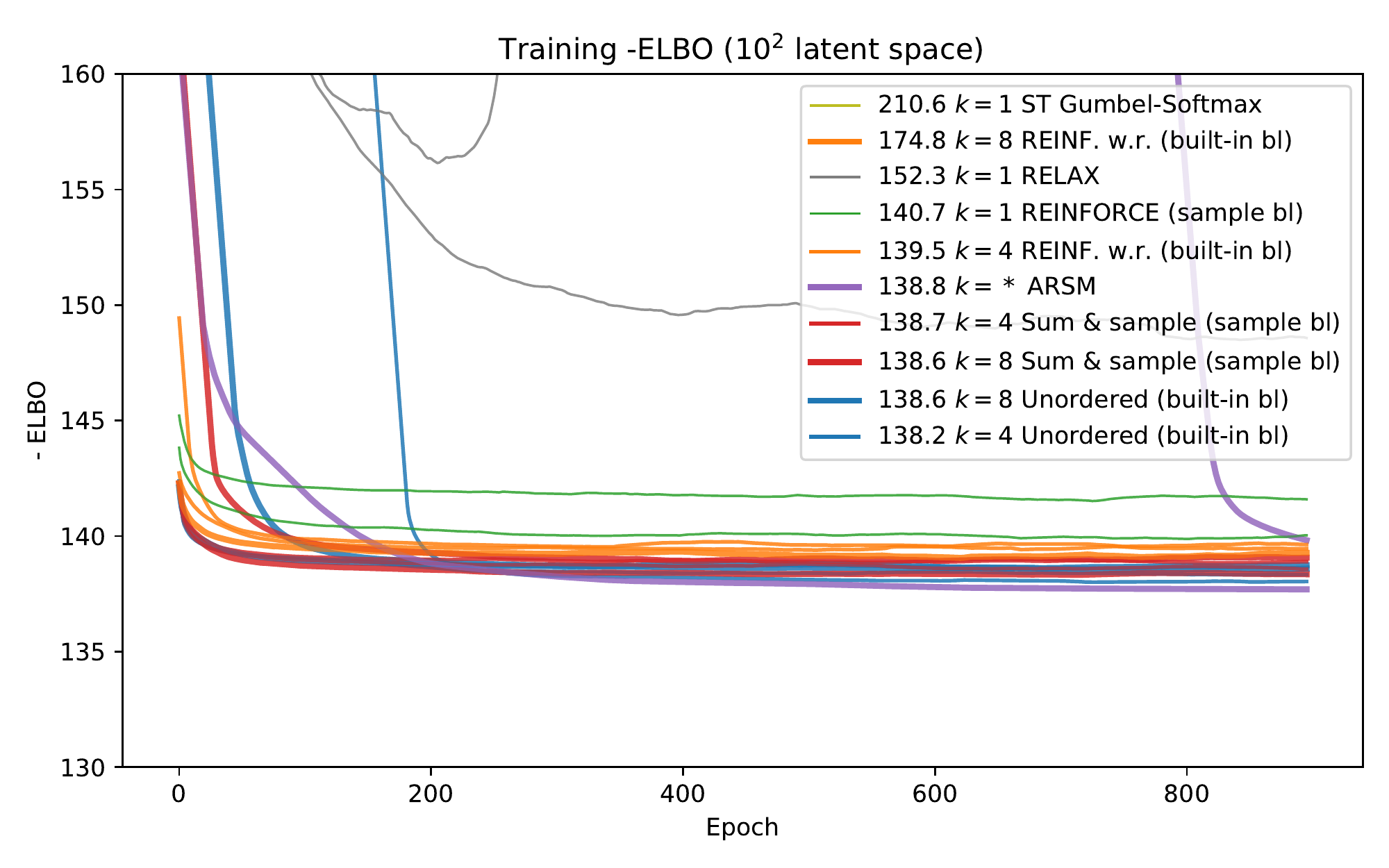}
        \caption{Training -ELBO, small domain ($10^2$)}
    \end{subfigure}
    \begin{subfigure}[b]{0.48\textwidth}
        \includegraphics[width=\textwidth]{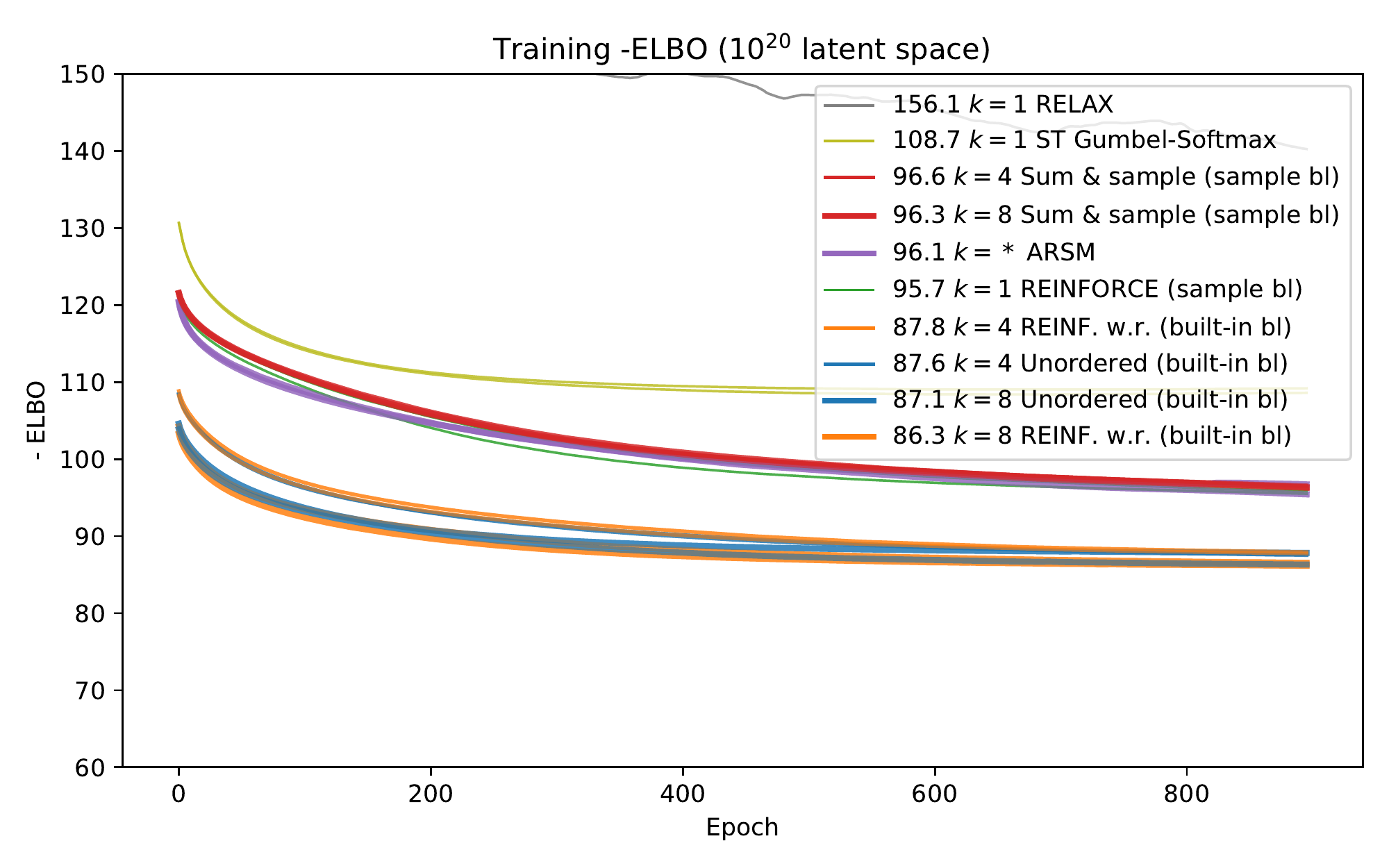}
        \caption{Training -ELBO, large domain ($10^{20}$)}
    \end{subfigure}
    
    \begin{subfigure}[b]{0.48\textwidth}
        \includegraphics[width=\textwidth]{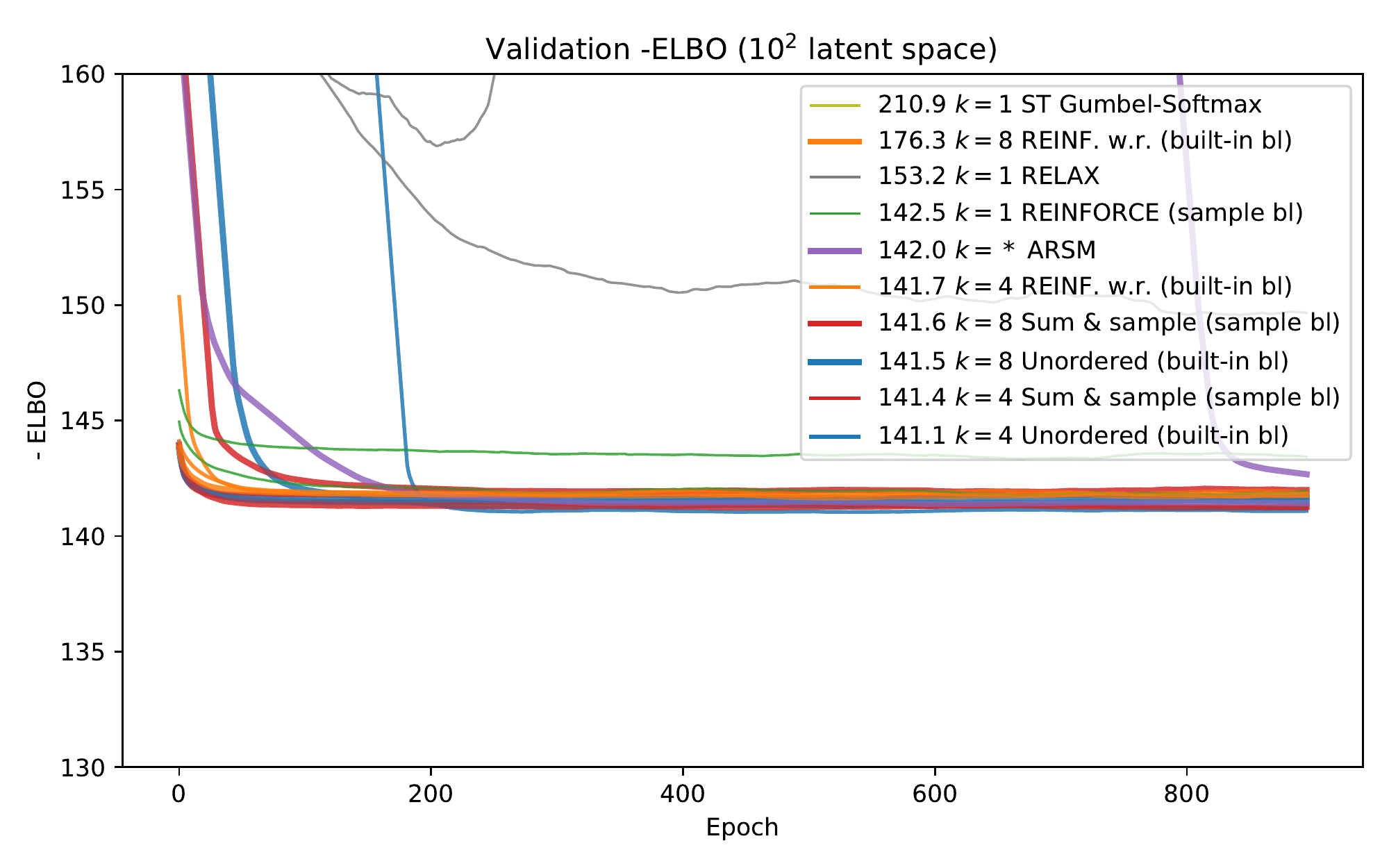}
        \caption{Validation -ELBO, small domain ($10^2$)}
    \end{subfigure}
    \begin{subfigure}[b]{0.48\textwidth}
        \includegraphics[width=\textwidth]{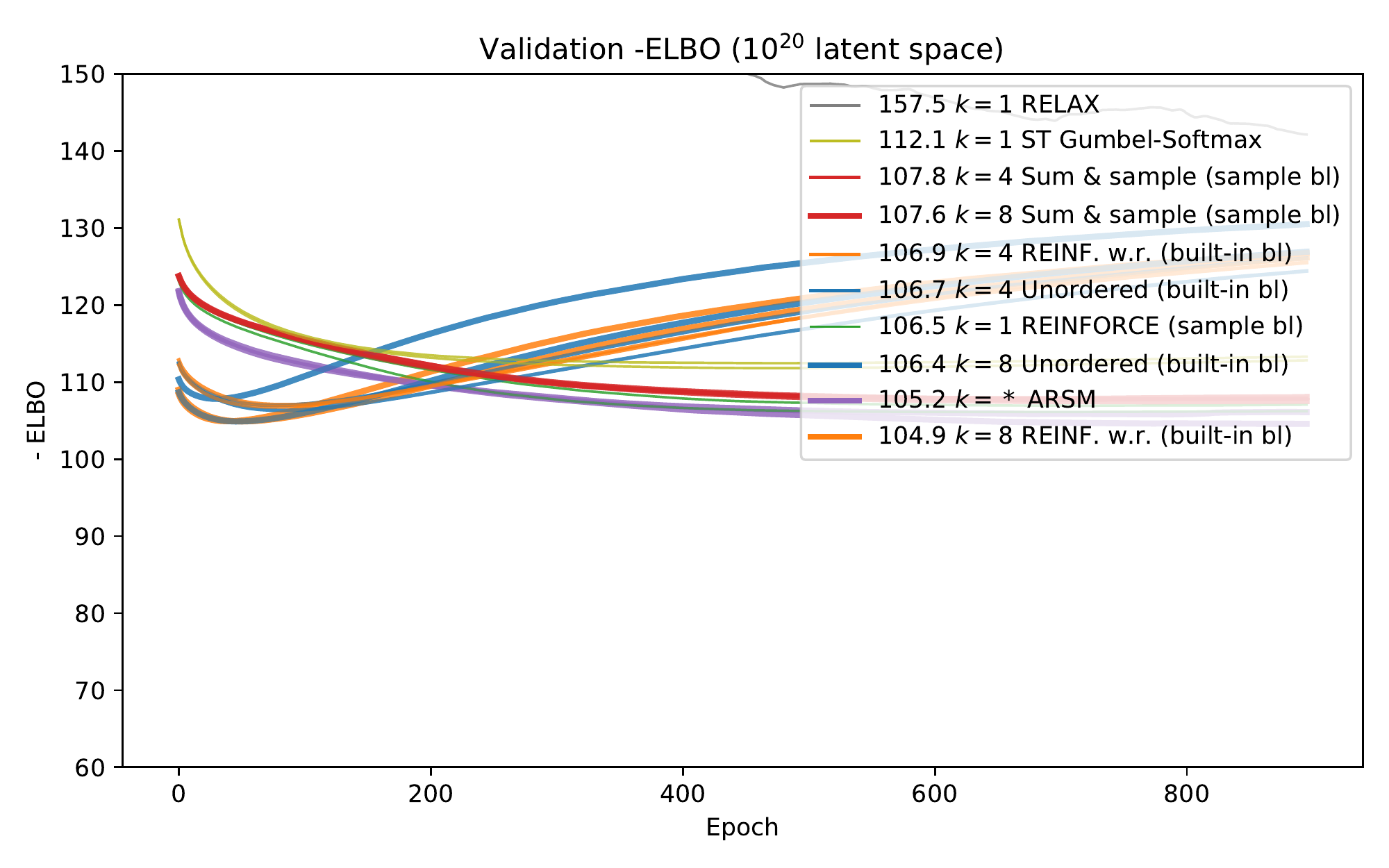}
        \caption{Validation -ELBO, large domain ($10^{20}$)}
    \end{subfigure}
        \caption{Smoothed training and validation -ELBO curves during training on the standard binarized MNIST dataset \citep{salakhutdinov2008quantitative, larochelle2011neural} of two independent runs when with different estimators with $k =1$, 4 or 8 (thicker lines) samples (ARSM has a variable number). Some lines coincide, so we sort the legend by the lowest -ELBO achieved and report this value.}
        \label{fig:vae_standard_mnist}
\end{figure}

\clearpage
\section{Travelling Salesman Problem}
\label{app:tsp_details}
The Travelling Salesman Problem (TSP) is a discrete optimization problem that consists of finding the order in which to visit a set of locations, given as $x,y$ coordinates, to minimize the total length of the tour, starting and ending at the same location. As a tour can be considered a sequence of locations, this problem can be set up as a sequence modelling problem, that can be either addressed using supervised \citep{vinyals2015pointer} or reinforcement learning \citep{bello2016neural,kool2018attention}.

\citet{kool2018attention} introduced the Attention Model, which is an encoder-decoder model which considers a TSP instances as a fully connected graph. The encoder computes embeddings for all nodes (locations) and the decoder produces a tour, which is sequence of nodes, selecting one note at the time using an attention mechanism, and uses this autoregressively as input to select the next node. In \citet{kool2018attention}, this model is trained using REINFORCE, with a greedy rollout used as baseline to reduce variance.

We use the code by \citet{kool2018attention} to train the exact same Attention Model (for details we refer to \citet{kool2018attention}), and minimize the expected length of a tour predicted by the model, using different gradient estimators. We did not do any hyperparameter optimization and used the exact same training details, using the Adam optimizer \citep{kingma2015adam} with a learning rate of $10^{-4}$ (no decay) for 100 epochs for all estimators. For the baselines, we used the same batch size of 512, but for estimators that use $k = 4$ samples, we used a batch size of $\frac{512}{4} = 128$ to compensate for the additional samples (this makes multi-sample methods actually faster since the encoder still needs to be evaluated only once).

\end{document}